\newcommand{\cmark}{\ding{51}}%
\newcommand{\xmark}{\ding{55}}%
\newtheorem{Theorem}{Theorem}
\newtheorem{Lemma}{Lemma}
\newtheorem{Proposition}{Proposition}
\newtheorem{Definition}{Definition}
\def\meanstd#1#2{{#1}{\scriptsize $\pm$#2}}
\def\meanstdbf#1#2{\textbf{#1}{\scriptsize $\pm$#2}}
\def\mean#1#2{#1}
\def\meanbf#1#2{\textbf{#1}}
\def\Tableref#1{Table~\ref{#1}}
\def\Figref#1{Figure~\ref{#1}}
\def\Secref#1{Section~\ref{#1}}
\def\Secref#1{Section~\ref{#1}}
\def\eqref#1{equation~\ref{#1}}
\def\Eqref#1{Eq.~(\ref{#1})}
\def\Thmref#1{Theorem~\ref{#1}}
\def\Lemmaref#1{Lemma~\ref{#1}}
\def\Propref#1{Proposition~\ref{#1}}
\def\Defref#1{Definition~\ref{#1}}
\def\Appendref#1{Appendix~\ref{#1}}
\def\1{\bm{1}}
\def\rvb{{\mathbf{b}}}
\def\rvr{{\mathbf{r}}}
\def\rvx{{\mathbf{x}}}
\def\rvy{{\mathbf{y}}}
\def\rvz{{\mathbf{z}}}
\def\vgamma{{\bm{\gamma}}}
\def\vbeta{{\bm{\beta}}}
\def\vomega{{\bm{\omega}}}
\def\mW{{\bm{W}}}
\DeclareMathAlphabet{\mathsfit}{\encodingdefault}{\sfdefault}{m}{sl}
\SetMathAlphabet{\mathsfit}{bold}{\encodingdefault}{\sfdefault}{bx}{n}
\def\sN{{\mathbb{N}}}
\def\0{{\bm{0}}}
\renewcommand{\P}{\mathbb{P}}
\newcommand{\E}{\mathbb{E}}
\newcommand{\R}{\mathbb{R}}
\newcommand{\Var}{\mathrm{Var}}
\newcommand{\normltwo}{L^2}
\newcommand{\normlfour}{L^4}
\newcommand{\pwr}{\mathcal{P}}
\newcommand{\D}{\mathcal{D}}
\newcommand{\hr}{\hat{\rvr}}
\newcommand{\hx}{\hat{\rvx}}
\newcommand{\hy}{\hat{\rvy}}
\newcommand{\hz}{\hat{\rvz}}
\newcommand{\cy}{\check{\rvy}}
\newcommand{\cz}{\check{\rvz}}
\newcommand{\bz}{\breve{\rvz}}
\newcommand{\wtz}{\widetilde{\rvz}}
\newcommand{\ty}{\tilde{\rvy}}
\newcommand{\tz}{\tilde{\rvz}}
\DeclareMathOperator*{\argmin}{arg\,min}
\DeclareMathOperator*{\relu}{ReLU}
\DeclareMathOperator*{\swish}{Swish}
\DeclareMathOperator*{\identity}{identity}
\DeclareMathOperator{\BN}{BN}
\newcommand{\pk}{^{k}}
\newcommand{\pkm}{^{k-1}}
\newcommand{\uo}{_{0}}
\newcommand{\ul}{_{l}}
\newcommand{\ulm}{_{l-1}}
\newcommand{\uk}{_{k}}
\newcommand{\po}{^{0}}
\newcommand{\pl}{^{l}}
\newcommand{\plm}{^{l-1}}
\newcommand{\plplus}{^{l,+}}
\newcommand{\plminus}{^{l,-}}
\newcommand{\uc}{_{c}}
\newcommand{\pwrone}{\pwr^{(1)}}
\newcommand{\pwrtwo}{\pwr^{(2)}}
\newcommand{\pwrthree}{\pwr^{(3)}}
\newcommand{\pwrfour}{\pwr^{(4)}}
\newcommand{\Norm}{\operatorname{Norm}}
\newcommand{\Conv}{\operatorname{Conv}}
\newcommand{\Act}{\operatorname{Act}}
\newcommand{\PNAct}{\operatorname{PN-Act}}
\newcommand{\LN}{\operatorname{LN}}
\newcommand{\IN}{\operatorname{IN}}
\newcommand{\GN}{\operatorname{GN}}
\renewenvironment{proof}{{\bfseries Proof}}{}
\title{Proxy-Normalizing Activations to Match Batch Normalization while Removing Batch Dependence}
\author{%
  Antoine Labatie\quad Dominic Masters\quad Zach Eaton-Rosen\quad Carlo Luschi \\
  Graphcore Research, UK\\
  \texttt{antoine.labatie@centraliens.net}\\ \texttt{\{dominicm,\,zacher,\,carlo\}@graphcore.ai}\\ 
}
\begin{document}

\maketitle

\begin{abstract}
We investigate the reasons for the performance degradation incurred with batch-independent normalization. We find that the prototypical techniques of layer normalization and instance normalization both induce the appearance of failure modes in the neural network's pre-activations: (i) layer normalization induces a collapse towards channel-wise constant functions; (ii) instance normalization induces a lack of variability in instance statistics, symptomatic of an alteration of the expressivity. To alleviate failure mode (i) without aggravating failure mode (ii), we introduce the technique \say{Proxy Normalization} that normalizes post-activations using a proxy distribution. When combined with layer normalization or group normalization, this batch-independent normalization emulates batch normalization's behavior and consistently matches or exceeds its performance.
\end{abstract}

\section{Introduction}
Normalization plays a critical role in deep learning as it allows successful scaling to large and deep models. In vision tasks, the most well-established normalization technique is Batch Normalization (BN) \citep{Ioffe15}. At every layer in the network, BN normalizes the intermediate activations to have zero mean and unit variance in each channel. While indisputably successful when training with large batch size, BN incurs a performance degradation in the regime of small batch size \cite{Ioffe17,Wu18,Ying18,Singh19b,Summers20,Yan20}. This performance degradation is commonly attributed to an excessive or simply unwanted regularization stemming from the noise in the approximation of full-batch statistics by mini-batch statistics.

Many techniques have been proposed to avoid this issue, while at the same time retaining BN's benefits. Some techniques mimic BN's principle while decoupling the computational batch from the normalization batch \citep{Ioffe17,Chiley19,Yan20}. Other techniques are \say{batch-independent} in that they operate independently of the batch in various modalities: through an explicit normalization either in activation space \citep{Ba16,Ulyanov16b,Ren17,Luo19b,Luo19c,Wu18,Singh19b,Zhang19c,Liu20,Summers20} or in weight space \citep{Salimans16,Huang17a,Qiao19,Ruff19,Brock21a}; through the use of an analytic proxy to track activation statistics \citep{Arpit16,Laurent17,Shekhovtsov18b}; \mbox{through a change of activation function \citep{Klambauer17,Singh19b,Liu20}.}

In this paper, we push the endeavor to replace BN with a batch-independent normalization a step further. Our main contributions are as follows: (i) we introduce a novel framework to finely characterize the various neural network properties affected by the choice of normalization; (ii) using this framework, we show that while BN's beneficial properties \emph{are not} retained when solely using the prototypical batch-independent norms, they \emph{are} retained when combining some of these norms with \say{Proxy Normalization}, a novel technique that we hereby introduce; (iii) we demonstrate on an extensive set of experiments that, by reproducing BN's beneficial properties, our batch-independent normalization approach consistently matches or exceeds BN's performance.

As a starting point of our analysis, we need to gain a better understanding of those beneficial properties of BN that we aim at reproducing.

\section{Batch Normalization's beneficial properties}
\label{sec:bn_properties}

We consider throughout this paper a convolutional neural network with $d=2$ spatial axes. This neural network receives an input $\rvx~\in~\R^{H \times W \times C\uo}$ which, unless otherwise stated, is assumed sampled from a finite dataset $\D$. The neural network maps this input $\rvx$ to intermediate activations $\rvx\pl~\in~\R^{H \times W \times C\ul}$ of \emph{height} $H$, \emph{width} $W$ and \emph{number of channels} $C\ul$ at each layer $l$. The value of $\rvx\pl$ at spatial position \mbox{$\alpha \in \{1, \dots, H \} \times \{1, \dots, W\}$} and channel $c \in \{1, \dots, C\ul \}$ is denoted as $\rvx\pl_{\alpha,c}$, with the dependency on $\rvx$ kept \emph{implicit} to avoid overloading notations.

The inclusion of BN at layer $l$ leads in the full-batch setting to adding the following operations $\forall \alpha,c$:
\begin{align}
\rvy\pl_{\alpha,c} & = \frac{\rvx\pl_{\alpha,c} - \mu\uc(\rvx\pl)}{\sigma\uc(\rvx\pl)}, &
\ty\pl_{\alpha,c} & = \vgamma\pl\uc \rvy\pl_{\alpha,c} +\vbeta\pl\uc,
\label{eq:bn}
\end{align}
where $\mu\uc(\rvx\pl)$, $\sigma\uc(\rvx\pl)$ are the mean and standard deviation of $\rvx\pl$ in channel $c$, and $\vgamma\pl\uc$, $\vbeta\pl\uc$ are channel-wise scale and shift parameters restoring the degrees of freedom lost in the standardization. In the mini-batch setting, \mbox{the full-batch statistics $\mu\uc(\rvx\pl)$, $\sigma\uc(\rvx\pl)$ are approximated by mini-batch statistics.}

\Tableref{table:properties} summarizes the beneficial properties that result from including BN in the neural network. Below, we provide details on each of these properties, and we discuss whether each property is reproduced with batch-independent norms.

\textbf{Scale invariance.} 
When BN is present, the input-output mapping of the neural network is invariant to the scale of weights preceding any BN layer. With such scale invariance plus weight decay, the scale of weights during training reaches an equilibrium with an \say{effective} learning rate depending on both the learning rate and the weight decay strength \citep{Laarhoven17,Arora19,Hoffer18,Chiley19,Li20,Zhang19b,Wan20}. Such mechanism of \say{auto rate-tuning} has been shown to provide optimization benefits \citep{Cho17,Arora19,Kohler19}.

\emph{This property is easy to reproduce. It is already obtained with most existing batch-independent norms.}

\textbf{Control of activation scale in residual networks.}
To be trainable, residual networks require the scale of activations to remain well-behaved at initialization \citep{Hanin18,Zhang19a,De20,Brock21a,Shao20,Lubana21}. While this property naturally arises when BN is present on the residual path, when BN is not present it can also be enforced by a proper scaling decaying with the depth of the residual path. This \say{dilution} of the residual path with respect to the skip connection path reduces the effective depth of the network and enables to avoid coarse-grained failures modes \citep{Veit16,Philipp18,Yang19,Labatie19,Galloway19,De20}. 

\emph{This property is easy to reproduce. It is already obtained with most existing batch-independent norms.}

\textbf{Regularizing noise.}
Due to the stochasticity of the approximation of $\mu\uc(\rvx\pl)$, $\sigma\uc(\rvx\pl)$ by mini-batch statistics, training a neural network with BN in the mini-batch setting can be seen as equivalent to performing Bayesian inference \citep{Teye18,Shekhovtsov18a} or to adding a regularizing term to the training of the same network with full-batch statistics \citep{Luo19a}. \mbox{As a result, BN induces a specific form of regularization.}

\emph{This regularization is not reproduced with batch-independent norms, but we leave it out of the scope of this paper.} To help minimize the bias in our analysis and \say{subtract away} this effect, we will perform all our experiments without and with extra degrees of regularization. This procedure can be seen as a coarse disentanglement of normalization's effects from regularization's effects.

\textbf{Avoidance of collapse.}
Unnormalized networks with non-saturating nonlinearities are subject to a phenomenon of \say{collapse} whereby the distribution with respect to $\rvx$, $\alpha$ of the intermediate activation vectors $(\rvx\pl_{\alpha,1},\dots,\rvx\pl_{\alpha,C\ul})^\mathrm{T}$ becomes close to zero- or one-dimensional in deep layers \citep{Philipp18,Bjorck18,Labatie19,Jacot19,Daneshmand20,Rao20,Lubana21}. This means that deep in an unnormalized network: (i) layers tend to have their channels imbalanced; (ii) nonlinearities tend to become channel-wise linear with respect to $\rvx$, $\alpha$ and not add any effective capacity \citep{Philipp18,Hanin19a,Hanin19b}. Consequently, unnormalized networks can neither effectively use their whole width \mbox{(imbalanced channels) nor effectively use their whole depth (channel-wise linearity).}

Conversely, when BN is used, the standardization at each layer prevents this collapse from happening. Even in deep layers, channels remain balanced and nonlinearities remain channel-wise nonlinear with respect to $\rvx$, $\alpha$. \mbox{Consequently, networks with BN can effectively use their whole width and depth.}

The collapse is, on the other hand, not always avoided with batch-independent norms \citep{Philipp18,Qiao19,Rao20}. Most notably, it is \emph{not} avoided with Layer Normalization (LN) \citep{Ba16} or Group Normalization (GN) \citep{Wu18}, \mbox{as we show both theoretically and experimentally on commonly found networks in \Secref{sec:failure_modes}.}

\emph{To the extent possible, we aim at designing a batch-independent norm that avoids this collapse.}

\textbf{Preservation of expressivity.} 
We can always express the identity with \Eqref{eq:bn} by choosing \mbox{$\vbeta\pl\uc=\mu\uc(\rvx\pl)$} and $\vgamma\pl\uc=\sigma\uc(\rvx\pl)$. Conversely, for any choice of $\vbeta\pl\uc$, $\vgamma\pl\uc$, we can always \say{re-absorb} \Eqref{eq:bn} into a preceding convolution with bias. This means that BN in the full-batch setting \emph{does not alter the expressivity} compared to an unnormalized network, i.e. it amounts to a plain reparameterization of the hypothesis space.

The expressivity is, on the other hand, not always preserved with batch-independent norms. In activation space, the dependence of batch-independent statistics on the input $\rvx$ turns the standardization into a channel-wise nonlinear operation that cannot be \say{re-absorbed} into a preceding convolution with bias \citep{Qiao19}. This phenomenon is most pronounced when statistics get computed over few components. This means e.g. that Instance Normalization (IN) \citep{Ulyanov16b} induces a greater change of expressivity than GN, which itself induces a greater change of expressivity than LN.

In weight space, the expressivity can also be altered, namely by the removal of degrees of freedom. This is the case with Weight Standardization (WS) \citep{Qiao19,Brock21a} and Centered Weight Normalization~\citep{Huang17a} that remove degrees of freedom (one per unit) that \emph{cannot} be restored in a succeeding affine transformation. This reduction of expressivity could explain the ineffectiveness of these techniques in EfficientNets \citep{Tan19}, \mbox{as previous works observed \citep{Brock21a} and as we confirm in \Secref{sec:results}.}

\emph{To the extent possible, we aim at designing a batch-independent norm that preserves expressivity.}

\begin{table}
\centering
\caption{\textbf{BN's beneficial properties}. We show whether each property is (at least approximately) present (\ding{51}) or absent (\ding{55}) with various batch-independent norms: Layer Normalization (LN), Instance Normalization (IN), Layer Normalization + Proxy Normalization (LN+PN, cf \Secref{sec:pn}). In this categorization, BN is considered in the mini-batch setting but still close to the full-batch setting, such that it approximately preserves expressivity \cite{Huang20}.}
\label{table:properties}
\scalebox{1}{\begin{tabular}{lccccc}
\toprule
  & Scale
  & Control of
  & Regularizing
  & Avoidance of
  & Preservation of \\
  & invariance & activation scale & noise & collapse & expressivity \\
\midrule
BN    &  \cmark&  \cmark & \cmark &  \cmark & \cmark \\
LN    &  \cmark&  \cmark & \xmark & \xmark & \cmark \\
IN    &  \cmark&  \cmark & \xmark & \cmark & \xmark \\
LN+PN &  \cmark&  \cmark & \xmark & \cmark & \cmark \\
\bottomrule
\end{tabular}}
\end{table}

\section{Theoretical framework of analysis}
\label{sec:framework}
We specified the different properties that we wish to retain in our design of batch-independent normalization: (i) scale invariance, (ii) control of activation scale; (iii) avoidance of collapse; (iv) preservation of expressivity. We now introduce a framework to quantify the presence or absence of the specific properties (iii) and (iv) with various choices of normalization.

\paragraph{Propagation.}
For simplicity, we assume in our theoretical setup that any layer $l$ up to depth $L$ consists of the following three steps: (i)~convolution step with weights $\vomega\pl \in \R^{K\ul \times K\ul \times C\ulm \times C\ul}$; (ii)~normalization step; (iii)~activation step sub-decomposed into an affine transformation with scale and shift parameters $\vgamma\pl,\vbeta\pl \in\R^{C\ul}$ and an activation function $\phi$ which, unless otherwise stated, is assumed positive homogeneous and nonzero (e.g. $\phi=\relu$). If we denote $\rvx\pl,\rvy\pl,\rvz\pl~\in~\R^{H \times W \times C\ul}$ the intermediate activations situated just after (i), (ii), (iii) with the convention $\rvz\po\equiv\rvx$, we may write the propagation through layer $l$ as
\begin{align}
\rvx\pl & = \Conv(\rvz\plm), & \qquad\forall \alpha,c: & \quad \Conv(\rvz\plm)_{\alpha,c} = (\vomega\pl \ast \rvz\plm)_{\alpha,c}, \label{eq:propagation1} \\
\rvy\pl & = \Norm(\rvx\pl), & \qquad \forall \alpha,c: & \quad \Norm(\rvx\pl)_{\alpha,c} = \frac{\rvx\pl_{\alpha,c} - \mu_{I_{\rvx,c}}(\rvx\pl)}{\sigma^{}_{I_{\rvx,c}}(\rvx\pl)},\footnotemark  \label{eq:propagation2}\\
 \rvz\pl & = \Act(\rvy\pl), &\qquad \forall \alpha,c: & \quad \Act(\rvy\pl)_{\alpha,c} = \phi\big(\ty\pl_{\alpha,c}\big)=\phi\big(\vgamma\pl_c \rvy\pl_{\alpha,c} + \vbeta\pl_c\big), \label{eq:propagation3} 
\end{align}\footnotetext{We omit the numerical stability constant and adopt the convention $\Norm(\rvx\pl)_{\alpha,c}=0$, $\forall\alpha$ if $\sigma^{}_{I_{\rvx,c}}(\rvx\pl)=0$.}where $\mu_{I_{\rvx,c}}(\rvx\pl)$, $\sigma_{I_{\rvx,c}}(\rvx\pl)$ denote the mean and standard deviation of $\rvx\pl$ conditionally on $I_{\rvx,c}\equiv\{c\}$, $\{\rvx\}$, $\{\rvx,c\}$, $\{\rvx,c\bmod{} G\}$ for the respective cases $\Norm=\BN$, $\LN$, $\IN$, $\GN$ with $G$ groups.

\paragraph{Moments.} Extending the previous notations, we use $\mu$,~$\sigma$,~$\pwr$ indexed with a (possibly empty) subset of variables to denote the operators of conditional mean, standard deviation and power. If we apply these operators to the intermediate activations $\rvy\pl$, that implicitly depend on the input $\rvx$ and that explicitly depend on the spatial position $\alpha$ and the channel $c$, we get e.g.
\begin{align*}
\mu\uc(\rvy\pl) & = \E_{\rvx,\alpha}\big[\rvy\pl_{\alpha,c}\big], & 
\sigma\uc(\rvy\pl) & = \sqrt{\Var_{\rvx,\alpha}\big[\rvy\pl_{\alpha,c}\big]}, &  
\pwr\uc(\rvy\pl) & = \underbrace{\E_{\rvx,\alpha}\big[ (\rvy\pl_{\alpha,c})^2 \big]}_{\mu\uc(\rvy\pl)^2+\sigma\uc(\rvy\pl)^2}, \\
\mu_{\rvx,c}(\rvy\pl) & = \E_{\alpha}\big[\rvy\pl_{\alpha,c}\big], & 
\sigma_{\rvx,c}(\rvy\pl) & = \sqrt{\Var_{\alpha}\big[\rvy\pl_{\alpha,c}\big]}, &  
\pwr_{\rvx,c}(\rvy\pl) & = \hspace{-9.5pt}\underbrace{\E_{\alpha}\big[ (\rvy\pl_{\alpha,c})^2 \big]}_{\mu_{\rvx,c}(\rvy\pl)^2+\sigma_{\rvx,c}(\rvy\pl)^2}\hspace{-12pt},
\end{align*}
where, by convention, $\rvx$, $\alpha$, $c$ are considered uniformly sampled among inputs of $\D$, spatial positions and channels, whenever they are considered as random.

\paragraph{Power decomposition.} Using these notations, we may gain important insights by decomposing the power in channel $c$ of $\rvy\pl$, just after the normalization step, as
\begin{align}
\pwr\uc(\rvy\pl) 
 & = \underbrace{\E_{\rvx} \Big[ \mu_{\rvx,c}(\rvy\pl) \Big]^2}_{\pwrone\uc(\rvy\pl)}
 + \underbrace{\Var_{\rvx} \Big[ \mu_{\rvx,c}(\rvy\pl) \Big]}_{\pwrtwo\uc(\rvy\pl)}
 + \underbrace{\E_{\rvx} \Big[ \sigma_{\rvx,c}(\rvy\pl) \Big]^2}_{\pwrthree\uc(\rvy\pl)} 
 + \underbrace{\Var_{\rvx} \Big[ \sigma_{\rvx,c}(\rvy\pl) \Big].}_{\pwrfour\uc(\rvy\pl)} 
 \label{eq:pwr_decomp}
\end{align}
Since this four-terms \emph{power decomposition} will be at the core of our analysis, we detail two useful views of it. The first view is that of a \emph{hierarchy of scales}: $\pwrone\uc(\rvy\pl)$ measures the power of $\mu\uc(\rvy\pl)$ at the dataset scale; $\pwrtwo\uc(\rvy\pl)$ measures the power of $\mu_{\rvx,c}(\rvy\pl)-\mu\uc(\rvy\pl)$ at the instance scale; the sum of $\pwrthree\uc(\rvy\pl)$ and $\pwrfour\uc(\rvy\pl)$ measures the power of $\rvy\pl_{\alpha,c}-\mu_{\rvx,c}(\rvy\pl)$ at the pixel scale. A particular situation where the power would be concentrated at the dataset scale with $\pwrone\uc(\rvy\pl)$ equal to $\pwr\uc(\rvy\pl)$ would imply that $\rvy\pl$ \mbox{has its distribution fully \say{collapsed} in channel $c$, i.e. that $\rvy\pl$ is constant in channel $c$.\qquad}

The second view is that of a \emph{two-level binary tree}: on one half of the tree, the sum of $\smash{\pwrone\uc(\rvy\pl)}$ and $\smash{\pwrtwo\uc(\rvy\pl)}$ measures the power coming from $\mu_{\rvx,c}(\rvy\pl)$, with the relative proportions of $\pwrone\uc(\rvy\pl)$ and $\pwrtwo\uc(\rvy\pl)$ functions of the inter-$\rvx$ similarity and inter-$\rvx$ variability of $\mu_{\rvx,c}(\rvy\pl)$; on the other half of the tree, the sum of $\pwrthree\uc(\rvy\pl)$ and $\pwrfour\uc(\rvy\pl)$ measures the power coming from $\sigma_{\rvx,c}(\rvy\pl)$, with the relative proportions of $\pwrthree\uc(\rvy\pl)$ and $\pwrfour\uc(\rvy\pl)$ functions of the inter-$\rvx$ similarity and inter-$\rvx$ variability of $\sigma_{\rvx,c}(\rvy\pl)$. A particular situation where $\pwrtwo\uc(\rvy\pl)$, $\pwrfour\uc(\rvy\pl)$ would be equal to zero would imply that $\mu_{\rvx,c}(\rvy\pl)$, $\sigma_{\rvx,c}(\rvy\pl)$ have zero inter-$\rvx$ variability, i.e. that $\mu_{\rvx,c}(\rvy\pl)$, $\sigma_{\rvx,c}(\rvy\pl)$ are constant for all $\rvx$.

A version of \Eqref{eq:pwr_decomp} at the layer level instead of channel level will be easier to work with. Defining $\pwr^{(i)}(\rvy\pl)$ as the averages of $\pwr^{(i)}\uc(\rvy\pl)$ over $c$ for $i\in \{1,2,3,4\}$, we obtain 
\begin{align*}
 \pwr(\rvy\pl) & = \pwrone(\rvy\pl) + \pwrtwo(\rvy\pl) + \pwrthree(\rvy\pl) + \pwrfour(\rvy\pl). 
\end{align*}
It should be noted that $\pwr(\rvy\pl)=1$ for any choice of $\Norm \in \{ \BN, \LN, \IN, \GN \}$ as long as the denominator of \Eqref{eq:propagation2} is nonzero for all $\rvx,c$ [\ref{sec:powers_one}]. Consequently, the terms $\pwr^{(i)}(\rvy\pl)$ sum to one, meaning they can be conveniently seen as the proportion of each term $i\in \{1,2,3,4\}$ into $\pwr(\rvy\pl)$.

\paragraph{Revisiting BN's avoidance of collapse.} 

 When BN is used, $\rvy\pl$ is normalized not only layer-wise but also channel-wise with $\pwrone\uc(\rvy\pl)=0$ and $\pwr\uc(\rvy\pl)=1$. As a first consequence, $\ty\pl$ (that is only one affine transformation away from $\rvy\pl$) is \emph{unlikely} to have its channel-wise distributions collapsed. This means that the nonlinearity $\phi$ acting on $\ty\pl$ is \emph{likely} to be effectively nonlinear with respect to $\ty\pl$'s channel-wise distributions.\footnote{Note that: (i) the effective nonlinearity of $\phi$ with respect to $\ty\pl$'s channel-wise distributions could be quantified in the context of random nets of \Defref{def:random} with \say{reasonable} choices of $\beta$, $\gamma$; (ii) BN only guarantees an intra-distribution nonlinearity and not an intra-mode nonlinearity in contexts such as adversarial training \citep{Xie20a,Xie20b} or conditional GANs \citep{Miyato18}, unless modes are decoupled in BN's computation \citep{Vries17,Deecke18,Wang19,Chang19,Wu21}.}  As a result, each layer adds capacity and the network effectively uses its whole \emph{depth}. This is opposite to the situation where $\ty\pl$ has its channel-wise distributions collapsed with $\pwr\uc(\ty\pl)-\pwrone\uc(\ty\pl) \ll \pwr\uc(\ty\pl)$ for all $c$, which results in $\phi$ being close to linear with respect to $\ty\pl$'s channel-wise distributions. This is illustrated in \Figref{fig:channelwise_linearity} and formalized in \Appendref{sec:channelwise_linearity}.
\newpage

As an additional consequence, $\rvy\pl$ is guaranteed to have its channels well balanced with equal power $\pwr\uc(\rvy\pl)$ for all $c$. As a result, the network effectively uses its whole \emph{width}. This is opposite to the situation where a single channel $c$ becomes overly dominant over the others with $\pwr\uc(\rvy\pl) \gg \pwr_{c'}(\rvy\pl)$ for $c\neq c'$, which results in downstream layers only \say{seeing} this channel $c$ and the network behaving as if it had a width equal to one at layer $l$.

\begin{figure}
\centering
\includegraphics[width=0.95\textwidth]{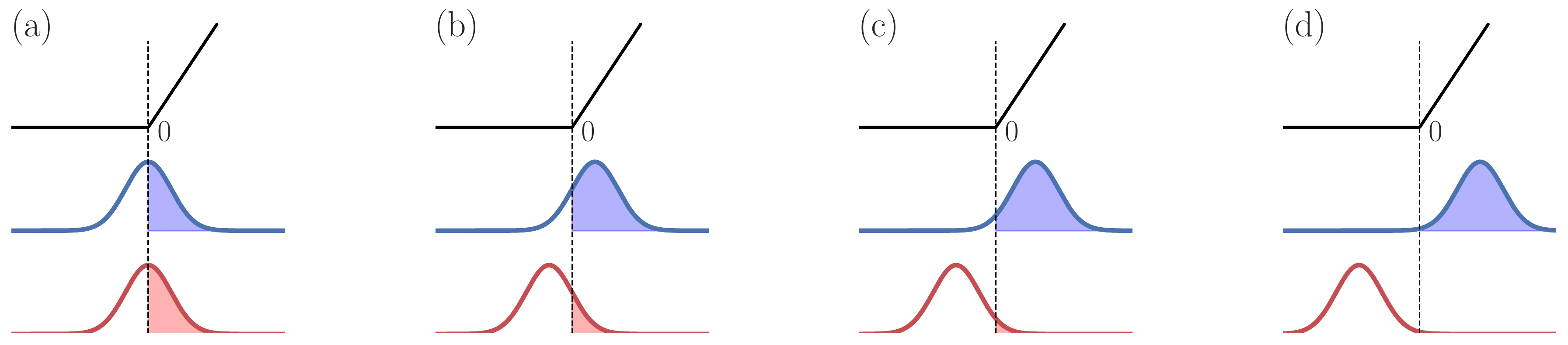}
  \caption{\textbf{Channel-wise collapse induces channel-wise linearity}. Each subplot shows $\phi=\relu$ (black activation function) as well as two channel-wise distributions (blue and red distributions) positioned symmetrically around $0$ with $\frac{\pwr\uc(\ty\pl)-\pwrone\uc(\ty\pl)}{\pwr\uc(\ty\pl)}=1, \frac{1}{2}, \frac{1}{4}, \frac{1}{8}$ for (a), (b), (c), (d), respectively. When progressing from (a) to (d), the part of the distribution corresponding to active $\relu$ (shaded region) becomes either overly dominant (blue distribution) or negligible (red distribution). In either case, \mbox{the channel-wise distribution ends up concentrated on only one side of piece-wise linearity.}} 
  \label{fig:channelwise_linearity} 
\end{figure}

\paragraph{Revisiting BN's preservation of expressivity.} 
When BN is used, $\pwrone\uc(\rvy\pl)=0$ implies for all $c$ that the terms $\pwrtwo\uc(\rvy\pl),\pwrthree\uc(\rvy\pl),\pwrfour\uc(\rvy\pl)$ sum to one. Apart from that, BN does not impose any particular constraints on the relative proportions of each term into the sum. This means that the relative proportions of $\pwr^{(i)}\uc(\rvy\pl)$ and $\pwr^{(i)}(\rvy\pl)$ for $i\in \{2,3,4\}$ are free to evolve as naturally dictated by the task and the optimizer during learning.

This absence of constraints seems sensible. Indeed, imposing constraints on these relative proportions would \emph{alter the expressivity}, which would not have any obvious justification in general and could even be detrimental in some cases, as we discuss in \Secref{sec:failure_modes}.

\section{Failure modes with batch-independent normalization}
\label{sec:failure_modes}
With our theoretical framework in hand, we now turn to showing that the prototypical batch-independent norms are subject to failures modes opposite to BN's beneficial properties.

In the case of LN, the failure mode does not manifest in an absolute sense but rather as a \say{soft} inductive bias, i.e. as a preference or a favoring in the hypothesis space. This \say{soft} inductive bias is quantified by \Thmref{thm:ln_collapse} in the context of networks with random model parameters.

\begin{Definition}[random net]
\label{def:random}
We define a \say{random net} as a neural network having an input $\rvx$ sampled from the dataset $\D$ and implementing \Eqref{eq:propagation1}, (\ref{eq:propagation2}), (\ref{eq:propagation3}) in every layer up to depth $L$, with the components of $\vomega\pl$,$\vgamma\pl$,$\vbeta\pl$ at every layer $l\in \{1,\dots,L\}$ sampled i.i.d. from the fixed distributions $\nu_{\vomega}$,$\nu_{\vgamma}$,$\nu_{\vbeta}$ (up to a fan-in's square root scaling for $\vomega\pl$). 

In such networks, we assume that: (i) none of the inputs in the dataset $\D$ are identically zero; (ii) $\nu_{\vomega}$, $\nu_{\vbeta}$, $\nu_{\vgamma}$ have well-defined moments, with strictly positive associated root mean squares $\omega, \gamma, \beta>0$; (iii) $\nu_{\vomega}$, $\nu_{\vbeta}$ are symmetric around zero.
\end{Definition}

\begin{Theorem}[layer-normalized networks collapse (informal)]\emph{[\ref{sec:proof_ln_collapse}]}
\label{thm:ln_collapse} Fix a layer $l\in \{1,\dots,L\}$ and $\nu_{\vomega}$,~$\nu_{\vbeta}$,~$\nu_{\vgamma}$,~$\D$ in \Defref{def:random}. Further suppose $\Norm=\LN$ and suppose that the convolution of \Eqref{eq:propagation1} uses periodic boundary conditions.

Then for random nets of \Defref{def:random}, it holds when widths are large enough that
 \begin{align}
 \pwr(\rvy\pl) - \pwrone(\rvy\pl) & \lesssim \rho\plm,  &  \quad
\pwr(\rvy\pl) & \simeq 1, \label{eq:ln_collapse1} 
 \end{align} 
where $\rho \equiv \gamma^2 /(\gamma^2+\beta^2) <1$, and $\lesssim$ and $\simeq$ denote inequality and equality up to arbitrarily small constants with probability arbitrarily close to 1 when $\min_{1\leq k \leq l} C\uk$ is large enough.
\end{Theorem}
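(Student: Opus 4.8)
The plan is to reduce the statement to the power decomposition of \Eqref{eq:pwr_decomp} and to show that the \emph{non-collapsed power} $\Delta\pl \equiv \pwr(\rvy\pl) - \pwrone(\rvy\pl) = \E_c\big[\Var_{\rvx,\alpha}[\rvy\pl_{\alpha,c}]\big]$ contracts by a factor $\rho$ per layer. Since $\pwr(\rvy\pl)=1$ for $\Norm=\LN$ whenever the denominator of \Eqref{eq:propagation2} is nonzero [\ref{sec:powers_one}], and since $\pwr\uc(\rvy\pl)-\pwrone\uc(\rvy\pl)=\Var_{\rvx,\alpha}[\rvy\pl_{\alpha,c}]$ identically, both estimates of \Eqref{eq:ln_collapse1} reduce to $\Delta\pl\lesssim\rho^{\,l-1}$ together with the concentration claim $\pwr(\rvy\pl)\simeq1$. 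The latter is immediate once the large-width control below is in place, so the crux is the geometric bound on $\Delta\pl$.

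First I would set up the large-width reduction. Conditioning on the weights of layers $1,\dots,l-1$, the channel-wise parameters $(\vomega\pl_{\cdot,\cdot,c},\vgamma\pl_c,\vbeta\pl_c)$ are i.i.d.\ across the output channels $c$, so each channel-averaged moment $\pwr^{(i)}(\cdot)=\E_c[\pwr^{(i)}\uc(\cdot)]$ is an average of $C\ul$ conditionally i.i.d.\ terms and concentrates, at rate $1/\sqrt{C\ul}$, on its expectation over $\nu_{\vomega},\nu_{\vgamma},\nu_{\vbeta}$ (cf.\ \Defref{def:random}). The same argument applied to the LN statistics shows that in the large-width limit $\mu_\rvx(\rvx\pl)\to0$ (weights are zero-mean) and $\sigma_\rvx(\rvx\pl)^2\to\pwr_\rvx(\rvx\pl)$, so that LN acts as the deterministic per-input rescaling $\rvy\pl=\rvx\pl/\sqrt{\pwr_\rvx(\rvx\pl)}$. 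A CLT over the fan-in, with the fan-in scaling fixing the variance and the periodic boundary conditions giving spatial stationarity, further lets me treat $\rvx\pl$, conditionally on the slow ``collapsed'' variables, as a Gaussian field over $(\rvx,\alpha)$ for a typical channel; carrying these approximations through depth is what makes ``widths large enough'' and ``probability close to $1$'' precise.

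Next I would establish the one-layer recursion by tracking the non-collapsed fraction through the four sub-steps $\rvy\pl\xrightarrow{\text{affine}}\ty\pl\xrightarrow{\phi}\rvz\pl\xrightarrow{\Conv}\rvx\plp\xrightarrow{\LN}\rvy\plp$. The affine step is the source of the factor $\rho$: because $\nu_{\vbeta}$ is symmetric and $\vgamma\pl_c,\vbeta\pl_c$ are independent of the normalization-level statistics, a direct moment computation gives $\pwrone(\ty\pl)=\gamma^2\pwrone(\rvy\pl)+\beta^2$ and $\pwr(\ty\pl)-\pwrone(\ty\pl)=\gamma^2\Delta\pl$, hence a total power $\pwr(\ty\pl)=\gamma^2+\beta^2$ and a non-collapsed fraction $\gamma^2\Delta\pl/(\gamma^2+\beta^2)=\rho\,\Delta\pl$. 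The convolution preserves this fraction: zero-mean symmetric weights send the channel-constant (collapsed) part of $\rvz\pl$ to the collapsed part of $\rvx\plp$ and the fluctuation to the fluctuation, scaling both $\pwrone$ and $\pwr-\pwrone$ by the common factor $\omega^2$ (the periodic boundary conditions being used to evaluate the spatial sums). LN then rescales the total power back to one while again preserving the fraction. The remaining sub-step is $\phi=\relu$, where I would prove a Gaussian lemma stating that rectification cannot increase the non-collapsed fraction: for $X\sim\N(\mu,s^2)$ one shows $\mu^2\Var[\relu(X)]\le s^2\,\E[\relu(X)]^2$, i.e.\ $\Var[\relu(X)]/\E[\relu(X)^2]\le s^2/(\mu^2+s^2)$, and then aggregates over channels using positive homogeneity of $\phi$. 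Chaining the four steps yields $\Delta\plp\le\rho\,\Delta\pl$, and since $\Delta^{(1)}\le\pwr(\rvy^{(1)})=1$, iterating gives $\Delta\pl\le\rho^{\,l-1}$.

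I expect the main obstacle to be the large-width control rather than the algebra of the recursion. The per-layer moment identities are exact in the Gaussian limit, but making them rigorous requires (i) uniform concentration of all four components $\pwr^{(i)}$ and of the LN denominator \emph{simultaneously}, propagated through $l$ layers so that the accumulated error stays below the arbitrarily small constants; and (ii) justifying the Gaussian-field approximation for the empirical distribution of $\rvx\pl$ over $(\rvx,\alpha)$ at fixed weights, which both the affine moment computation and the $\relu$ lemma implicitly use. A secondary subtlety is that the LN rescaling factor $\sqrt{\pwr_\rvx(\rvx\pl)}$ depends on $\rvx$; I would show its inter-$\rvx$ variation concentrates away in the large-width limit, or is absorbed into the small constants, so that LN genuinely preserves the non-collapsed fraction.
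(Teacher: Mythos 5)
Your overall architecture matches the paper's proof: approximate LN by a deterministic rescaling at large width, track the non-collapsed fraction $\varrho=(\pwr-\pwrone)/\pwr$ through the four sub-steps, obtain the factor $\rho$ from the affine transformation (the $\beta^2$ injected into the collapsed part versus the $\gamma^2$ scaling of the fluctuation), show convolution and LN preserve the fraction in expectation over the i.i.d.\ symmetric weights, and control everything by concentration over the $C\uk$ i.i.d.\ channels. The affine and convolution moment identities you state are exactly the paper's, and your treatment of the input-dependence of the LN denominator is the content of the paper's Lemma~\ref{lemma:ln_collapse2}.

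There is, however, a genuine gap in your treatment of the activation function. The Gaussian lemma you propose, $\Var[\relu(X)]/\E[\relu(X)^2]\le s^2/(\mu^2+s^2)$ for $X\sim\N(\mu,s^2)$ (equivalently $\mu^2\Var[\relu(X)]\le s^2\,\E[\relu(X)]^2$), is false. Take $\mu=-a s$ with $a\to\infty$: then $\relu(X)$ is zero except on a rare event, so $\E[\relu(X)]^2$ is exponentially smaller than $\E[\relu(X)^2]$, hence $\Var[\relu(X)]/\E[\relu(X)^2]\to1$ while $s^2/(\mu^2+s^2)\to0$. Worse, this is precisely the regime the theorem concerns: in a collapsed deep layer, $\ty\pl_{\alpha,c}=\vgamma\pl\uc\rvy\pl_{\alpha,c}+\vbeta\pl\uc$ has $|\mu\uc(\ty\pl)|\gg\sigma\uc(\ty\pl)$, and roughly half the channels have $\mu\uc(\ty\pl)<0$. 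So the per-channel contraction you want to chain simply does not hold channel by channel, and aggregating a false per-channel bound does not repair it. The correct statement is only an \emph{aggregate} one, and the paper obtains it without any Gaussian assumption: writing $\phi(y)=\phi(1)y^{+}+\phi(-1)y^{-}$ by positive homogeneity and using that $\cy\pl$ and $-\cy\pl$ are equidistributed over the draw of $\theta\pl$ (symmetry of $\nu_{\vomega}$ and $\nu_{\vbeta}$), one gets $\E_{\theta\pl}[\pwr\uc(\cz\pl)]=F_\phi\,\E_{\theta\pl}[\pwr\uc(\cy\pl)]$ and $\E_{\theta\pl}[\mu\uc(\cz\pl)^2]\ge F_\phi\,\E_{\theta\pl}[\mu\uc(\cy\pl)^2]$, hence $\E_{\theta\pl}[\pwr\uc(\cz\pl)-\pwrone\uc(\cz\pl)]\le F_\phi\gamma^2\varrho(\hz\plm)$ (the factor $\chi\le1$ in Lemma~\ref{lemma:ln_collapse1}). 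Intuitively, a channel with very negative mean violates your ratio bound but contributes negligible absolute power, while its sign-mirrored counterpart passes through $\relu$ essentially linearly; only the pairing over the symmetric parameter distribution makes the inequality true. You should replace your Gaussian lemma by this symmetrization argument; once that is done, your Gaussian-field approximation of the activations also becomes unnecessary, removing the second obstacle you flag.
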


\paragraph{Discussion on LN's failure mode.} \Thmref{thm:ln_collapse} implies that, with high probability, $\rvy\pl$ is subject to \emph{channel-wise collapse} in deep layers ($l\gg 1$) with $\pwr(\rvy\pl) - \pwrone(\rvy\pl) \ll \pwr(\rvy\pl)$. This means that $\ty\pl$ (that is only one affine transformation away from $\rvy\pl$) is likely to have its channel-wise distributions collapsed with $\pwr\uc(\ty\pl) - \pwrone\uc(\ty\pl) \ll \pwr\uc(\ty\pl)$ for most $c$. The nonlinearity $\phi$ acting on $\ty\pl$ is then likely to be close to linear with respect to $\ty\pl$'s channel-wise distributions [\ref{sec:channelwise_linearity}]. Being close to channel-wise linear in deep layers, \mbox{layer-normalized networks are unable to effectively use their whole depth.}

Since the inequality $\lesssim$ can be replaced by an equality $\simeq$ in the case $\phi=\identity$ of \Thmref{thm:ln_collapse}~[\ref{sec:proof_ln_collapse_identity}], the aggravation at each layer $l$ of the upper bound of \Eqref{eq:ln_collapse1} does not stem from the activation function itself but rather by the preceding affine transformation.  The phenomenon of channel-wise collapse --- also known under the terms of \say{domain collapse} \citep{Philipp18} or \say{elimination singularity} \citep{Qiao19} --- is therefore not only induced by a \say{mean-shifting} activation function such as $\phi=\relu$ \citep{Page19,Brock21a}, but also by the injection of non-centeredness through the application of the channel-wise shift parameter $\vbeta\pl$ at each layer $l$. The fact that the general case of positive homogeneous $\phi$ is upper bounded by the case $\phi=\identity$ in \Eqref{eq:ln_collapse1} still means that the choice $\phi=\relu$ can only be an aggravating factor.

Crucially, in the context of random nets of large widths, LN's operation at each layer $l$ does not compensate this \say{mean shift}. This comes from the fact that LN's mean and variance statistics can be approximated by zero and a constant value independent of $\rvx$, respectively. This means that LN's operation can be approximated by a layer-wise constant scaling independent of $\rvx$.\footnote{In this view, we expect layer-normalized networks to be also subject to a phenomenon of increasingly imbalanced channels with depth \citep{Bjorck18,Pennington17,Daneshmand20}.}

The predominance of LN's failure mode in the hypothesis space --- implied by its predominance in random nets --- is expected to have at least two negative effects on the actual learning and final performance: (i) being expected along the training trajectory and being associated with reduced effective capacity, the failure mode is expected to cause degraded performance on the training loss; (ii) even if avoided to some extent along the training trajectory, the failure mode is still expected in the vicinity of this training trajectory, implying an ill-conditioning of the loss landscape \citep{Page19,Ghorbani19} and a prohibition of large learning rates that could have led otherwise to generalization benefits \cite{Masters18}.

After detailing LN's failure mode, we now detail IN's failure mode.
\begin{Theorem}[instance-normalized networks lack variability in instance statistics]\emph{[\ref{sec:in_variability}]}
\label{thm:in_variability}
Fix a layer $l\in\{1,\dots,L\}$ and lift any assumptions on $\phi$. Further suppose $\Norm=\IN$, with \Eqref{eq:propagation2} having nonzero denominator at layer $l$ for all inputs and channels.

Then it holds that 
\begin{itemize}
\item $\rvy\pl$ is normalized in each channel $c$ with 
 \begin{align*}
\pwrone\uc(\rvy\pl) & = 0, &  \quad
\pwr\uc(\rvy\pl) & =1; 
 \end{align*}
\item $\rvy\pl$ lacks variability in instance statistics in each channel $c$ with
 \begin{align*}
 \pwrtwo\uc(\rvy\pl)  & = 0, & \quad
 \pwrthree\uc(\rvy\pl) & = 1, & \quad
\pwrfour\uc(\rvy\pl) & = 0.
 \end{align*}
\end{itemize}
\end{Theorem}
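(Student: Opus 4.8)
The plan is to compute the two instance statistics $\mu_{\rvx,c}(\rvy\pl)$ and $\sigma_{\rvx,c}(\rvy\pl)$ directly from the definition of instance normalization, and then to substitute them into the power decomposition of \Eqref{eq:pwr_decomp}. First I would specialize \Eqref{eq:propagation2} to $\Norm=\IN$, for which the conditioning set is $I_{\rvx,c}\equiv\{\rvx,c\}$, so that $\rvy\pl_{\alpha,c} = (\rvx\pl_{\alpha,c} - \mu_{\rvx,c}(\rvx\pl))/\sigma_{\rvx,c}(\rvx\pl)$, where the mean and standard deviation are taken over spatial positions $\alpha$ at fixed $\rvx$, $c$. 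Applying $\E_{\alpha}$ to both sides and using linearity together with $\E_{\alpha}[\rvx\pl_{\alpha,c}] = \mu_{\rvx,c}(\rvx\pl)$ gives $\mu_{\rvx,c}(\rvy\pl)=0$ for every $\rvx$, $c$. Applying $\Var_{\alpha}$ and pulling out the factor $\sigma_{\rvx,c}(\rvx\pl)$, which is constant in $\alpha$, the nonzero-denominator hypothesis lets me divide to obtain $\sigma_{\rvx,c}(\rvy\pl)=1$ for every $\rvx$, $c$.

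With these two pointwise identities in hand, each term of the decomposition collapses. Since $\mu_{\rvx,c}(\rvy\pl)$ is identically zero as a function of $\rvx$, both its squared mean $\pwrone\uc(\rvy\pl)=\E_{\rvx}[\mu_{\rvx,c}(\rvy\pl)]^2$ and its variance $\pwrtwo\uc(\rvy\pl)=\Var_{\rvx}[\mu_{\rvx,c}(\rvy\pl)]$ vanish. Since $\sigma_{\rvx,c}(\rvy\pl)$ is identically one, $\pwrthree\uc(\rvy\pl)=\E_{\rvx}[\sigma_{\rvx,c}(\rvy\pl)]^2=1$ while $\pwrfour\uc(\rvy\pl)=\Var_{\rvx}[\sigma_{\rvx,c}(\rvy\pl)]=0$. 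This establishes the second bullet point. Summing the four terms via \Eqref{eq:pwr_decomp} then yields $\pwr\uc(\rvy\pl)=1$, and combined with $\pwrone\uc(\rvy\pl)=0$ this gives the first bullet point as well.

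There is essentially no hard step here: the statement is a mechanical consequence of the fact that IN centers and scales each instance-channel slice to zero mean and unit variance, so that the instance statistics become deterministic constants ($0$ and $1$) with no residual inter-$\rvx$ variability. The only points requiring care are (i) correctly invoking the nonzero-denominator assumption, so that the division defining $\sigma_{\rvx,c}(\rvy\pl)=1$ is legitimate and the convention $\Norm(\rvx\pl)_{\alpha,c}=0$ is never triggered at layer $l$; and (ii) keeping the two nested sampling levels distinct, namely the inner mean and variance over $\alpha$ at fixed $\rvx$ that IN directly controls, versus the outer mean and variance over $\rvx$ that define the four power terms. Once these levels are separated, the computation is immediate, which is why I would not expect any genuine obstacle.
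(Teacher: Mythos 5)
Your proposal is correct and follows essentially the same route as the paper's proof in Appendix~\ref{sec:in_variability}: compute $\mu_{\rvx,c}(\rvy\pl)=0$ and $\sigma_{\rvx,c}(\rvy\pl)=1$ directly from the IN standardization (using the nonzero-denominator hypothesis), then read off the four power terms of \Eqref{eq:pwr_decomp} as expectations and variances over $\rvx$ of these constant instance statistics. The only cosmetic difference is that the paper passes through $\pwr_{\rvx,c}(\rvy\pl)=1$ to get $\sigma_{\rvx,c}(\rvy\pl)=1$ and obtains $\pwr\uc(\rvy\pl)=\E_\rvx[\pwr_{\rvx,c}(\rvy\pl)]=1$ directly rather than by summing the four terms, which changes nothing of substance.
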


\paragraph{Discussion on IN's failure mode.} 
We see in \Thmref{thm:in_variability} that $\rvy\pl$'s power decomposition with IN is constrained to be such that $\pwrone(\rvy\pl),\pwrtwo(\rvy\pl),\pwrfour(\rvy\pl)=0$ and $\pwrthree(\rvy\pl)=1$. While the constraints on $\pwrone(\ty\pl)$, $\pwrthree(\ty\pl)$ are removed by the affine transformation between $\rvy\pl$ and $\ty\pl$, the constraints on $\pwrtwo(\ty\pl)$, $\pwrfour(\ty\pl)$, on the other hand, remain even after the affine transformation. These constraints on $\pwrtwo(\ty\pl)$, $\pwrfour(\ty\pl)$ translate into $2C\ul$ fixed constraints in activation space that apply to each $\ty\pl\in\R^{H\times W \times C\ul}$ associated \mbox{with each choice of input $\rvx$ in the dataset $\D$ \cite{Huang20}.}

Such constraints on $\ty\pl$ are symptomatic of an alteration of expressivity. They notably entail that some network mappings that can be expressed without normalization cannot be expressed with IN. One such example is the identity mapping [\ref{sec:alteration_IN}]. Another such example is a network mapping that would provide in channel $c$ through $\ty\pl_{\alpha,c}$, just before the nonlinearity, a detector of a given concept at position $\alpha$ in the input $\rvx$. With IN, the lack of variability in instance statistics implies that the mean $\mu_{\rvx,c}(\ty\pl)$ and standard deviation $\sigma_{\rvx,c}(\ty\pl)$ of the feature map in channel $c$ are necessarily constant for all $\rvx$, equal to $\vbeta\uc$ and $\vgamma\uc$, respectively. This does not allow to express for some inputs $\rvx$ the presence of the concept at some position $\alpha$: $\mu_{\rvx,c}(\ty\pl)>0$, $\sigma_{\rvx,c}(\ty\pl)>0$; and for other inputs $\rvx$ the absence of the concept: $\mu_{\rvx,c}(\ty\pl)=0$, $\sigma_{\rvx,c}(\ty\pl)=0$.

\newpage
This latter example is not just anecdotal. Indeed, it is accepted that networks trained on high-level conceptual tasks have their initial layers related to low-level features and their deep layers related to high-level concepts \citep{Ulyanov16a,Gatys16}. This view explains the success of IN on the low-level task of style transfer with fixed \say{style} input, IN being then incorporated inside a generator network that only acts on the low-level features of the \say{content} input \citep{Ulyanov16b,Ulyanov17,Dumoulin17,Huang17b}. On high-level conceptual tasks, on the other hand, this view hints at a harmful tension between IN's constraints and the requirement of instance variability to express high-level concepts in deep layers. In short, with IN not only is the expressivity altered, \mbox{but the alteration of expressivity results in the exclusion of useful network mappings.}

\begin{figure}[t]
\centering
\includegraphics[width=1.\linewidth]{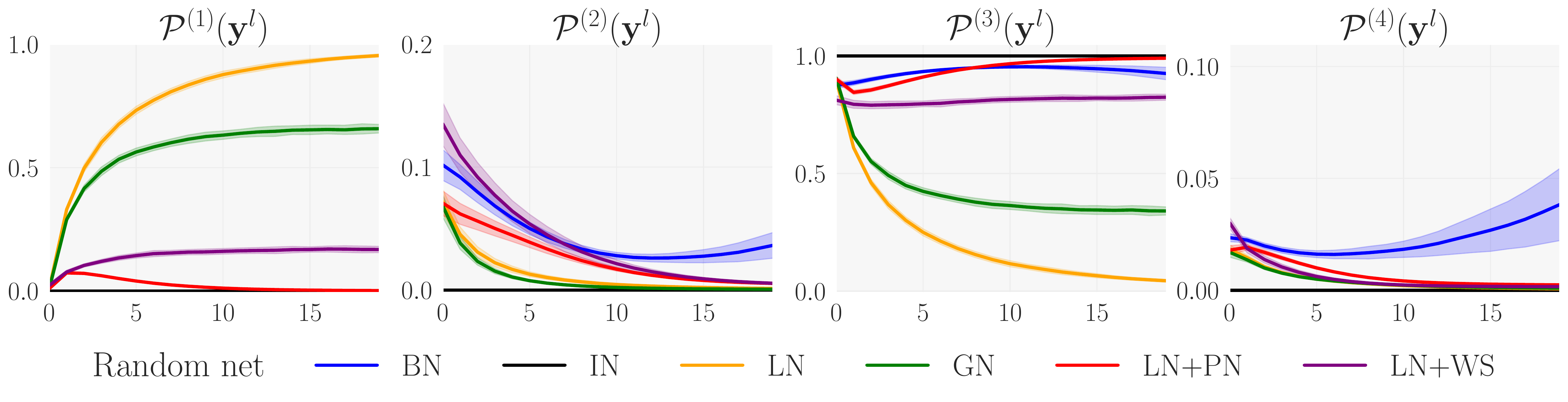} \\
\vspace{3pt}
\includegraphics[width=1.\linewidth]{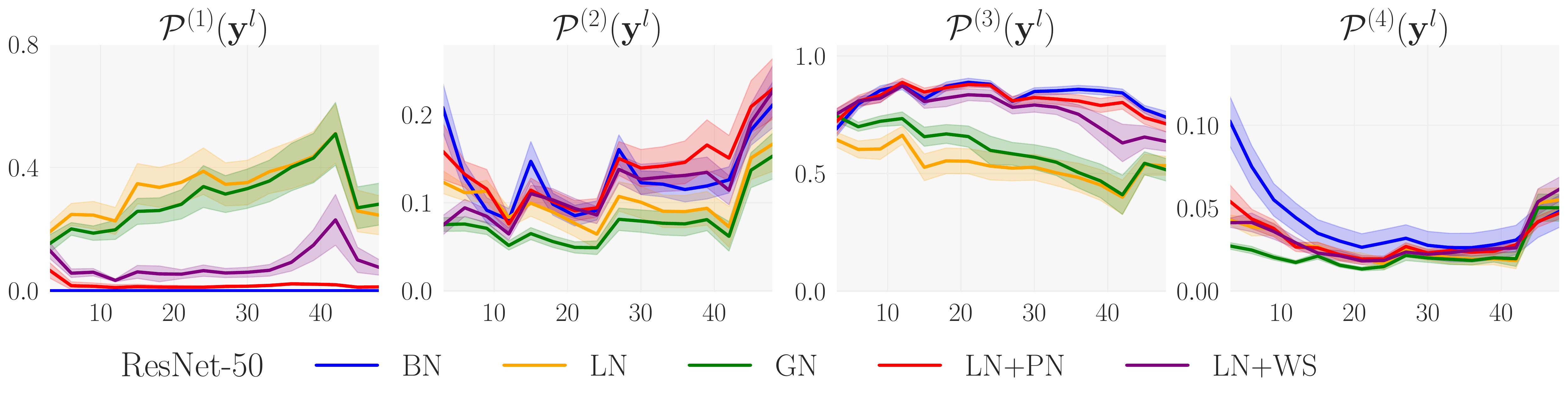}
\captionof{figure}{\label{fig:powers} \textbf{``Power plots''}. The power decomposition of $\rvy\pl$ is shown as a function of the depth $l$ for BN and different batch-independent norms: IN, LN, GN, LN+PN, LN+WS. Top row: Random net of \Defref{def:random} with $\phi=\relu$, widths $C\ul=1024$, kernel sizes $K\ul=3$, convolutions using periodic boundary conditions and components of $\vbeta\pl$, $\vgamma\pl$ sampled i.i.d. from $\mathcal{N}(0,0.2^2)$ and $\mathcal{N}(1,0.2^2)$, respectively. Bottom row: ResNet-50 (v2) throughout 100 epochs of training on ImageNet (IN is not shown in this row due to numerical stability issues). \mbox{Further experimental details are reported in \Appendref{sec:details_power_plots}.}}
\end{figure}

\paragraph{Failure modes with GN.} 
Group Normalization is a middle ground between the two extremes of LN ($G=1$ group) and IN ($G=C\ul$ groups at layer $l$). Networks with GN are consequently affected by both failure modes of \Thmref{thm:ln_collapse} and \Thmref{thm:in_variability}, but to a lesser extent than networks with LN for the failure mode of \Thmref{thm:ln_collapse} and IN for the failure mode of \Thmref{thm:in_variability}. On the one hand, since GN becomes equivalent to a constant scaling when group sizes become large, networks with GN are likely to be subject to channel-wise collapse. On the other hand, since GN can be seen as removing a fraction --- with an inverse dependence on the group size --- of $\smash{\pwrtwo(\rvy\pl)}$, $\smash{\pwrfour(\rvy\pl)}$ in between each layer, networks with GN are likely to lack variability in instance statistics.

The balance struck by GN between the two failure modes of LN and IN could still be beneficial, which would explain GN's superior performance in practice. It makes sense intuitively that being weakly subject to two failure modes is preferable over being strongly subject to one failure mode.

\paragraph{Experimental validation.} The \say{power plots} of \Figref{fig:powers} show the power decomposition of $\rvy\pl$ as a function of the depth $l$ in both a random net of \Defref{def:random} and ResNet-50 (v2) \citep{He16b} trained on ImageNet.\footnote{To ensure that activation steps are directly preceded by normalization steps (cf \Secref{sec:pn}), we always use v2 instantiations of ResNets and instantiations of ResNeXts having the same reordering of operations inside residual blocks as ResNets v2 (cf \Appendref{sec:experimental_setup}).} Looking at the cases of BN and IN, LN, GN in these power plots, we confirm that: (i) unlike networks with BN and IN, networks with LN, and to a lesser extent GN, are subject to channel-wise collapse as depth increases (see \Appendref{sec:details_power_plots_randomnets} for a precise verification of \Thmref{thm:ln_collapse}); (ii) networks with IN, and to a lesser extent GN, lack variability in instance statistics.

\section{Proxy Normalization}
\label{sec:pn}
With the goal of remedying the failure modes of \Secref{sec:failure_modes}, we now introduce our novel technique \emph{\say{Proxy Normalization} (PN)} that is at the core of our batch-independent normalization approach.

PN is incorporated into the neural network by replacing the activation step of \Eqref{eq:propagation3} by the following \say{proxy-normalized activation step} \mbox{(cf \Figref{fig:pna_schematic} and the practical implementation of \Appendref{sec:model_code}):}
\begin{align}
\tz\pl & = \PNAct(\rvy\pl), &\quad \forall \alpha,c: \quad \PNAct(\rvy\pl)_{\alpha,c} 
 = \frac{\phi\big(\vgamma\pl\uc \rvy\pl_{\alpha,c} + \vbeta\pl\uc\big) - \E_{Y\pl\uc}\big[\phi\big(\vgamma\pl\uc Y\pl\uc + \vbeta\pl\uc)\big]}{\sqrt{\Var_{Y\pl\uc}\big[\phi\big(\vgamma\pl\uc Y\pl\uc + \vbeta\pl\uc)\big]+\epsilon}},  \label{eq:propagation4}
\end{align}
where $\epsilon\geq0$ is a numerical stability constant and $Y\pl\uc$ is a Gaussian \say{proxy} variable of mean $\tilde{\vbeta}\pl\uc$ and variance $(1+\tilde{\vgamma}\pl\uc)^2$ depending on the additional parameters $\tilde{\vbeta}\pl, \tilde{\vgamma}\pl \in \R^{C\ul}$ of PN. Unless otherwise stated, we let $\tilde{\vbeta}\pl$,~$\tilde{\vgamma}\pl$ be nonzero but still subject to weight decay and thus close to zero. We show in \Appendref{sec:pn_noadditional} that it is also effective to let $\tilde{\vbeta}\pl$,~$\tilde{\vgamma}\pl$ be strictly zero and $Y\pl\uc \sim \mathcal{N}(0,1)$. 

If we assume (as hinted by \Secref{sec:failure_modes}) that only the affine transformation and the activation function~$\phi$ (i.e. the activation step) play a role in the aggravation at each layer $l$ of channel-wise collapse and channel imbalance, \mbox{then PN provides the following guarantee of channel-wise normalization.}

\begin{Theorem}[guarantee of channel-wise normalization in proxy-normalized networks (informal)]\emph{[\ref{sec:pn_iterative}]}
\label{thm:pn_iterative}
Fix a layer $l\in\{1,\dots,L\}$ and lift any assumptions on $\phi$ and $\rvx$'s distribution. Further suppose that the neural network implements \Eqref{eq:propagation1},~(\ref{eq:propagation2}),~(\ref{eq:propagation4}) at every layer up to depth $L$, with $\epsilon=0$ and \Eqref{eq:propagation2},~(\ref{eq:propagation4}) having nonzero denominators for all layers, inputs and channels.

Then both $\rvy\pl$ and $\tz\pl$ at layer $l$ are channel-wise normalized if the following conditions hold:
\begin{enumerate}[label=(\roman*)]
\item $\tz\plm$ at layer $l-1$ is channel-wise normalized;
\item The convolution and normalization steps at layer $l$ do not cause any aggravation of channel-wise collapse and channel imbalance;
\item $\rvy\pl$ at layer $l$ is channel-wise Gaussian and PN's additional parameters $\tilde{\vbeta}\pl$, $\tilde{\vgamma}\pl$ are zero.
\end{enumerate}
\end{Theorem}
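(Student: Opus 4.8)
The plan is to verify directly that both $\rvy\pl$ and $\tz\pl$ satisfy the two defining conditions of channel-wise normalization, namely $\pwrone\uc=0$ (vanishing dataset-scale mean) and $\pwr\uc=1$ (unit power) in every channel $c$. I would split the argument into two parts: first establishing the property for $\rvy\pl$ from conditions (i)--(ii), then propagating it through the proxy-normalized activation step of \Eqref{eq:propagation4} to $\tz\pl$ using condition (iii).

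For the first part, I would start from condition (i), which gives that $\tz\plm$ is channel-wise normalized, feed it through the convolution and normalization steps of \Eqref{eq:propagation1} and~(\ref{eq:propagation2}), and invoke condition (ii) to rule out any aggravation of collapse or imbalance across these two steps. The only extra ingredient needed is the layer-level identity $\pwr(\rvy\pl)=1$, which holds for any $\Norm\in\{\BN,\LN,\IN,\GN\}$ whenever the denominator of \Eqref{eq:propagation2} is nonzero [\ref{sec:powers_one}]. Absence of channel imbalance forces all the $\pwr\uc(\rvy\pl)$ to be equal, hence each equal to the layer average $\pwr(\rvy\pl)=1$, while absence of channel-wise collapse forces $\pwrone\uc(\rvy\pl)=0$; together these say that $\rvy\pl$ is channel-wise normalized.

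The heart of the proof is the second part. From the first part, $\mu\uc(\rvy\pl)^2=\pwrone\uc(\rvy\pl)=0$ and $\sigma\uc(\rvy\pl)^2=\pwr\uc(\rvy\pl)-\pwrone\uc(\rvy\pl)=1$, so each channel of $\rvy\pl$ has zero dataset-scale mean and unit variance. Condition (iii) upgrades this to the distributional statement that $\rvy\pl_{\alpha,c}$, viewed as a random variable over $(\rvx,\alpha)$, is exactly $\mathcal{N}(0,1)$; and with $\tilde{\vbeta}\pl=\tilde{\vgamma}\pl=0$ the Gaussian proxy satisfies $Y\pl\uc\sim\mathcal{N}(0,1)$ as well. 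Since the proxy distribution then coincides with the true channel-wise distribution of $\rvy\pl$, the proxy-based expectation and variance of $\phi(\vgamma\pl\uc Y\pl\uc+\vbeta\pl\uc)$ appearing in \Eqref{eq:propagation4} equal the true dataset-scale moments $\E_{\rvx,\alpha}[\phi(\ty\pl_{\alpha,c})]$ and $\Var_{\rvx,\alpha}[\phi(\ty\pl_{\alpha,c})]$ of the post-activation. Substituting these identities into \Eqref{eq:propagation4} with $\epsilon=0$ exhibits $\tz\pl_{\alpha,c}$ as exactly the post-activation $\phi(\ty\pl_{\alpha,c})$ standardized over $(\rvx,\alpha)$ within each channel, so that taking $\E_{\rvx,\alpha}[\cdot]$ and $\E_{\rvx,\alpha}[(\cdot)^2]$ yields $\pwrone\uc(\tz\pl)=0$ and $\pwr\uc(\tz\pl)=1$, establishing channel-wise normalization of $\tz\pl$ and thereby closing the inductive step from layer $l-1$ to layer $l$.

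I expect the main obstacle to be making the distributional matching rigorous. The whole argument hinges on the proxy expectations over $Y\pl\uc$ coinciding with the data expectations over $(\rvx,\alpha)$, which requires the channel-wise Gaussianity of condition (iii) to hold exactly rather than approximately, and requires the nonzero-denominator hypothesis on \Eqref{eq:propagation4} to guarantee that $\Var_{Y\pl\uc}[\phi(\vgamma\pl\uc Y\pl\uc+\vbeta\pl\uc)]>0$ so that the standardization is well defined. Once this matching is in place the remaining steps are routine first- and second-moment computations; the reliance on the idealized assumptions (ii)--(iii), which hold only approximately in a trained network, is precisely what renders the statement \say{informal}, and I would flag this explicitly.
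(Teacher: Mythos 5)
Your proposal is correct and follows essentially the same route as the paper's proof: part one combines the layer-level identity $\pwr(\rvy\pl)=1$ from [\ref{sec:powers_one}] with the no-collapse/no-imbalance hypothesis to normalize each $\pwr\uc(\rvy\pl)$, and part two observes that with $\tilde{\vbeta}\pl=\tilde{\vgamma}\pl=0$ the proxy $Y\pl\uc\sim\mathcal{N}(0,1)$ coincides in distribution with $\rvy\pl_{\alpha,c}$ over $(\rvx,\alpha)$, so \Eqref{eq:propagation4} with $\epsilon=0$ reduces to an exact channel-wise standardization of $\rvz\pl$. Your closing remarks on the required exactness of the Gaussianity and the nonzero-denominator hypothesis match the paper's formalization in \Appendref{sec:pn_iterative}.
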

\begin{minipage}{0.70\linewidth}
\paragraph{Our batch-independent approach: LN+PN or GN+PN.} At this point, we crucially note that PN: (i) is batch-independent; (ii) does not cause any alteration of expressivity. \emph{This leads us to adopt a batch-independent normalization approach that uses either LN or GN (with few groups) in the normalization step and that replaces the activation step by the proxy-normalized activation step (+PN)}. With such a choice of normalization step, we guarantee three of the benefits detailed in \Tableref{table:properties}: \say{scale invariance}, \say{control of activation scale} and \say{preservation of expressivity}. With the proxy-normalized activation step, we finally guarantee the fourth benefit of \say{avoidance of collapse} without \mbox{compromising any of the benefits provided by the normalization step.} \\[-5pt]

\paragraph{Experimental validation.} 
We confirm in \Figref{fig:powers} \emph{that BN's behavior is emulated in a fully batch-independent manner with our approach, LN+PN or GN+PN}. Indeed, the power plots of networks with LN+PN resemble the power plots of networks with BN. As desired, PN remedies LN's failure mode without incurring IN's failure mode. 
\end{minipage}
\hfill
\begin{minipage}{0.25\linewidth}
 \centering
\includegraphics[width=0.91\linewidth]{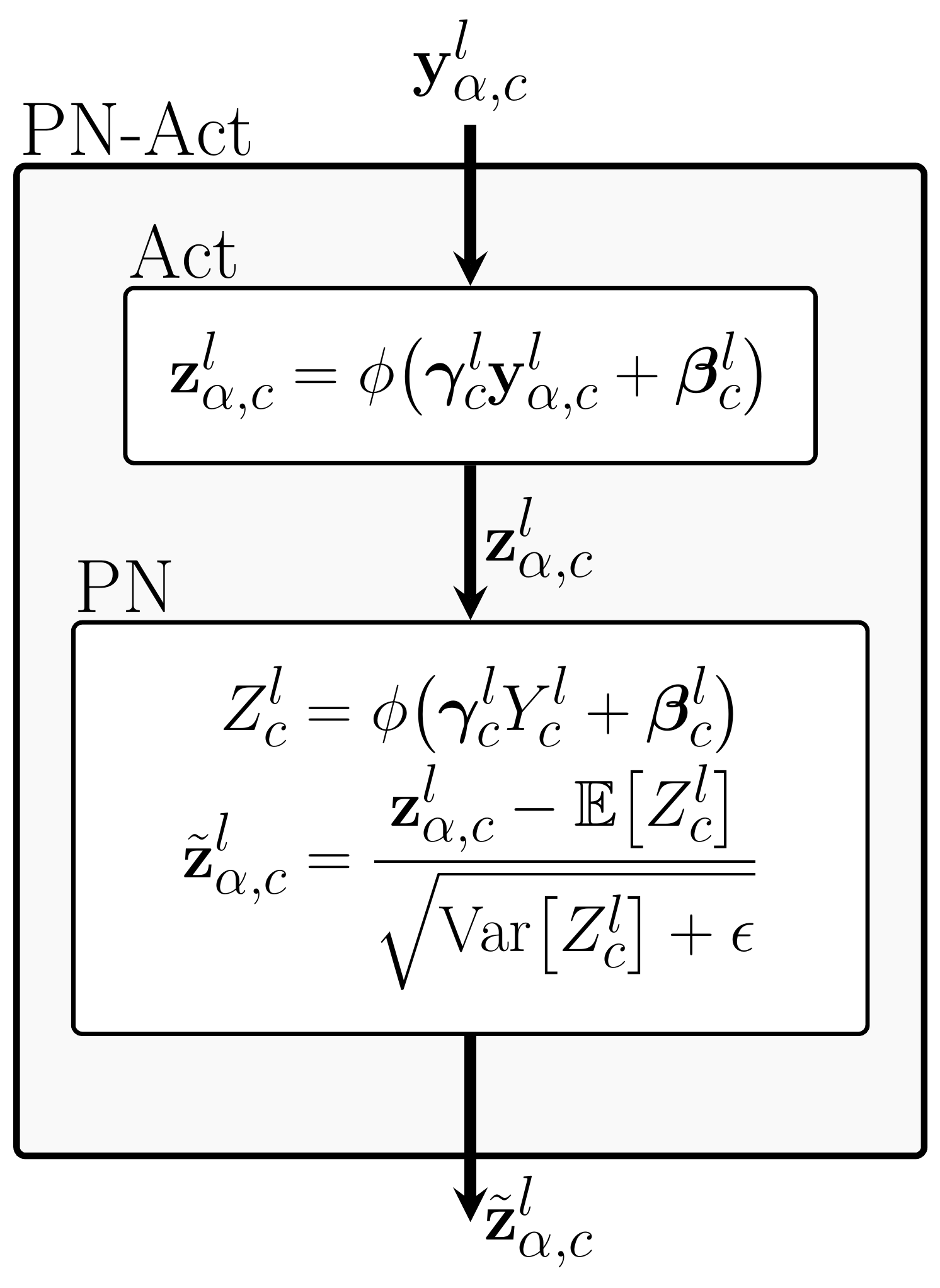}
\captionof{figure}{\label{fig:pna_schematic} \textbf{Plugging PN}. PN is plugged into the activation step $\Act$ to yield the proxy-normalized activation step $\PNAct$.}
\end{minipage}

\paragraph{Approach strength 1: Normalizing beyond initialization.} Our batch-independent approach maintains channel-wise normalization throughout training (cf \Figref{fig:powers}). In contrast, many alternative approaches, either explicitly or implicitly, focus on initialization \cite{Arpit16,Klambauer17,Huang17a,Qiao19,Rao20,Brock21a,Brock21b}. Centered Weight Normalization \citep{Huang17a}, WS \cite{Qiao19,Brock21a,Brock21b} or PreLayerNorm \cite{Rao20} notably rely on the implicit assumption that different channels have the same channel-wise means after the activation function $\phi$. While valid in networks at initialization with $\vbeta\pl=0$ and $\vgamma\pl=1$ (cf \Appendref{sec:details_power_plots_resnets}), this assumption becomes less valid as the affine transformation starts deviating from the identity. We see in \Figref{fig:powers} that networks with LN+WS are indeed less effective in maintaining channel-wise normalization, both in networks with random $\vbeta\pl$, $\vgamma\pl$ (top row) and in networks considered throughout training (bottom~row). Such a coarser channel-wise normalization might in turn lead to a less effective use of model capacity and a degradation of the conditioning of the loss landscape \citep{Page19,Ghorbani19}.

\paragraph{Approach strength 2: Wide applicability.} Our batch-independent approach matches BN with consistency across choices of model architectures, model sizes and activation functions (cf \Secref{sec:results}). Its only restriction, namely that activation steps should be immediately preceded by normalization steps for $\rvy\pl$ and its associated proxy $Y\pl$ to be at the same scale, has easy workarounds. Applicability restrictions might be more serious with alternative approaches: (i) alternative approaches involving a normalization in weight space \citep{Salimans16,Huang17a,Qiao19,Ruff19,Brock21a} might be ill-suited to architectures with less \say{dense} kernels such as EfficientNets \citep{Brock21a}; (ii) approaches involving the tracking of activation statistics \citep{Arpit16,Laurent17,Shekhovtsov18b} might be nontrivial to apply to residual networks \cite{Shang17,Gitman17}; (iii) approaches involving a change of \mbox{activation function \citep{Klambauer17,Singh19b,Liu20} might precisely restrict the choice of activation function.}

\paragraph{Approach strength 3: Ease of implementation.} Our approach is straightforward to implement when starting from a batch-normalized network. It simply requires: (i) replacing all BN steps with LN or GN steps, and (ii) replacing all activation steps with proxy-normalized activation steps. The proxy-normalized \mbox{activation steps themselves are easily implemented (cf \Appendref{sec:model_code}).} 

\section{Results}
\label{sec:results}
We finally assess the practical performance of our batch-independent normalization approach. While we focus on ImageNet \citep{Deng09} in the main text of this paper, we report in \Appendref{sec:details_experiments} some additional results on CIFAR \citep{Krizhevsky09}. \mbox{In \Appendref{sec:details_experiments}, we also provide all the details on our experimental setup.}

\paragraph{Choices of regularization and batch size.} As mentioned in \Secref{sec:bn_properties}, we perform all our experiments with different degrees of regularization to disentangle normalization's effects from regularization's effects. We detail all our choices of regularization in Appendices \ref{sec:experimental_setup} and \ref{sec:alternative_regul}.

In terms of batch size, we set: (i) the \say{global batch size} in between weight updates to the same value independently of the choice of norm; (ii) the \say{normalization batch size} to a near-optimal value with BN. These choices enable us to be conservative when concluding on a potential advantage of our approach over BN at small batch size. Indeed, while the performance of batch-independent approaches would remain the same or slightly improve at small batch size \cite{Goyal17,Masters18}, the performance of BN would eventually \mbox{degrade due to a regularization that eventually becomes excessive \cite{Ioffe17,Wu18,Ying18,Singh19b,Summers20,Yan20}.} 

\vspace{4pt}
\begin{minipage}{0.53\linewidth}
\paragraph{Effect of adding PN.} We start as a first experiment by analyzing the effect of adding PN on top of various norms in ResNet-50 (RN50). As visible in \Tableref{table:rn50}, PN is mostly beneficial when added on top of LN or GN with a small number of groups $G$. The consequence is that the optimal $G$ shifts to lower values in GN+PN compared to GN. This confirms the view that \mbox{PN's benefit lies in addressing LN's failure mode}.\\[-5pt]

It is also visible in \Tableref{table:rn50} that PN does not provide noticeable benefits to BN. This confirms again the view that PN's benefit lies in addressing the problem --- not present with BN --- of channel-wise collapse. Importantly, since PN does not entail effects other than normalization that could artificially boost the performance, \emph{GN+PN can be compared in a fair way to BN when assessing the effectiveness of normalization}. 
\end{minipage}
\hfill
\begin{minipage}{0.42\linewidth}
   \centering
  \captionof{table}{\label{table:rn50} \textbf{Effect of adding PN}. ResNet-50 is trained on ImageNet with BN and LN, GN, GN+WS with $G$ groups, either without or with PN added on top (plain vs. PN). Results are formatted as X~/~Y with X, Y the validation accuracies (\%) without and with extra regularization, respectively.}
\scalebox{0.86}
{\begin{tabular}{lccc}
\toprule
       &  &  \multicolumn{2}{c}{RN50} \\
      &  $G$ &                                   plain &                                       +PN \\
\midrule
BN      &    &  \meanbf{76.3}{0.1} / \mean{75.8}{0.2} &  \mean{76.2}{0.1} / \mean{76.0}{0.1} \\
\midrule
LN      &  1  &  \mean{74.5}{0.0} / \mean{74.6}{0.1} &  \mean{75.9}{0.1} / \mean{76.5}{0.0} \\
GN      &  8  &  \mean{75.4}{0.1} / \mean{75.4}{0.1} &  \mean{76.3}{0.1} / \meanbf{76.7}{0.0} \\
GN      &  32  &  \mean{75.4}{0.1} / \mean{75.3}{0.1} &  \mean{75.8}{0.2} / \mean{76.1}{0.1} \\
\midrule
GN+WS   &  8  &  \mean{76.6}{0.0} / \mean{76.7}{0.1} &  \mean{76.8}{0.1} / \meanbf{77.1}{0.1} \\
\bottomrule
\end{tabular}}
\end{minipage}
\vspace{4pt}

This is unlike WS which has been shown to improve BN's performance \citep{Qiao19}. In our results of \Tableref{table:rn50}, the high performance of GN+WS without extra regularization and the fact that PN still provides benefits to GN+WS suggests that: (i) on top of its normalization benefits, WS induces a form of regularization; (ii) GN+WS is still not fully optimal in terms of normalization.

\paragraph{GN+PN vs. BN.} Next, we turn to comparing the performance of our batch-independent approach, GN+PN, to that of BN across a broad range of models trained on ImageNet. As visible in \Figref{fig:results_all} and Tables~\ref{table:rn_rnx},~\ref{table:en}, GN+PN outperforms BN in ResNet-50 (RN50) and ResNet-101 (RN101) \citep{He16b}, matches BN in \mbox{ResNeXt-50} (RNX50) and ResNeXt-101 (RNX101) \citep{Xie17}, and matches BN in EfficientNet-B0 \mbox{(EN-B0)} and EfficientNet-B2 (EN-B2), both in the original variant with depthwise convolutions and expansion ratio of 6 \citep{Tan19} and in an approximately parameter-preserving variant (cf \Appendref{sec:experimental_setup}) with group convolutions of group size 16 and expansion ratio of 4 \citep{Masters21}. \emph{In short, our batch-independent normalization approach, GN+PN, matches BN not only in behavior but also in performance.}

With regard to matching BN's performance with alternative norms, various positive results have been reported in ResNets and ResNeXts \citep{Chiley19,Yan20,Luo19b,Luo19c,Liu20,Brock21a,Shao20} but only a limited number in EfficientNets \citep{Liu20}. In EfficientNets, we are notably not aware of any other work showing that BN's performance can be matched with a batch-independent approach. As a confirmation, we assess the performance of various existing batch-independent approaches: GN \citep{Wu18}, GN+WS \citep{Qiao19}, Evo-S0 \citep{Liu20}, FRN+TLU \mbox{\citep{Xiang17,Singh19b}}. Unlike GN+PN, none of these approaches is found in \Tableref{table:en} to match BN with consistency.

\paragraph{Normalization and regularization.} Our results suggest that while an efficient normalization is not sufficient in itself to achieve good performance on ImageNet, it is still a necessary condition, together with regularization. In our results, it is always with extra regularization that GN+PN yields the most benefits. Importantly, the fact that GN+PN consistently leads to large improvements in training accuracy (cf \Appendref{sec:training_acc_imagenet}) suggests that \emph{additional benefits would be obtained on larger datasets without the requirement of relying on regularization} \citep{Kolesnikov20,Brock21b}.

\vspace{4pt}
\begin{minipage}{0.34\linewidth}
 \centering
\includegraphics[width=1.\linewidth]{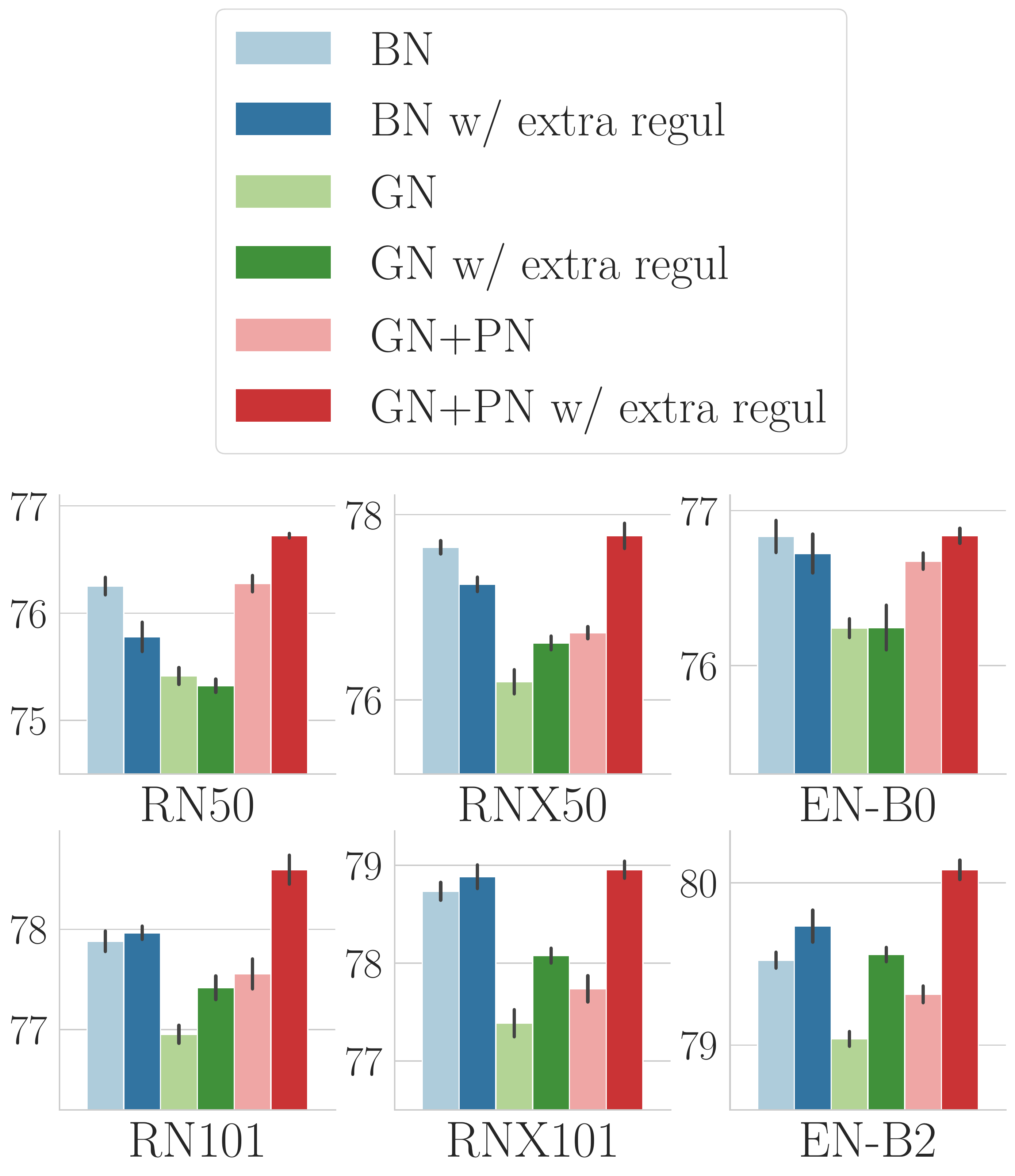}
\captionof{figure}{\label{fig:results_all} \textbf{BN vs. GN, GN+PN}. Validation accuracies (\%) of ResNets, ResNeXts and EfficientNets trained on ImageNet with BN and GN, GN+PN, without and with extra regularization. EfficientNets are considered in the variant \mbox{with group convolutions \citep{Masters21}.}}
\end{minipage}
\hfill
\begin{minipage}{0.61\linewidth}
   \centering
  \captionof{table}{\label{table:rn_rnx} \textbf{BN vs. GN, GN+PN}. ResNets and ResNeXts are trained on ImageNet with BN and GN, GN+PN. Results are formatted as in \Tableref{table:rn50}.}
\scalebox{0.85}{
\begin{tabular}{lcccc}
\toprule
  &                                     RN50 &                                    RN101 &                                    RNX50 &                                   RNX101 \\
\midrule
BN    &  \meanbf{76.3}{0.1} / \mean{75.8}{0.2} &  \mean{77.9}{0.1} / \meanbf{78.0}{0.1} &  \meanbf{77.6}{0.1} / \mean{77.2}{0.1} &  \mean{78.7}{0.1} / \meanbf{78.9}{0.1} \\
\midrule
GN    &  \mean{75.4}{0.1} / \mean{75.3}{0.1} &  \mean{77.0}{0.1} / \mean{77.4}{0.1} &  \mean{76.2}{0.2} / \mean{76.6}{0.1} &  \mean{77.4}{0.2} / \mean{78.1}{0.1} \\
GN+PN &  \mean{76.3}{0.1} / \meanbf{76.7}{0.0} &  \mean{77.6}{0.2} / \meanbf{78.6}{0.2} &  \mean{76.7}{0.1} / \meanbf{77.8}{0.2} &  \mean{77.7}{0.2} / \meanbf{79.0}{0.1} \\
\bottomrule
\end{tabular}} 
\\[15pt]
\centering
  \captionof{table}{\label{table:en} \textbf{BN vs. batch-independent approaches}. EfficientNets are trained on ImageNet with BN and various batch-independent approaches. \mbox{Results are formatted as in \Tableref{table:rn50}.}}
\scalebox{0.85}{
\begin{tabular}{lcccc}
\toprule
  &   \multicolumn{2}{c}{depthwise convs \citep{Tan19}} & \multicolumn{2}{c}{group convs \citep{Masters21}} \\
    \cmidrule(lr){2-3} \cmidrule(lr){4-5} 
  &                                 EN-B0 &                                 EN-B2 &                                EN-B0 &                                EN-B2 \\
\midrule
BN      &  \mean{76.9}{0.1} / \meanbf{77.2}{0.1} &  \mean{79.4}{0.0} / \meanbf{80.0}{0.0} &  \meanbf{76.8}{0.1} / \mean{76.7}{0.2} &  \mean{79.5}{0.1} / \meanbf{79.7}{0.1} \\
\midrule
GN      &  \mean{76.2}{0.1} / \mean{76.2}{0.1} &  \mean{78.9}{0.1} / \mean{79.4}{0.1} &  \mean{76.2}{0.1} / \mean{76.2}{0.2} &  \mean{79.0}{0.1} / \mean{79.6}{0.1} \\
GN+PN   &  \mean{76.8}{0.0} / \meanbf{77.0}{0.1} &  \mean{79.3}{0.1} / \meanbf{80.0}{0.1} &  \mean{76.7}{0.1} / \meanbf{76.8}{0.1} &  \mean{79.3}{0.1} / \meanbf{80.1}{0.1} \\
Evo-S0  &  \mean{75.8}{0.1} / \mean{75.8}{0.2} &  \mean{78.5}{0.1} / \mean{78.7}{0.1} &  \mean{76.2}{0.0} / \mean{76.5}{0.1} &  \mean{78.9}{0.0} / \mean{79.6}{0.0} \\
GN+WS   &  \mean{74.2}{0.1} / \mean{74.1}{0.1} &  \mean{77.8}{0.0} / \mean{77.8}{0.1} &  \mean{76.2}{0.1} / \mean{76.3}{0.1} &  \mean{79.2}{0.1} / \mean{79.4}{0.1} \\
FRN+TLU &  \mean{75.7}{0.1} / \mean{75.7}{0.2} &  \mean{78.4}{0.1} / \mean{78.8}{0.1} &  \mean{74.9}{0.2} / \mean{75.1}{0.1} &  \mean{78.2}{0.1} / \mean{78.6}{0.1} \\
\bottomrule
\end{tabular}}
\end{minipage}
\vspace{2pt}

\section{Summary and broader impact}
We have introduced a novel framework to finely characterize the various neural network properties affected by the choice of normalization. Using this framework, we have shown that while BN's beneficial properties are not retained when solely using the prototypical batch-independent norms, they are retained when combining some of these norms with the technique hereby introduced of Proxy Normalization. We have demonstrated on an extensive set of experiments that our batch-independent normalization approach \mbox{consistently matches BN in both behavior and performance.}

The main implications of this work could stem from the unlocked possibility to retain BN's normalization benefits while removing batch dependence. Firstly, our approach could be used to retain BN's normalization benefits while alleviating the burden of large activation memory stemming from BN's requirement of sufficiently large batch sizes. This is expected to be important in memory-intensive applications such as object detection or image segmentation, but also when using A.I. accelerators that leverage local memory to provide extra acceleration and energy savings in exchange for tighter memory constraints. Secondly, our approach could be used to retain BN's normalization benefits while avoiding BN's regularization when the latter is detrimental. As discussed in \Secref{sec:results}, this is expected \mbox{to be important in the context --- that will likely be prevalent in the future --- of large datasets.}

\begin{ack}
We are thankful to Simon Knowles, Luke Hudlass-Galley, Luke Prince, Alexandros Koliousis, Anastasia Dietrich and the wider research team at Graphcore for the useful discussions and feedbacks. We are also thankful to the anonymous reviewers for their insightful comments that helped improve the paper.
\end{ack}

\bibliography{proxy_norm}
\bibliographystyle{unsrt}


\newpage

\appendix 
\section{Experimental details}
\label{sec:details_experiments}

\subsection{Experimental setup}
\label{sec:experimental_setup}

\paragraph{Architectures\textnormal{.}} As stated in \Secref{sec:pn}, we use v2 instantiations of ResNets and instantiations of ResNeXts having the same reordering of operations inside residual blocks as ResNets v2. Practically, our instantiations of ResNeXts are obtained by starting from ResNets v2 and applying the same changes in bottleneck widths and number of groups in $3\times3$ convolutions as the changes yielding ResNeXts v1 from ResNets~v1, using a cardinality $32$ and a dimension $4$ \cite{Xie17}. 

As also stated in \Secref{sec:results}, we consider two variants of EfficientNets: (i) the original variant with one depthwise convolution per MBConv block and with expansion ratio of 6 \citep{Tan19}, and (ii) a variant with each depthwise convolution replaced by a group convolution of group size 16 and with expansion ratio of~4 \citep{Masters21}. Compared to the original variant, the variant with group convolutions has roughly the same number of parameters and slightly more floating point operations (FLOPs) (cf \Tableref{table:variants}), but it is executed more efficiently on common A.I. accelerators. Interestingly, the fact that GN+WS does not perform well even in this variant (cf Tables~\ref{table:en},~\ref{table:en_detailed} and \Figref{fig:results_en}) suggests that the problem related to the removal of degrees of freedom by WS goes beyond just depthwise convolutions \citep{Brock21a}.

\begin{table}[H]
\centering
\caption{\textbf{Number of parameters and number of FLOPs in EfficientNets}. Quantities are reported for EfficientNets-B0 (EN-B0) and EfficientNets-B2 (EN-B2) in the variant with depthwise convolutions and expansion ratio of 6 (left) and in the variant with group convolutions of group size 16 and expansion ratio of 4 (right).}
\label{table:variants}
\scalebox{1}{\begin{tabular}{lcccc}
\\[-9pt]
\toprule
  &   \multicolumn{2}{c}{depthwise convs} & \multicolumn{2}{c}{group convs} \\
    \cmidrule(lr){2-3} \cmidrule(lr){4-5} 
  & EN-B0
  & EN-B2
  & EN-B0
  & EN-B2 \\
\midrule
Number of parameters  & 5.3M & 9.1M & 5.9M & 9.5M \\
Number of FLOPs       & 0.4B & 1.0B & 0.6B & 1.5B \\
\bottomrule
\end{tabular}}
\end{table}

\paragraph{PN.} We always set PN's numerical stability constant to $\epsilon=0.03$, as we found smaller $\epsilon$ can lead to suboptimal performance. We use $200$ samples uniformly sampled in probability in the proxy distribution (cf \Appendref{sec:model_code}).

In all networks, we disable the scaling part of PN in the proxy-normalized activation step just before the final mean pooling. This is to avoid an alteration of the effective learning rate. An alternative option would be to altogether remove PN before the final mean pooling. 

In EfficientNets, we disable PN in squeeze-excite (SE) blocks given that no normalization step precedes each activation step in these blocks. When PN's additional parameters $\tilde{\vbeta}\pl$,~$\tilde{\vgamma}\pl$ are included, we replace the final affine transformation of each MBConv block by a single channel-wise scaling (i.e. we only keep the scaling parameter in the transformation). When PN's additional parameters $\tilde{\vbeta}\pl$,~$\tilde{\vgamma}\pl$ are omitted, on the other hand, we leave this final affine transformation as it is.

\paragraph{WS.} We set the numerical stability constant of WS to 0. 

In all networks, we disable WS in fully-connected layers and in SE blocks. In ResNets and ResNeXts, we add an extra scale parameter after the final convolution of each residual block when using WS.

\paragraph{Evo-S0.} In EfficientNets, the final norm and affine transformation in each MBConv block are replaced by a single affine transformation when using Evo-S0.

\paragraph{Initialization.} We initialize the affine transformation's parameters as $\vbeta\pl=0$, $\vgamma\pl=1$, and PN's additional parameters as $\tilde{\vbeta}\pl=0$, $\tilde{\vgamma}\pl=0$ when these additional parameters are included. We initialize weights $\vomega\pl$ by sampling from truncated normal distributions with an inverse square root scaling with respect to fan-in (expect for some kernels in EfficientNets where the scaling is with respect to fan-out).

\paragraph{ResNets and ResNeXts trained on ImageNet.}
We train for 100 epochs with SGD with a momentum of $0.9$ and a batch size of 256. We start with a learning rate of 0.1 after a linear warmup over the first 5 epochs \citep{Goyal17}, and we decrease this learning rate four times at the epochs 30, 60, 80 and 90, each time by a factor $10$. We apply weight decay with a strength of $10^{-4}$ to all parameters including the additional parameters $\tilde{\vbeta}\pl$, $\tilde{\vgamma}\pl$ of PN and the channel-wise scale and shift parameters $\vbeta\pl$,~$\vgamma\pl$ (this is sensible as $\phi=\relu$ is positive homogeneous).

We set the norm's numerical stability constant to $10^{-6}$ and, unless otherwise specified, we set the number of groups to $G=8$ when using GN+PN and to $G=32$ when using GN without PN. When using BN, we compute BN's statistics over 32 inputs $\rvx$ and we compute moving average statistics by exponentially weighted average with decay factor $0.97$.

For the pre-processing, we follow \cite{He16b}. When using extra regularization, we use label smoothing with factor $0.1$ \citep{Szegedy16}, dropout with rate $0.1$ \citep{Srivastava14} and stochastic depth with rate $0.05$ \citep{Huang16}. As the only exception, when changing the choice of the extra regularization in \Appendref{sec:alternative_regul}, we use Mixup with strength $0.1$ \citep{Zhang18} in all networks, and in ResNet-101 and ResNeXt-101, we additionally use a variant of CutMix \cite{Yun19} that samples $U_0\sim\operatorname{Uniform}(0,1)$ and $U_1\sim\operatorname{Uniform}(e^{-4},1)$ and sets the combination ratio as $\lambda=1$ if $U_0\leq 0.435$ and $\lambda=1+\frac{1}{4}\log(U_1)$ otherwise.

While we use float-16 to store and process intermediate activations (except in normalization steps and PN's statistics computation), model parameters are still stored and updated in float-32. Each time we provide a result, the mean and standard deviation are computed over 3 independent runs, at the final epoch of each run. As the only exception, the mean and $1\sigma$ intervals in the power plots of Figures~\ref{fig:powers},~\ref{fig:powers_init} are computed by \say{pooling together} either all 100 epochs in 5 independent runs (Figure~\ref{fig:powers}) or the initialization state in 5 independent runs (Figure~\ref{fig:powers_init}).

\paragraph{EfficientNets with batch-independent norms trained on ImageNet.} Our experimental setup closely follows \cite{Tan19}. We train for 350 epochs with RMSProp with a batch size of 768. We start with a learning rate of $768 \times 2^{-14}$ (i.e. using a linear scaling \citep{Masters18}) after a linear warmup over the first 5 epochs \citep{Goyal17}, and we decay the learning rate exponentially by a factor 0.97 every 2.4 epochs. In RMSProp, we use a momentum of 0.9, a decay of $1.0 - (768 \times 2^{-14})$ and a numerical stability constant of $10^{-3}$. We apply weight decay with a strength of $10^{-5}$ on the convolutional weights and the additional parameters $\tilde{\vbeta}\pl$, $\tilde{\vgamma}\pl$ of PN, but not on the other channel-wise parameters (this is sensible as $\phi=\swish$ is not positive homogeneous).

We set the norm's numerical stability constant to $10^{-3}$ and we set the number of groups to $G=4$ when using GN or Evo-S0. 

For the baseline pre-processing, we follow \cite{Tan19}. In terms of regularization, we always use label smoothing with factor $0.1$ \citep{Szegedy16}, dropout with rate $0.2$ \citep{Srivastava14} and stochastic depth with rate starting at $0.2$ in the first MBConv block and decaying to zero linearly with the depth of the MBConv block \citep{Huang16}. When using extra regularization, we use Mixup with strength $0.1$ \citep{Zhang18} in all networks, and in EfficientNets-B2, we additionally use a variant of CutMix \cite{Yun19} that samples $U_0\sim\operatorname{Uniform}(0,1)$ and $U_1\sim\operatorname{Uniform}(e^{-4},1)$ and sets the combination ratio as $\lambda=1$ if $U_0\leq 0.435$ and $\lambda=1+\frac{1}{4}\log(U_1)$ otherwise.

While we use float-16 to store and process intermediate activations (except in normalization steps and PN's statistics computation), model parameters are still stored and updated in float-32. Each time we provide a result, the mean and standard deviation are computed over 3 independent runs. For each run, performance is evaluated at the final epoch, with model parameters obtained by exponentially weighted average with decay factor $0.97$ over checkpoints from previous epochs.

\paragraph{EfficientNets with BN trained on ImageNet.} For these experiments, we run the public EfficientNet repository with the settings recommended in the repository.\footnote{\url{https://github.com/tensorflow/tpu/tree/master/models/official/efficientnet}} When considering the variant with group convolutions, our only modifications consist in (i) replacing depthwise convolutions with group convolutions of group size 16, and (ii) changing the expansion ratio from 6 to 4.

In addition to BN's inherent regularization, these runs always incorporate label smoothing \citep{Szegedy16}, dropout \citep{Srivastava14} and stochastic depth \citep{Huang16}. The runs with extra regularization additionally incorporate AutoAugment \citep{Cubuk19}.

Each time we provide a result, the mean and standard deviation are computed over 3 independent runs.

\paragraph{ResNets trained on CIFAR-10 and CIFAR-100 (cf \Appendref{sec:cifar_results}).}
We train for 160 epochs with SGD with a momentum of $0.9$ and a batch size of 128. We start with a learning rate of 0.1 after a linear warmup over the first 5 epochs \citep{Goyal17}, and we decrease this learning rate two times at the epochs 80 and 120, each time by a factor $10$. We apply weight decay with a strength of $10^{-4}$ to all parameters including the additional parameters $\tilde{\vbeta}\pl$, $\tilde{\vgamma}\pl$ of PN and the channel-wise scale and shift parameters $\vbeta\pl$,~$\vgamma\pl$ (this is sensible as $\phi=\relu$ is positive homogeneous). 

We set the norm's numerical stability constant to $10^{-6}$ and we set the number of groups to $G=4$ when using GN. When using BN, we compute BN's statistics over 128 inputs $\rvx$ and we compute moving average statistics by exponentially weighted average with decay factor $0.97$.

For the pre-processing, we follow \cite{He16b}. When using extra regularization, we use label smoothing with factor $0.1$ \citep{Szegedy16}, dropout with rate $0.25$ \citep{Srivastava14} and stochastic depth with rate $0.1$ \citep{Huang16}. 

We use float-16 to store and process intermediate activations (except in normalization steps and PN's statistics computation) and to store and update model parameters. Each time we provide a result, the mean \mbox{and standard deviation are computed over 10 independent runs, at the final epoch of each run.}

\paragraph{Random nets.} We consider random nets following \Defref{def:random}. For the cases of BN, IN, LN, GN, random nets implement \Eqref{eq:propagation1},~(\ref{eq:propagation2}),~(\ref{eq:propagation3}) at every layer $l$. For the case of LN+PN, we replace the activation step of \Eqref{eq:propagation3} by the proxy-normalized activation step of \Eqref{eq:propagation4}. For the case of LN+WS, we add a step of kernel standardization before the convolution of \Eqref{eq:propagation1}. In all cases, convolutions \mbox{use periodic boundary conditions to remain consistent with the assumptions of \Thmref{thm:ln_collapse}.}

We set the activation function to $\phi=\relu$, widths to $C\ul=1024$, kernel sizes to $K\ul=3$. 

We sample the components of the affine transformation's parameters $\vbeta\pl$, $\vgamma\pl$ i.i.d. from $\nu_{\vbeta}=\mathcal{N}(0,0.2^2)$ and $\nu_{\vgamma}=\mathcal{N}(1,0.2^2)$, respectively. This yields $\beta^2=0.2^2$, $\gamma^2=1^2+0.2^2$ and $\rho=\frac{1^2+0.2^2}{1^2+0.2^2+0.2^2}~\approx~0.963$ in \Defref{def:random}. We sample the components of weight parameters $\vomega\pl$ i.i.d. from truncated normal distributions with $\frac{1}{\sqrt{C\ul}}$ scaling. We set PN's additional parameters $\tilde{\vbeta}\pl$,~$\tilde{\vgamma}\pl$ to 0.

We set the norm's numerical stability constant to $10^{-6}$ and we set the number of groups to $G=128$ when using GN to roughly preserve group sizes compared to ResNet-50. We use a batch size of 128 and we compute BN's statistics over all 128 inputs $\rvx$ in the mini-batch when using BN.

We use CIFAR-10 as the dataset $\D$ and we follow \cite{He16b} for the pre-processing. To alleviate the memory burden, \mbox{we add a downsampling by setting the stride to $2$ in the first convolution of the network.}

We use float-32 to store and process intermediate activations. Each time we provide a result, the mean and $1\sigma$ intervals are computed over 50 independent realizations.

\paragraph{A.I. accelerators\textnormal{.}} We run all our experiments with batch-independent norms on Graphcore's IPUs.

\subsection{Additional details on power plots}
\label{sec:details_power_plots}

\subsubsection{Power plots in random nets}
\label{sec:details_power_plots_randomnets}

\paragraph{Additional experimental details.} We obtain the power plots in random nets using the experimental setup described in \Appendref{sec:experimental_setup} for random nets. We compute the terms $\pwrone(\rvy\pl)$, $\pwrtwo(\rvy\pl)$, $\pwrthree(\rvy\pl)$, $\pwrfour(\rvy\pl)$ (as well as the additional terms from \Figref{fig:verif_ln_collapse}) for each layer $l$ using the 128 randomly sampled inputs $\rvx$ in the mini-batch as a proxy for the full dataset $\D$. 

While we set the total depth to $L=200$, we show only the first $20$ layers in \Figref{fig:powers} to facilitate a side-by-side comparison with ResNet-50. Indeed, while the \say{effective} depth is smaller than the \say{computational} depth in ResNet-50 (cf \Appendref{sec:details_power_plots_resnets}), \mbox{the two notions of depth coincide in random nets.}

\paragraph{Verification of \Thmref{thm:ln_collapse}.} The case of random nets with LN enables us to precisely verify \Thmref{thm:ln_collapse}. We provide this verification in the left and center subplots of \Figref{fig:verif_ln_collapse}.

In the left subplot of \Figref{fig:verif_ln_collapse}, we show $\pwr(\rvy\pl)-\pwrone(\rvy\pl)$ (mean and $1\sigma$ interval) and the upper bound $\rho\plm=\Big(\frac{1^2+0.2^2}{1^2+0.2^2+0.2^2}\Big)\plm$ from \Thmref{thm:ln_collapse} for depths $l$ up to $L=200$. We confirm that $\pwr(\rvy\pl)-\pwrone(\rvy\pl)$ is upper bounded with high probability by $\rho\plm$ as predicted by \Thmref{thm:ln_collapse}. The rate of decay of $\pwr(\rvy\pl)-\pwrone(\rvy\pl)$ is initially above the prediction of \Thmref{thm:ln_collapse} due to the aggravating effect of $\phi=\relu$. In very deep layers ($l\gg 1$), this rate of decay ends up very slightly below the prediction of \Thmref{thm:ln_collapse} due to the facts that: (i) $\phi$ becomes effectively close to channel-wise linear; (ii) the channel-wise collapse is slightly mitigated by LN in the case of a finite width $C\ul=1024$.

In the center subplot of \Figref{fig:verif_ln_collapse}, we show $\pwr(\rvy\pl)$ (mean and $1\sigma$ interval) for depths $l$ up to $L=200$. We confirm that $\pwr(\rvy\pl)$ is with high probability very close to one.

\paragraph{Quantification of channel-wise linearity.} To confirm the connection between channel-wise collapse and channel-wise linearity, we finally report the evolution with depth of an additional measure of channel-wise linearity. In the right subplot of \Figref{fig:verif_ln_collapse}, we show the measure $\pwr(\phi(\ty\pl)-\wtz\pl) / \pwr(\phi(\ty\pl))$ (mean and $1\sigma$ interval) for depths $l$ up to $L=200$, with $\wtz\pl$ the channel-wise linear best-fit of $\phi(\ty\pl)$ using $\ty\pl$, that is defined in \Eqref{eq:linearized1}, (\ref{eq:linearized2}). We confirm that deep in random nets, layers are effectively: (i) very close to channel-wise linear with LN; (ii) close to channel-wise linear with GN.

\begin{figure}[H]
\centering
\includegraphics[width=1.\linewidth]{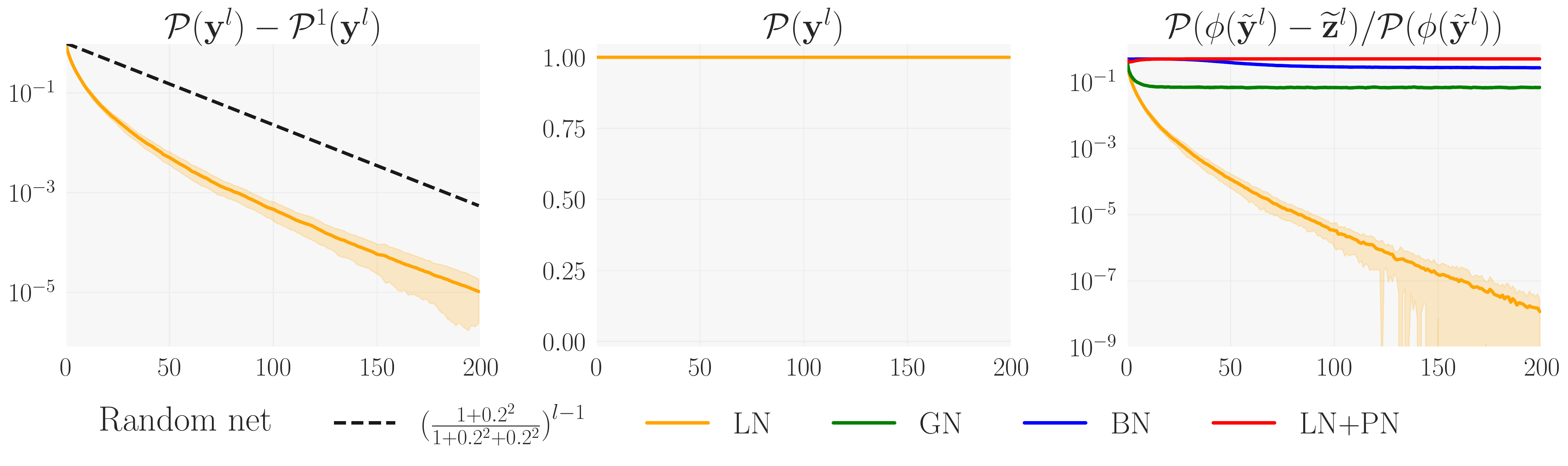}
\caption{\textbf{Verification of \Thmref{thm:ln_collapse} and quantification of channel-wise linearity}. Results are reported in random nets of \Defref{def:random} for depths $l$ up to $L=200$. Left: $\pwr(\rvy\pl)-\pwrone(\rvy\pl)$ (mean and $1\sigma$ interval) and upper bound $\rho\plm$ from \Thmref{thm:ln_collapse}. Center: $\pwr(\rvy\pl)$ (mean and $1\sigma$ interval). Right: Additional measure of channel-wise linearity $\pwr(\phi(\ty\pl)-\wtz\pl) / \pwr(\phi(\ty\pl))$ (mean and $1\sigma$ interval), \mbox{with $\wtz\pl$ the channel-wise linear best-fit of $\phi(\ty\pl)$ using $\ty\pl$, that is defined in \Eqref{eq:linearized1}, (\ref{eq:linearized2})} 
}
\label{fig:verif_ln_collapse}
\end{figure}

\subsubsection{Power plots in ResNet-50}
\label{sec:details_power_plots_resnets}

\paragraph{Additional experimental details.} We obtain the power plots in ResNet-50 using the experimental setup described in \Appendref{sec:experimental_setup} for ResNets on ImageNet. At each epoch, we compute the power terms $\pwrone\uc(\rvy\pl)$, $\pwrtwo\uc(\rvy\pl)$, $\pwrthree\uc(\rvy\pl)$, $\pwrfour\uc(\rvy\pl)$ for each layer $l$ and each channel $c$ using the last 256 randomly sampled inputs $\rvx$ as a proxy for the full dataset $\D$. 

When looking at each norm separately in ResNets, we noticed artefacts that we attributed to the discrepancy between the \say{computational} depth $l$ and the \say{effective} depth (that oscillates with $l$). Indeed, the effective depth, defined in terms of the statistical properties of intermediate activations, grows linearly inside each residual block but gets reduced each time a residual path is summed with a skip connection path (since the latter originates from earlier layers). This phenomenon is tightly connected to \mbox{the property discussed in \Secref{sec:bn_properties} on the control of activation scale in residual networks.}

To avoid such an artefact in Figures~\ref{fig:powers},~\ref{fig:powers_init}, we report only a single measurement of $\pwrone(\rvy\pl)$, $\pwrtwo(\rvy\pl)$, $\pwrthree(\rvy\pl)$, $\pwrfour(\rvy\pl)$ per residual block by \say{pooling together} all the channels from the three norms inside each residual block. For the same reason, we do not report $\pwrone(\rvy\pl)$, $\pwrtwo(\rvy\pl)$, $\pwrthree(\rvy\pl)$, $\pwrfour(\rvy\pl)$ for the final norm just before the final mean pooling.

The presence of this artefact confirms the fact that the effective depth evolves more slowly than the computational depth $l$ in residual networks. This explains why $\rvy\pl$ with LN or GN is not immoderately collapsed even at large $l$ in Figures~\ref{fig:powers},~\ref{fig:powers_init}.

\paragraph{Numerical stability issues with IN.}  As stated in the caption of Figure~\ref{fig:powers}, we did not succeed at training ResNet-50 v2 with IN. We found that using float-16 to store and process intermediate activations caused divergence in these networks. When replacing float-16 by float-32, even though divergence was avoided, ResNets-50 v2 still did not reach satisfactory performance with IN. We attributed this to a plain incompatibility of IN with v2 instantiations of ResNets, which could stem from the presence of a final block of normalization and activation just before the final mean pooling. Intuitively, if we denote this final block as $L$ and if we subtract away the activation function by supposing $\phi=\identity$, then $\mu_{\rvx,c}(\rvz^L)$ in channel $c$ is constant for all $\rvx$, equal to $\vbeta^L\uc$ \mbox{(cf \Secref{sec:failure_modes})}. Thus, if we subtract away the activation function, with IN all inputs $\rvx$ end up mapped to the same channel-wise constants after the final mean pooling, i.e. they become indistinguishable.

\paragraph{Power plots at initialization.} Figure~\ref{fig:powers_init} reports the same power plots as Figure~\ref{fig:powers}, except with $\pwrone(\rvy\pl)$, $\pwrtwo(\rvy\pl)$, $\pwrthree(\rvy\pl)$, $\pwrfour(\rvy\pl)$ computed at initialization (mean and $1\sigma$ intervals).

When comparing \Figref{fig:powers_init} to \Figref{fig:powers}, it is clearly visible that the channel-wise collapse with LN+WS gets aggravated during training compared to initialization. This confirms the importance of compensating during training the mean shift associated with the affine transformation. 

It is also visible that the difference between GN and LN gets narrower during training compared to initialization. This means that despite a similar behavior of $\pwrone(\rvy\pl)$ along the training trajectories with GN and LN, differences could still exist in the vicinity of these trajectories, implying a better conditioning of the loss landscape with GN. A similar argument would make us expect a better conditioning of the loss landscape when enforcing $\rvy\pl$ to be channel-wise normalized via an operation directly embedded in the network mapping \citep{Page19,Ghorbani19} as opposed to via an external penalty \citep{Collins19,Rao20,Dauphin21}, despite the two approaches potentially leading to the same reduction of $\pwrone(\rvy\pl)$. 

We believe that the notions of \say{channel-wise collapse} and \say{conditioning of the loss landscape} \citep{Page19,Ghorbani19} enable to quantify more accurately the underlying phenomenons at play than the notion of \say{internal covariate shift} \citep{Ioffe15,Santurkar18}, despite the former and latter notions being connected.

\begin{figure}[H]
\centering
\includegraphics[width=1.\linewidth]{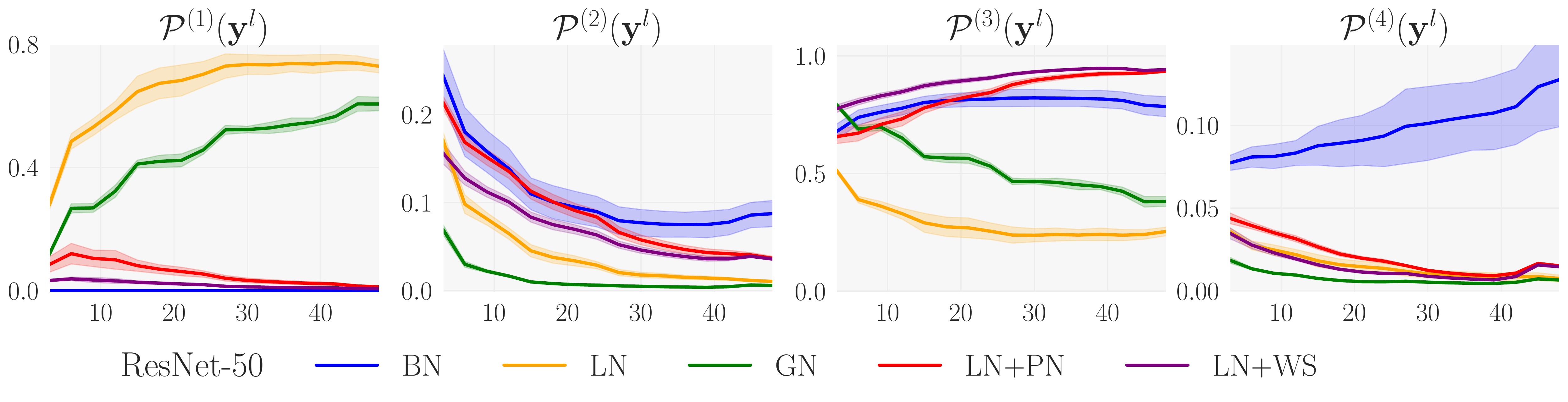}
\caption{\textbf{Power plots at initialization}. $\pwrone(\rvy\pl)$, $\pwrtwo(\rvy\pl)$, $\pwrthree(\rvy\pl)$, $\pwrfour(\rvy\pl)$ are shown as a function of the depth $l$ in ResNet-50 at initialization with BN and different batch-independent norms: LN, GN, LN+PN, LN+WS.}
\label{fig:powers_init}
\end{figure}

\subsection{More detailed results on ImageNet}
\subsubsection{$1\sigma$ intervals}
\label{sec:error_bars_imagenet}
In Tables~\ref{table:rn50_detailed},~\ref{table:rn_rnx_detailed},~\ref{table:en_detailed}, we complement the results of Tables~\ref{table:rn50},~\ref{table:rn_rnx},~\ref{table:en} with $1\sigma$ intervals. In \Figref{fig:results_en}, we provide a visualization of the results of Tables~\ref{table:en},~\ref{table:en_detailed}.

\begin{table}[H]
\centering
\caption{\textbf{Effect of adding PN}. ResNet-50 is trained on ImageNet with BN and LN, GN, GN+WS with $G$ groups, either without or with PN added on top (plain vs. PN). Results are formatted as X~/~Y with X,~Y the validation accuracies (mean and $1\sigma$ interval in \%) without and with extra regularization, respectively.}
\label{table:rn50_detailed}
\scalebox{0.85}{
\begin{tabular}{lccc}
\\[-9pt]
\toprule
       &  &  \multicolumn{2}{c}{RN50} \\
      &  $G$ &                                   plain &                                       +PN \\
\midrule
BN      &    &  \meanstdbf{76.3}{0.1} / \meanstd{75.8}{0.2} &  \meanstd{76.2}{0.1} / \meanstd{76.0}{0.1} \\
\midrule
LN          & 1  &  \meanstd{74.5}{0.0} / \meanstd{74.6}{0.1} &  \meanstd{75.9}{0.1} / \meanstd{76.5}{0.0} \\
GN 		    & 8  &  \meanstd{75.4}{0.1} / \meanstd{75.4}{0.1} &  \meanstd{76.3}{0.1} / \meanstdbf{76.7}{0.0} \\
GN          & 32  &  \meanstd{75.4}{0.1} / \meanstd{75.3}{0.1} &  \meanstd{75.8}{0.2} / \meanstd{76.1}{0.1} \\
\midrule
GN+WS       & 8  &  \meanstd{76.6}{0.0} / \meanstd{76.7}{0.1} &  \meanstd{76.8}{0.1} / \meanstdbf{77.1}{0.1} \\
\bottomrule
\end{tabular}}
\end{table}

\begin{table}[H]
\centering
\caption{\textbf{BN vs. GN, GN+PN}. ResNets and ResNeXts are trained on ImageNet with BN and GN, GN+PN. Results are formatted as in \Tableref{table:rn50_detailed}.}
\label{table:rn_rnx_detailed}
\scalebox{0.85}{
\begin{tabular}{lcccc}
\\[-9pt]
\toprule
  &                                     RN50 &                                    RN101 &                                    RNX50 &                                   RNX101 \\
\midrule
BN    &  \meanstdbf{76.3}{0.1} / \meanstd{75.8}{0.2} &  \meanstd{77.9}{0.1} / \meanstdbf{78.0}{0.1} &  \meanstdbf{77.6}{0.1} / \meanstd{77.2}{0.1} &  \meanstd{78.7}{0.1} / \meanstdbf{78.9}{0.1} \\
\midrule
GN    &  \meanstd{75.4}{0.1} / \meanstd{75.3}{0.1} &  \meanstd{77.0}{0.1} / \meanstd{77.4}{0.1} &  \meanstd{76.2}{0.2} / \meanstd{76.6}{0.1} &  \meanstd{77.4}{0.2} / \meanstd{78.1}{0.1} \\
GN+PN &  \meanstd{76.3}{0.1} / \meanstdbf{76.7}{0.0} &  \meanstd{77.6}{0.2} / \meanstdbf{78.6}{0.2} &  \meanstd{76.7}{0.1} / \meanstdbf{77.8}{0.2} &  \meanstd{77.7}{0.2} / \meanstdbf{79.0}{0.1} \\
\bottomrule
\end{tabular}}
\end{table}

\begin{table}[H]
\centering
\caption{\textbf{BN vs. batch-independent approaches}. EfficientNets are trained on ImageNet with BN and various batch-independent approaches: GN, GN+PN, Evo-S0, GN+WS, FRN+TLU. Results are formatted as in \Tableref{table:rn50_detailed}.}
\label{table:en_detailed}
\scalebox{0.85}{
\begin{tabular}{lcccc}
\\[-9pt]
\toprule
  &   \multicolumn{2}{c}{depthwise convs} & \multicolumn{2}{c}{group convs} \\
    \cmidrule(lr){2-3} \cmidrule(lr){4-5} 
  &                                 EN-B0 &                                 EN-B2 &                                EN-B0 &                                EN-B2 \\
\midrule
BN      &  \meanstd{76.9}{0.1} / \meanstdbf{77.2}{0.1} &  \meanstd{79.4}{0.0} / \meanstdbf{80.0}{0.0} &  \meanstdbf{76.8}{0.1} / \meanstd{76.7}{0.2} &  \meanstd{79.5}{0.1} / \meanstdbf{79.7}{0.1} \\
\midrule
GN      &  \meanstd{76.2}{0.1} / \meanstd{76.2}{0.1} &  \meanstd{78.9}{0.1} / \meanstd{79.4}{0.1} &  \meanstd{76.2}{0.1} / \meanstd{76.2}{0.2} &  \meanstd{79.0}{0.1} / \meanstd{79.6}{0.1} \\
GN+PN   &  \meanstd{76.8}{0.0} / \meanstdbf{77.0}{0.1} &  \meanstd{79.3}{0.1} / \meanstdbf{80.0}{0.1} &  \meanstd{76.7}{0.1} / \meanstdbf{76.8}{0.1} &  \meanstd{79.3}{0.1} / \meanstdbf{80.1}{0.1} \\
Evo-S0  &  \meanstd{75.8}{0.1} / \meanstd{75.8}{0.2} &  \meanstd{78.5}{0.1} / \meanstd{78.7}{0.1} &  \meanstd{76.2}{0.0} / \meanstd{76.5}{0.1} &  \meanstd{78.9}{0.0} / \meanstd{79.6}{0.0} \\
GN+WS   &  \meanstd{74.2}{0.1} / \meanstd{74.1}{0.1} &  \meanstd{77.8}{0.0} / \meanstd{77.8}{0.1} &  \meanstd{76.2}{0.1} / \meanstd{76.3}{0.1} &  \meanstd{79.2}{0.1} / \meanstd{79.4}{0.1} \\
FRN+TLU &  \meanstd{75.7}{0.1} / \meanstd{75.7}{0.2} &  \meanstd{78.4}{0.1} / \meanstd{78.9}{0.1} &  \meanstd{74.9}{0.2} / \meanstd{75.1}{0.1} &  \meanstd{78.2}{0.1} / \meanstd{78.6}{0.1} \\
\bottomrule
\end{tabular}}
\end{table}

\begin{figure}[H]
\centering
\includegraphics[width=0.97\textwidth]{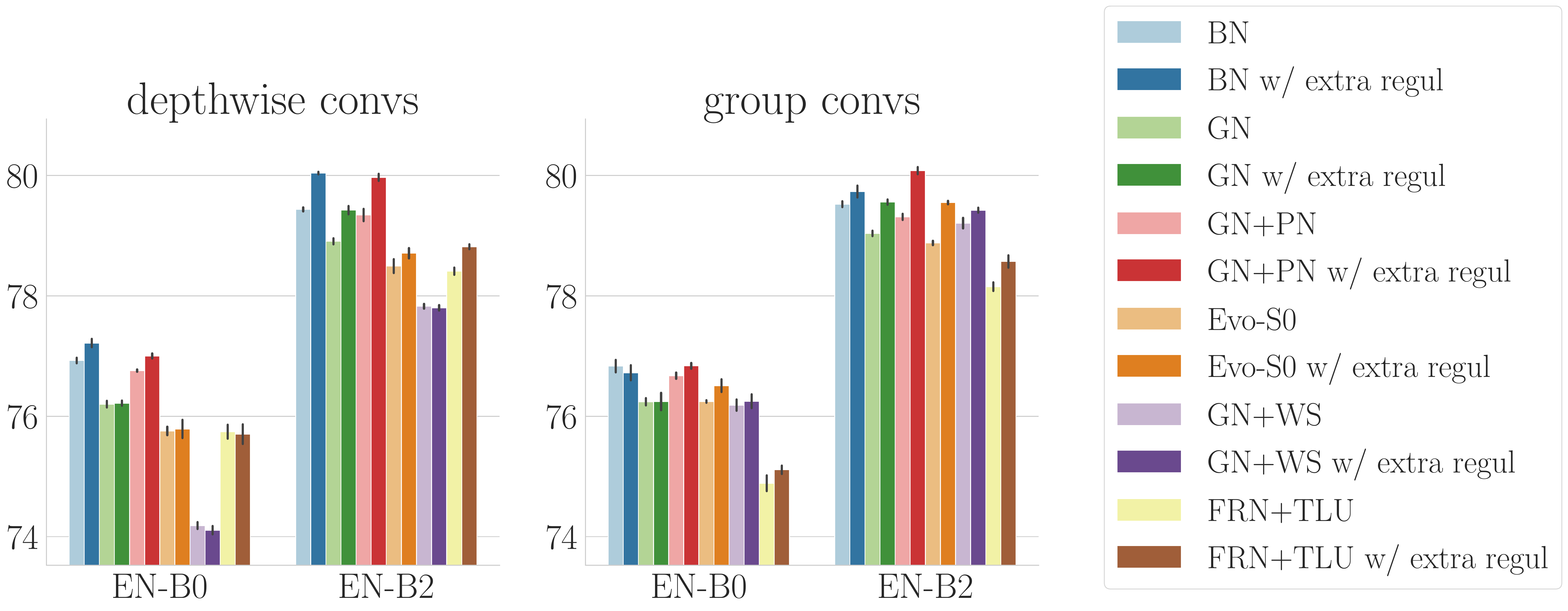}
  \caption{\textbf{BN vs. batch-independent approaches}. Validation accuracies (\%) of EfficientNets trained on ImageNet with BN and various batch-independent approaches, without and with extra regularization. } 
  \label{fig:results_en} 
\end{figure}

\subsubsection{Training accuracies}
\label{sec:training_acc_imagenet}

In Tables~\ref{table:rn50_train},~\ref{table:rn_rnx_train},~\ref{table:en_train}, we complement the results of Tables~\ref{table:rn50},~\ref{table:rn_rnx},~\ref{table:en} with training accuracies.

We stress that these training accuracies are highly dependent on the strength of applied regularization. This leads us to: (i) always separate the training accuracies obtained without and with extra regularization; (ii) report only the training accuracies obtained with batch-independent approaches, given that training accuracies obtained with BN would not be comparable due to BN's inherent regularization.
 
As visible in Tables~\ref{table:rn50_train},~\ref{table:rn_rnx_train},~\ref{table:en_train}, GN+PN outperforms alternative batch-independent approaches in terms of training accuracy on ImageNet. This applies both to training without extra regularization and to training with extra regularization. This suggests that, on larger datasets, GN+PN would outperform these alternative batch-independent approaches in terms of both training and validation accuracies \citep{Kolesnikov20,Brock21b}.

In \Tableref{table:en_train}, the fact that with extra regularization EfficientNets-B2 reach lower training accuracies than EfficientNets-B0 is explained by the different level of applied regularization (we add CutMix when training EfficientNets-B2).

\begin{table}[H]
   \centering
\caption{\textbf{Training accuracies in ResNet-50}. Networks are trained on ImageNet with LN, GN, GN+WS with $G$ groups, either without or with PN added on top (plain vs. +PN). Results are formatted as X with X the training accuracy at the final epoch (mean and $1\sigma$ interval in \%). We report separately the results without extra regularization (top) and with extra regularization (bottom). 
}
\scalebox{0.86}
{\begin{tabular}{llccc}
\\[-9pt]
\toprule
   &   &      &  \multicolumn{2}{c}{RN50} \\
   &   &  $G$ &                                   plain &  +PN \\
\midrule
\multirow{4}*{without extra regul} 
& LN      &  1  &  \meanstd{75.7}{0.1} &  \meanstd{79.9}{0.1} \\
& GN      &  8  &  \meanstd{77.2}{0.1} &  \meanstdbf{80.3}{0.1} \\
& GN      &  32  &  \meanstd{77.0}{0.0} &  \meanstd{79.2}{0.2} \\
\cmidrule(lr){2-5}
& GN+WS   &  8  &  \meanstd{80.1}{0.0} &  \meanstdbf{80.4}{0.0} \\
\midrule
\multirow{4}*{with extra regul} 
& LN      &  1  &  \meanstd{71.8}{0.1} &  \meanstd{75.8}{0.0} \\
& GN      &  8  &  \meanstd{73.3}{0.1} &  \meanstdbf{76.2}{0.1} \\
& GN      &  32 &  \meanstd{73.1}{0.1} &  \meanstd{75.1}{0.1} \\
\cmidrule(lr){2-5}
& GN+WS   &  8  &  \meanstd{75.8}{0.0} &  \meanstdbf{76.3}{0.0} \\
\bottomrule
\end{tabular}}
\label{table:rn50_train}
\end{table}

\begin{table}[H]
   \centering
   \caption{\textbf{Training accuracies in ResNets and ResNeXts}. Networks are trained on ImageNet with GN, GN+PN. Results are formatted as in \Tableref{table:rn50_train}.}
\scalebox{0.85}
{\begin{tabular}{llcccc}
\\[-9pt]
\toprule
 & &                RN50 &               RN101 &               RNX50 &              RNX101 \\
\midrule
\multirow{2}*{without extra regul} 
& GN    &  \meanstd{77.0}{0.0} &  \meanstd{79.9}{0.1} &  \meanstd{79.6}{0.1} &  \meanstd{81.6}{0.0} \\
& GN+PN &  \meanstdbf{80.3}{0.1} &  \meanstdbf{83.5}{0.0} &  \meanstdbf{84.1}{0.1} &  \meanstdbf{86.2}{0.0} \\
\midrule
\multirow{2}*{with extra regul} 
& GN    &  \meanstd{73.1}{0.1} &  \meanstd{76.5}{0.0} &  \meanstd{76.0}{0.1} &  \meanstd{78.6}{0.1} \\
& GN+PN &  \meanstdbf{76.2}{0.1} &  \meanstdbf{79.7}{0.1} &  \meanstdbf{79.8}{0.0} &  \meanstdbf{82.7}{0.0} \\
\bottomrule
\end{tabular}}
\label{table:rn_rnx_train}
\end{table}

\begin{table}[H]
   \centering
  \caption{\textbf{Training accuracies in EfficientNets}. Networks are trained on ImageNet with various batch-independent approaches: GN, GN+PN, Evo-S0, GN+WS, FRN+TLU. Results are formatted as in \Tableref{table:rn50_train}.}
\scalebox{0.85}
{\begin{tabular}{llcccc}
\\[-9pt]
\toprule
 & &   \multicolumn{2}{c}{depthwise convs} & \multicolumn{2}{c}{group convs} \\
    \cmidrule(lr){3-4} \cmidrule(lr){5-6} 
 & &                                 EN-B0 &                                 EN-B2 &                                EN-B0 &                                EN-B2 \\
\midrule
\multirow{5}*{without extra regul}
& GN      &  \meanstd{75.4}{0.0}   &  \meanstd{80.9}{0.1}   &  \meanstd{74.7}{0.0}   &  \meanstd{80.1}{0.1}   \\
& GN+PN   &  \meanstdbf{77.3}{0.0}   &  \meanstdbf{82.7}{0.0}   &  \meanstdbf{75.8}{0.0}   &  \meanstdbf{81.4}{0.1}   \\
& Evo-S0  &  \meanstd{74.6}{0.2}   &  \meanstd{79.8}{0.2}   &  \meanstd{75.1}{0.0}   &  \meanstd{80.4}{0.1}   \\
& GN+WS   &  \meanstd{71.4}{0.0}   &  \meanstd{77.6}{0.0}   &  \meanstd{74.5}{0.0}   &  \meanstd{80.2}{0.1}   \\
& FRN+TLU &  \meanstd{75.0}{0.1}   &  \meanstd{80.4}{0.0}   &  \meanstd{72.9}{0.1}   &  \meanstd{78.5}{0.1}   \\
\midrule
\multirow{5}*{with extra regul} 
& GN      &  \meanstd{71.2}{0.1} &  \meanstd{66.2}{0.1} &  \meanstd{70.5}{0.1} &  \meanstd{65.6}{0.1} \\
& GN+PN   &  \meanstdbf{72.8}{0.0} &  \meanstdbf{67.8}{0.1} &  \meanstdbf{71.5}{0.1} &  \meanstdbf{66.7}{0.0} \\
& Evo-S0  &  \meanstd{70.2}{0.2} &  \meanstd{64.4}{0.3} &  \meanstd{70.8}{0.1} &  \meanstd{65.6}{0.1} \\
& GN+WS   &  \meanstd{67.3}{0.1} &  \meanstd{63.4}{0.1} &  \meanstd{70.4}{0.1} &  \meanstd{65.4}{0.0} \\
& FRN+TLU &  \meanstd{70.4}{0.3} &  \meanstd{65.1}{0.2} &  \meanstd{68.8}{0.2} &  \meanstd{64.0}{0.1} \\
\bottomrule
\end{tabular}}
\label{table:en_train}
\end{table}

\subsubsection{Effect of omitting PN's additional parameters}
\label{sec:pn_noadditional}

In Tables~\ref{table:rn_rnx_pn_noadditional},~\ref{table:en_pn_noadditional} and \Figref{fig:results_pn_noadditional}, we report results with PN's additional parameters $\tilde{\vbeta}\pl$, $\tilde{\vgamma}\pl$ set to 0. In that case, $\tilde{\vbeta}\pl$, $\tilde{\vgamma}\pl$ can be equivalently omitted and the proxy variable $Y\pl$ can be simply considered as a standard Gaussian variable in each channel $c$, i.e. $Y\pl\uc \sim \mathcal{N}(0,1)$ (cf our implementation of \Appendref{sec:model_code}).

As visible in Tables~\ref{table:rn_rnx_pn_noadditional},~\ref{table:en_pn_noadditional} and \Figref{fig:results_pn_noadditional}, the omission of PN's additional parameters $\tilde{\vbeta}\pl$, $\tilde{\vgamma}\pl$ is indeed harmful. However, the drop of performance that results from omitting $\tilde{\vbeta}\pl$, $\tilde{\vgamma}\pl$ in GN+PN is very small (in average less than $0.1\%$ in validation accuracy).

Given that the omission of PN's additional parameters $\tilde{\vbeta}\pl$, $\tilde{\vgamma}\pl$ leads to slight benefits in terms of computational requirements and simplicity of implementation, this variant of PN with $\tilde{\vbeta}\pl$, $\tilde{\vgamma}\pl$ omitted might sometimes be a better trade-off.

\begin{table}[H]
\centering
\caption{\textbf{Effect of omitting PN's additional parameters in ResNets and ResNeXts}. Networks are trained on ImageNet with BN and GN, GN+PN with $\tilde{\vbeta}\pl$, $\tilde{\vgamma}\pl$ included, and GN+PN with $\tilde{\vbeta}\pl$, $\tilde{\vgamma}\pl$ omitted. Results are formatted as in \Tableref{table:rn50_detailed}.}
\label{table:rn_rnx_pn_noadditional}
\scalebox{0.85}{
\begin{tabular}{lcccc}
\\[-9pt]
\toprule
  &                                     RN50 &                                    RN101 &                                    RNX50 &                                   RNX101 \\
\midrule
BN        &  \meanstdbf{76.3}{0.1} / \meanstd{75.8}{0.2} &  \meanstd{77.9}{0.1} / \meanstdbf{78.0}{0.1} &  \meanstdbf{77.6}{0.1} / \meanstd{77.2}{0.1} &  \meanstd{78.7}{0.1} / \meanstdbf{78.9}{0.1} \\
\midrule
GN        &  \meanstd{75.4}{0.1} / \meanstd{75.3}{0.1} &  \meanstd{77.0}{0.1} / \meanstd{77.4}{0.1} &  \meanstd{76.2}{0.2} / \meanstd{76.6}{0.1} &  \meanstd{77.4}{0.2} / \meanstd{78.1}{0.1} \\
GN+PN with $\tilde{\vbeta}\pl$, $\tilde{\vgamma}\pl$ included &  \meanstd{76.3}{0.1} / \meanstdbf{76.7}{0.0} &  \meanstd{77.6}{0.2} / \meanstdbf{78.6}{0.2} &  \meanstd{76.7}{0.1} / \meanstdbf{77.8}{0.2} &  \meanstd{77.7}{0.2} / \meanstd{79.0}{0.1} \\
GN+PN with $\tilde{\vbeta}\pl$, $\tilde{\vgamma}\pl$ omitted  &  \meanstd{76.3}{0.0} / \meanstd{76.7}{0.1} &  \meanstd{77.5}{0.0} / \meanstd{78.5}{0.1} &  \meanstd{76.5}{0.1} / \meanstd{77.6}{0.1} &  \meanstd{77.5}{0.1} / \meanstdbf{79.0}{0.1} \\
\bottomrule
\end{tabular}}
\end{table}

\begin{table}[H]
\centering
\caption{\textbf{Effect of omitting PN's additional parameters in EfficientNets}. Networks are trained on ImageNet with BN and GN, GN+PN with $\tilde{\vbeta}\pl$, $\tilde{\vgamma}\pl$ included, and GN+PN with $\tilde{\vbeta}\pl$, $\tilde{\vgamma}\pl$ omitted. Results are formatted as in \Tableref{table:rn50_detailed}.}
\label{table:en_pn_noadditional}
\scalebox{0.85}{
\begin{tabular}{lcccc}
\\[-9pt]
\toprule
  &   \multicolumn{2}{c}{depthwise convs} & \multicolumn{2}{c}{group convs} \\
    \cmidrule(lr){2-3} \cmidrule(lr){4-5} 
  &                                 EN-B0 &                                 EN-B2 &                                EN-B0 &                                EN-B2 \\
\midrule
BN        &  \meanstd{76.9}{0.1} / \meanstdbf{77.2}{0.1} &  \meanstd{79.4}{0.0} / \meanstdbf{80.0}{0.0} &  \meanstdbf{76.8}{0.1} / \meanstd{76.7}{0.2} &  \meanstd{79.5}{0.1} / \meanstdbf{79.7}{0.1} \\
\midrule
GN        &  \meanstd{76.2}{0.1} / \meanstd{76.2}{0.1} &  \meanstd{78.9}{0.1} / \meanstd{79.4}{0.1} &  \meanstd{76.2}{0.1} / \meanstd{76.2}{0.2} &  \meanstd{79.0}{0.1} / \meanstd{79.6}{0.1} \\
GN+PN with $\tilde{\vbeta}\pl$, $\tilde{\vgamma}\pl$ included &  \meanstd{76.8}{0.0} / \meanstdbf{77.0}{0.1} &  \meanstd{79.3}{0.1} / \meanstdbf{80.0}{0.1} &  \meanstd{76.7}{0.1} / \meanstdbf{76.8}{0.1} &  \meanstd{79.3}{0.1} / \meanstdbf{80.1}{0.1} \\
GN+PN with $\tilde{\vbeta}\pl$, $\tilde{\vgamma}\pl$ omitted   &  \meanstd{76.6}{0.2} / \meanstd{77.0}{0.1} &  \meanstd{79.2}{0.0} / \meanstd{79.9}{0.1} &  \meanstd{76.7}{0.1} / \meanstd{76.7}{0.1} &  \meanstd{79.3}{0.1} / \meanstd{80.0}{0.2} \\
\bottomrule
\end{tabular}}
\end{table}

\begin{figure}[H]
\centering
\includegraphics[width=0.95\textwidth]{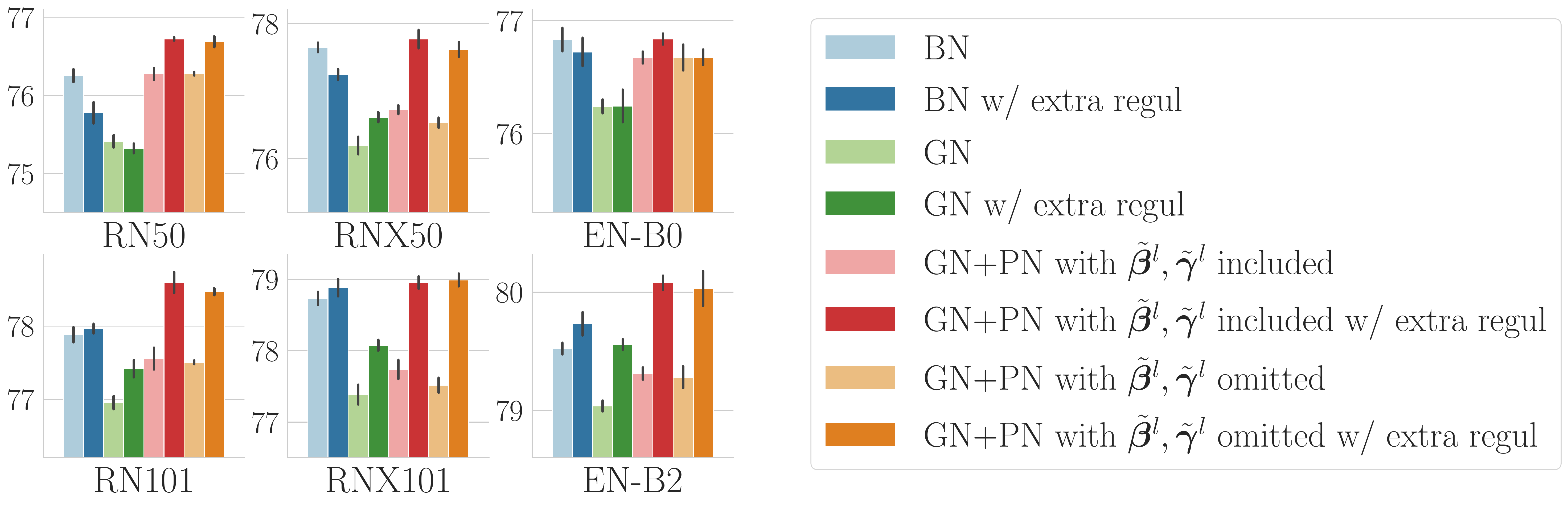}
  \caption{\textbf{Effect of omitting PN's additional parameters}. Validation accuracies (\%) of ResNets, ResNeXts and EfficientNets trained on ImageNet with BN and GN, GN+PN with $\tilde{\vbeta}\pl$, $\tilde{\vgamma}\pl$ included, and GN+PN with $\tilde{\vbeta}\pl$, $\tilde{\vgamma}\pl$ omitted, without and with extra regularization. EfficientNets are considered in the variant with group convolutions \citep{Masters21}.} 
  \label{fig:results_pn_noadditional} 
\end{figure}

\subsubsection{Effect of changing the choice of the extra regularization}
\label{sec:alternative_regul}

In \Tableref{table:alternative_regul} and \Figref{fig:results_alternative_regul}, we report results in ResNets and ResNeXts with a change in the choice of the extra regularization. When using extra regularization, instead of using label smoothing \citep{Szegedy16}, dropout \citep{Srivastava14} and stochastic depth \citep{Huang16}, we use Mixup \citep{Zhang18} in all networks, and in ResNet-101 and ResNeXt-101, we additionally use CutMix \cite{Yun19} (cf \Appendref{sec:experimental_setup}).

We reach similar conclusions with the results of \Tableref{table:alternative_regul} and \Figref{fig:results_alternative_regul} as with the results of \Tableref{table:rn_rnx_detailed}: (i) BN is matched or outperformed by GN+PN, except for a small gap of performance in ResNeXt-50 (this gap of performance might be due to the imperfect \say{abstraction away} of regularization); (ii) good performance remains tied to the combination of both an efficient normalization and an efficient regularization.

\begin{table}[H]
\centering
\caption{\textbf{Effect of changing the choice of the extra regularization}. ResNets and ResNeXts are trained on ImageNet with BN and GN, GN+PN. Results are formatted as X~/~Z with X,~Z the validation accuracies (mean and $1\sigma$ interval in \%) without extra regularization and with an extra regularization other than the one used in \Tableref{table:rn_rnx_detailed}, respectively.}
\label{table:alternative_regul}
\scalebox{0.85}{
\begin{tabular}{lcccc}
\\[-9pt]
\toprule
  &                                     RN50 &                                    RN101 &                                    RNX50 &                                   RNX101 \\
\midrule
BN    &  \meanstd{76.3}{0.1} / \meanstdbf{76.3}{0.0} &  \meanstd{77.9}{0.1} / \meanstdbf{78.1}{0.1} &  \meanstd{77.6}{0.1} / \meanstdbf{78.0}{0.0} &  \meanstd{78.7}{0.1} / \meanstdbf{79.5}{0.0} \\
\midrule
GN    &  \meanstd{75.4}{0.1} / \meanstd{75.9}{0.1} &  \meanstd{77.0}{0.1} / \meanstd{77.7}{0.1} &  \meanstd{76.2}{0.2} / \meanstd{76.7}{0.1} &  \meanstd{77.4}{0.2} / \meanstd{78.3}{0.1} \\
GN+PN &  \meanstd{76.3}{0.1} / \meanstdbf{77.0}{0.0} &  \meanstd{77.6}{0.2} / \meanstdbf{78.9}{0.1} &  \meanstd{76.7}{0.1} / \meanstdbf{77.6}{0.1} &  \meanstd{77.7}{0.2} / \meanstdbf{79.6}{0.1} \\
\bottomrule
\end{tabular}}
\end{table}

\begin{figure}[H]
\centering
\includegraphics[width=0.9\textwidth]{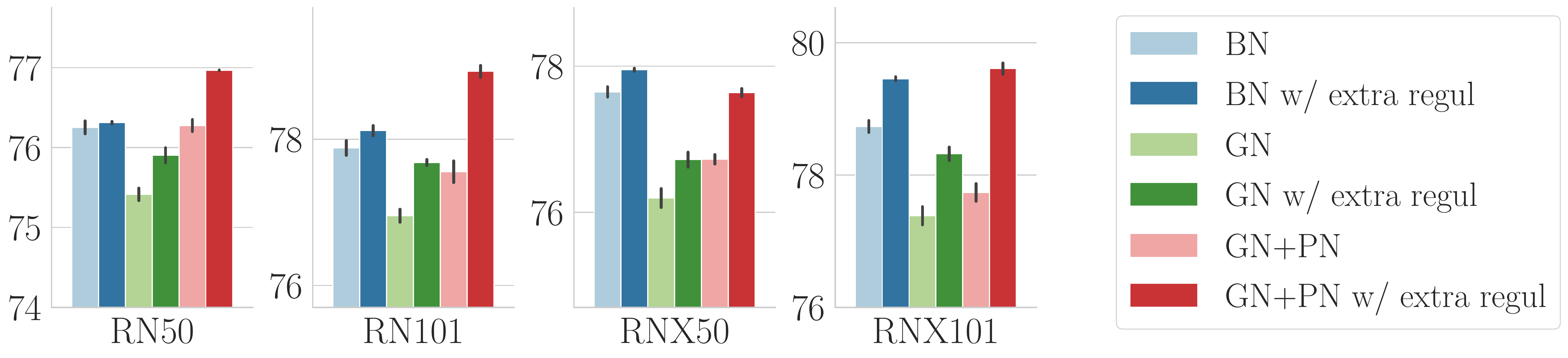}
  \caption{\textbf{Effect of changing the choice of the extra regularization}. Validation accuracies (\%) of ResNets and ResNeXts trained on ImageNet with BN and GN, GN+PN, without extra regularization and with an extra regularization other than the one used in \Tableref{table:rn_rnx_detailed}.} 
  \label{fig:results_alternative_regul} 
\end{figure}

\subsection{Results on CIFAR-10 and CIFAR-100}
\label{sec:cifar_results}

In this section, we report results on CIFAR with different sizes of ResNets: ResNet-20 (RN20), ResNet-32 (RN32), ResNet-44 (RN44), ResNet-56 (RN56), ResNet-110 (RN110).

We report results on CIFAR-10 in \Tableref{table:cifar_10}, and results on CIFAR-100 in \Tableref{table:cifar_100}. We further provide a visualization of these results in \Figref{fig:cifar}.

While slightly underperforming BN on CIFAR-10, GN+PN tends to slightly outperform BN on CIFAR-100. As a possible reason, BN's regularization could be more beneficial on the \say{easy} task of CIFAR-10 than on the \say{harder} task of CIFAR-100. To the extent that BN's regularization can be seen as a reduction of the network's effective capacity, such a reduction of the network's effective capacity could be more harmful for tasks that require more capacity, i.e. for harder tasks.

\begin{table}[H]
\centering
\caption{\textbf{BN vs. GN, GN+PN on CIFAR-10}. ResNets are trained with BN and GN, GN+PN. Results are formatted as X~/~Y with X,~Y the validation accuracies (mean and $1\sigma$ interval in \%) without and with extra regularization, respectively.}
\label{table:cifar_10}
\scalebox{0.82}{
\begin{tabular}{lccccc}
\\[-9pt]
\toprule
  &                                     RN20 &                                     RN32 &                                     RN44 &                                     RN56 &                                    RN110 \\
\midrule
BN    &  \meanstd{91.6}{0.3} / \meanstdbf{91.8}{0.2} &  \meanstd{92.4}{0.1} / \meanstdbf{92.7}{0.2} &  \meanstd{92.7}{0.2} / \meanstdbf{93.1}{0.2} &  \meanstd{93.0}{0.1} / \meanstdbf{93.4}{0.2} &  \meanstd{93.5}{0.1} / \meanstdbf{93.7}{0.2} \\
\midrule
GN    &  \meanstd{90.8}{0.2} / \meanstd{90.7}{0.1} &  \meanstd{91.5}{0.2} / \meanstd{91.5}{0.1} &  \meanstd{91.8}{0.2} / \meanstd{92.0}{0.1} &  \meanstd{92.2}{0.2} / \meanstd{92.2}{0.2} &  \meanstd{92.6}{0.2} / \meanstd{92.9}{0.3} \\
GN+PN &  \meanstd{91.4}{0.2} / \meanstdbf{91.6}{0.3} &  \meanstd{92.3}{0.2} / \meanstdbf{92.5}{0.2} &  \meanstd{92.8}{0.2} / \meanstdbf{92.9}{0.2} &  \meanstd{92.9}{0.2} / \meanstdbf{93.2}{0.2} &  \meanstd{93.2}{0.1} / \meanstdbf{93.6}{0.1} \\
\bottomrule
\end{tabular}}
\end{table}

\begin{table}[H]
\centering
\caption{\textbf{BN vs. GN, GN+PN on CIFAR-100}. ResNets are trained with BN and GN, GN+PN. Results are formatted as in \Tableref{table:cifar_10}.}
\label{table:cifar_100}
\scalebox{0.85}{
\begin{tabular}{lccccc}
\\[-9pt]
\toprule
  &                                     RN20 &                                     RN32 &                                     RN44 &                                     RN56 &                                    RN110 \\
\midrule
BN    &  \meanstdbf{66.8}{0.3} / \meanstd{65.1}{0.2} &  \meanstd{68.2}{0.3} / \meanstdbf{68.7}{0.2} &  \meanstd{69.2}{0.4} / \meanstdbf{70.5}{0.2} &  \meanstd{70.1}{0.2} / \meanstdbf{71.4}{0.3} &  \meanstd{71.7}{0.3} / \meanstdbf{73.3}{0.3} \\
\midrule
GN    &  \meanstd{65.0}{0.3} / \meanstd{61.7}{0.3} &  \meanstd{66.5}{0.4} / \meanstd{65.3}{0.4} &  \meanstd{67.3}{0.6} / \meanstd{67.0}{0.3} &  \meanstd{67.8}{0.4} / \meanstd{68.1}{0.5} &  \meanstd{69.5}{0.3} / \meanstd{70.2}{0.4} \\
GN+PN &  \meanstd{66.3}{0.4} / \meanstdbf{66.7}{0.2} &  \meanstd{67.8}{0.4} / \meanstdbf{69.5}{0.2} &  \meanstd{68.9}{0.3} / \meanstdbf{70.8}{0.4} &  \meanstd{69.8}{0.3} / \meanstdbf{71.7}{0.4} &  \meanstd{71.4}{0.2} / \meanstdbf{73.1}{0.4} \\
\bottomrule
\end{tabular}}
\end{table}

\begin{figure}[H]
\centering
\includegraphics[width=0.81\textwidth]{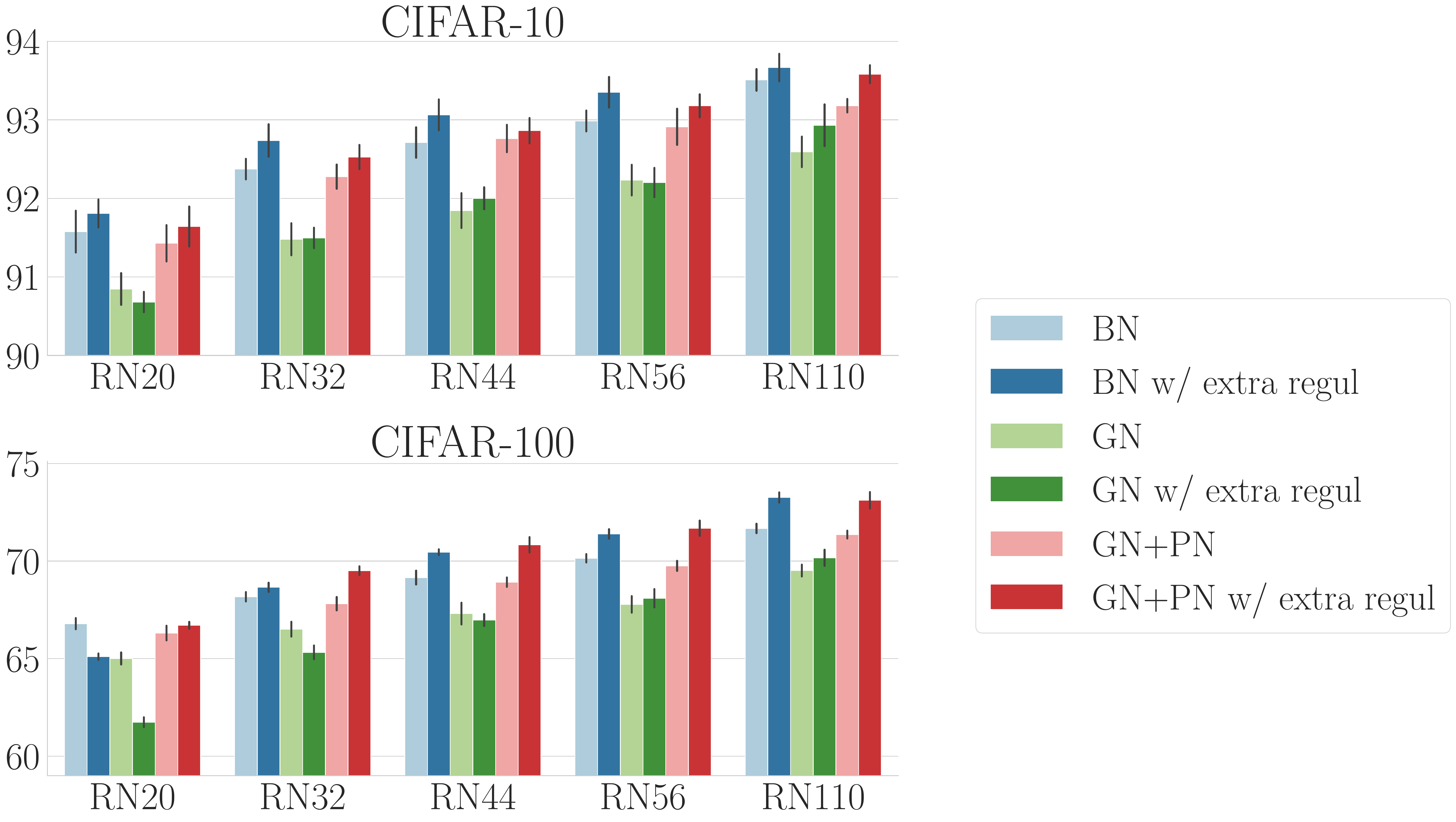}
  \caption{\textbf{BN vs. GN, GN+PN}. Validation accuracies (\%) of ResNets trained on CIFAR-10 (top) and CIFAR-100 (bottom) with BN and GN, GN+PN, without and with extra regularization.} 
  \label{fig:cifar} 
\end{figure}

\section{Implementation of Proxy Norm}
\label{sec:model_code}
In this section, we provide a Tensorflow 1 implementation of the proxy-normalized activation step when PN's additional parameters $\tilde{\vbeta}\pl$,~$\tilde{\vgamma}\pl$ are set to zero, i.e. omitted (cf \Secref{sec:pn}).

\inputminted{python}{pn_act.tex}

\section{Proofs of results other than Theorems \ref{thm:ln_collapse}, \ref{thm:in_variability}, \ref{thm:pn_iterative}}
\label{sec:proof_others}

\subsection{Layer-wise power equals one}
\label{sec:powers_one}

\begin{Proposition}
\label{prop:powers_one}
If $\sigma^{}_{I_{\rvx,c}}(\rvx\pl)\neq0$ for all $\rvx\in\D$ and $c\in\{1,\dots,C\ul \}$, then it holds that $\pwr(\rvy\pl)=1$ for any choice of $\Norm \in \{ \BN, \LN, \IN, \GN \}$.
\end{Proposition}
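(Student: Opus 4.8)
The plan is to reduce the whole statement to a single application of the law of total expectation, exploiting the one structural feature common to all four norms: each produces $\rvy\pl_{\alpha,c}$ by standardizing $\rvx\pl_{\alpha,c}$ over a set of \emph{free} indices while holding the conditioning indices $I_{\rvx,c}$ fixed. First I would observe that, for any fixed value of the conditioning variables, the standardized quantity has by construction conditional mean $0$ and conditional variance $1$ over the free variables — this is precisely where the hypothesis $\sigma^{}_{I_{\rvx,c}}(\rvx\pl)\neq0$ is needed, so that the division in \Eqref{eq:propagation2} is well-defined and the conventional $0$ case never arises. Consequently the conditional second moment is
\begin{align*}
\E_{\mathrm{free}\,\mid\,I_{\rvx,c}}\big[(\rvy\pl_{\alpha,c})^2\big]
 = \Var_{\mathrm{free}\,\mid\,I_{\rvx,c}}\big[\rvy\pl_{\alpha,c}\big]
 + \E_{\mathrm{free}\,\mid\,I_{\rvx,c}}\big[\rvy\pl_{\alpha,c}\big]^2
 = 1 + 0 = 1.
\end{align*}

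Next I would recall that $\pwr(\rvy\pl)$ is the average over $c$ of $\pwr\uc(\rvy\pl)=\E_{\rvx,\alpha}[(\rvy\pl_{\alpha,c})^2]$, which — under the convention that $\rvx,\alpha,c$ are sampled independently and uniformly — coincides with the fully joint second moment $\E_{\rvx,\alpha,c}[(\rvy\pl_{\alpha,c})^2]$. The crucial step is then to note that in each of the four cases the conditioning indices $I_{\rvx,c}$ together with the free indices exhaust $\{\rvx,\alpha,c\}$: for $\BN$ one conditions on $c$ and is free over $\rvx,\alpha$; for $\LN$ one conditions on $\rvx$ and is free over $\alpha,c$; for $\IN$ one conditions on $\rvx,c$ and is free over $\alpha$; and for $\GN$ one conditions on $\rvx$ and the group $c\bmod G$ and is free over $\alpha$ and the within-group channel. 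Splitting the joint expectation along this partition and inserting the previous identity gives
\begin{align*}
\pwr(\rvy\pl)
 = \E_{I_{\rvx,c}}\Big[\E_{\mathrm{free}\,\mid\,I_{\rvx,c}}\big[(\rvy\pl_{\alpha,c})^2\big]\Big]
 = \E_{I_{\rvx,c}}[1] = 1.
\end{align*}

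The only point requiring care — and the closest thing to an obstacle, though a mild one — is justifying that the global uniform average may be reorganized as an unweighted average of block-wise averages. This is legitimate because the blocks over which each norm standardizes are equinumerous: $|\D|\cdot H\cdot W$ per channel for $\BN$, $H\cdot W\cdot C\ul$ per input for $\LN$, $H\cdot W$ per input--channel pair for $\IN$, and $H\cdot W\cdot(C\ul/G)$ per input--group pair for $\GN$ (assuming $G\mid C\ul$). Since the blocks have equal cardinality, the uniform joint expectation equals the unweighted average of the conditional expectations, each of which is $1$. I would be slightly more explicit only for $\GN$, whose channel index splits into a conditioned group part and a free within-group part, to confirm the partition is genuine; beyond that, the result is a one-line consequence of the tower property once the unit conditional second moment is established.
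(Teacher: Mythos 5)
Your proof is correct and follows essentially the same route as the paper's: both arguments reduce to the observation that standardization gives unit conditional second moment on each normalization block (using the nonzero-denominator hypothesis), followed by an average over blocks — the paper merely writes this out case by case, treating BN and GN explicitly and obtaining LN and IN as the $G=1$ and $G=C^l$ special cases of GN. One minor remark: the equinumerosity of the blocks, which you flag as the delicate point, is not actually needed — since every conditional second moment equals $1$, any convex combination of them equals $1$, which is precisely how the paper's GN computation $\pwr_\rvx(\rvy\pl)=\frac{1}{C\ul}\sum_g |\mathcal{G}_g|\,\pwr_{I^{(g)}_{\rvx}}(\rvy\pl)$ accommodates groups of arbitrary sizes.
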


\begin{proof}\textbf{.} The proof proceeds by distinguishing each case in $\Norm \in \{ \BN, \LN, \IN, \GN \}$.

\paragraph{Case of BN.}
If we fix a channel $c$, the assumption $\sigma^{}\uc(\rvx\pl)\neq0$ implies that
 \begin{align*}
\pwr\uc(\rvy\pl) & = \frac{\E_{\rvx,\alpha}\Big[\big(\rvx\pl_{\alpha,c} - \mu\uc(\rvx\pl) \big)^2 \Big]}{\sigma\uc(\rvx\pl)^2} = \frac{\sigma\uc(\rvx\pl)^2}{\sigma\uc(\rvx\pl)^2} = 1.
 \end{align*}

We immediately get $\pwr(\rvy\pl)=\E\uc\Big[ \pwr\uc(\rvy\pl) \Big]=1$.

\paragraph{Case of GN.}
Let us fix $\rvx\in \D$ and let us denote $\mathcal{G}_g$ for $g \in \{ 1, \dots, G \}$ the $G$ groups of channels and $I^{(g)}_{\rvx}=\{ \rvx, c \in \mathcal{G}_g \}$ for $g \in \{ 1, \dots, G \}$ the $G$ conditional sets of standardization.

The assumption $\sigma^{}_{I^{(g)}_{\rvx}}(\rvx\pl)\neq0$ implies for any $g$ that
\begin{align*}
\pwr_{I^{(g)}_{\rvx}}(\rvy\pl) & = \frac{\E_{\alpha,c|c\in \mathcal{G}_g}\Big[\big(\rvx\pl_{\alpha,c} - \mu_{I^{(g)}_{\rvx}}(\rvx\pl) \big)^2 \Big]}{\sigma_{I^{(g)}_{\rvx}}(\rvx\pl)^2} =
 \frac{\sigma_{I^{(g)}_{\rvx}}(\rvx\pl)^2}{\sigma_{I^{(g)}_{\rvx}}(\rvx\pl)^2} = 1.
 \end{align*}

This implies
\begin{align*}
\pwr_{\rvx}(\rvy\pl) & = \frac{1}{C\ul} \sum_c \pwr_{\rvx,c}(\rvy\pl) =  \frac{1}{C\ul} \sum_g  \sum_{c\in \mathcal{G}_g} \pwr_{\rvx,c}(\rvy\pl) \\
& = \frac{1}{C\ul} \sum_g |\mathcal{G}_g| \pwr_{I^{(g)}_{\rvx}}(\rvy\pl) = \frac{1}{C\ul} \sum_g |\mathcal{G}_g| =1, 
 \end{align*}
where we used $\pwr_{I^{(g)}_{\rvx}}(\rvy\pl) = \frac{1}{|\mathcal{G}_g|} \sum_{c\in \mathcal{G}_g} \pwr_{\rvx,c}(\rvy\pl)$.

We immediately get $\pwr(\rvy\pl)=\E_\rvx\Big[ \pwr_\rvx(\rvy\pl) \Big]=1$.

\paragraph{Cases of LN and IN.} The cases of LN and IN immediately follow from the cases of GN with $G=1$ group and $G=C\ul$ groups. \qed
\end{proof}

\subsection{Channel-wise collapse implies channel-wise linearity}
\label{sec:channelwise_linearity}

Some additional notations are required in this section. We denote $\Theta\pl\equiv(\vomega^1,\vbeta^1,\vgamma^1,\dots,\vomega\pl,\vbeta\pl,\vgamma\pl)$ the aggregated model parameters up to layer~$l$.

We further define the linearized post-activations $\wtz\pl$ as
\begin{align}
\forall \alpha, c: \quad & \wtz\pl_{\alpha,c} = \widetilde{\lambda}\uc \ty\pl_{\alpha,c},  \label{eq:linearized1} \\ 
\forall c: \quad & \widetilde{\lambda}\uc = \argmin_{\lambda\uc} \E_{\rvx,\alpha}\Big[ \big( \rvz\pl_{\alpha,c} - \lambda\uc \ty\pl_{\alpha,c}  \big)^2 \Big] = \argmin_{\lambda\uc} \E_{\rvx,\alpha}\Big[ \big( \phi(\ty\pl_{\alpha,c}) - \lambda\uc \ty\pl_{\alpha,c} \big)^2 \Big].  \label{eq:linearized2}
\end{align}
The linearized post-activations $\wtz\pl$ are the channel-wise linear best-fit of $\rvz\pl=\phi(\ty\pl)$ using $\ty\pl$.

We start by proving that the inequality $\pwr\uc(\ty\pl) - \pwrone\uc(\ty\pl) \leq \tilde{\eta} \pwrone\uc(\ty\pl)$ for sufficiently small $\tilde{\eta}$ implies channel-wise linearity (\Propref{prop:channelwise_linearity1}). We then prove that the inequality $\pwr\uc(\ty\pl) - \pwrone\uc(\ty\pl) \leq \widetilde{\eta} \pwr\uc(\ty\pl)$ for sufficiently small $\widetilde{\eta}$ implies channel-wise-linearity (\Propref{prop:channelwise_linearity2}). 

\begin{Proposition}
\label{prop:channelwise_linearity1}
If we fix some $d \in \sN^{*}$, there exists $\tilde{\eta}>0$ such that for any choice of $(\phi,H,W,\D,\Theta\pl)$, it holds that
\begin{align*}
\Big( HW|\D|=d \Big) \land \Big( \pwr\uc(\ty\pl) - \pwrone\uc(\ty\pl) \leq \tilde{\eta} \pwrone\uc(\ty\pl) \Big)  \implies \rvz\pl_{\alpha,c} = \wtz\pl_{\alpha,c} \;  \forall \rvx,\alpha,
 \end{align*}
where $\wtz\pl$ are the linearized post-activations defined in \Eqref{eq:linearized1}, (\ref{eq:linearized2}) and $\land$ is the logical \say{and}.
\end{Proposition}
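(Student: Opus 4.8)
The plan is to reduce the statement to a deterministic fact about a finite multiset of real numbers. First I would translate the two power terms into elementary moments: since $\pwrone\uc(\ty\pl)=\mu\uc(\ty\pl)^2$ and $\pwr\uc(\ty\pl)=\mu\uc(\ty\pl)^2+\sigma\uc(\ty\pl)^2$, the hypothesis $\pwr\uc(\ty\pl)-\pwrone\uc(\ty\pl)\leq\tilde\eta\,\pwrone\uc(\ty\pl)$ reads exactly $\sigma\uc(\ty\pl)^2\leq\tilde\eta\,\mu\uc(\ty\pl)^2$. Because $\rvx$ and $\alpha$ are sampled uniformly, $\mu\uc(\ty\pl)$ and $\sigma\uc(\ty\pl)^2$ are the empirical mean and variance of the finite family $\{\ty\pl_{\alpha,c}\}$ indexed by the $d=HW|\D|$ pairs $(\rvx,\alpha)$. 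So the content of the hypothesis is that this family has variance small compared with the square of its mean.

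The key step I would isolate as a lemma is a one-sided concentration bound: for any $d$ reals $t_1,\dots,t_d$ with mean $m$ and variance $s^2$, if some $t_j$ has sign opposite to $m$ or vanishes while $m\neq0$, then $(t_j-m)^2\geq m^2$, hence $s^2\geq m^2/d$. Choosing $\tilde\eta$ strictly below $1/d$ — e.g. $\tilde\eta=\tfrac{1}{d+1}$, which depends only on $d$ and therefore works uniformly over all $(\phi,H,W,\D,\Theta\pl)$ with $HW|\D|=d$ — the hypothesis $s^2\leq\tilde\eta\,m^2$ becomes incompatible with $m\neq0$ unless every $t_j$ shares the strict sign of $m$. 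Applied to $\{\ty\pl_{\alpha,c}\}$, this forces all the quantities $\ty\pl_{\alpha,c}$ in channel $c$ to lie strictly on one side of the origin whenever $\mu\uc(\ty\pl)\neq0$.

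It then remains to exploit positive homogeneity of $\phi$. On either open half-line $\phi$ is exactly linear, $\phi(t)=a_c\,t$ with $a_c=\phi(1)$ or $a_c=-\phi(-1)$ according to the common sign of the $\ty\pl_{\alpha,c}$. Hence $\rvz\pl_{\alpha,c}=\phi(\ty\pl_{\alpha,c})=a_c\,\ty\pl_{\alpha,c}$ for every $(\rvx,\alpha)$. The best-fit objective in \Eqref{eq:linearized2} then equals $(a_c-\lambda\uc)^2\,\pwr\uc(\ty\pl)$, so it is minimized at $\widetilde\lambda\uc=a_c$, the minimizer being unique because $\pwr\uc(\ty\pl)=\mu\uc(\ty\pl)^2+\sigma\uc(\ty\pl)^2>0$ once $\mu\uc(\ty\pl)\neq0$. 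This gives $\wtz\pl_{\alpha,c}=a_c\,\ty\pl_{\alpha,c}=\rvz\pl_{\alpha,c}$. The degenerate case $\mu\uc(\ty\pl)=0$ I would handle separately: the hypothesis then forces $\sigma\uc(\ty\pl)^2=0$, so all $\ty\pl_{\alpha,c}$ vanish and both $\rvz\pl_{\alpha,c}=\phi(0)=0$ and $\wtz\pl_{\alpha,c}=0$, yielding the conclusion trivially.

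I expect the only genuine subtlety to be the bookkeeping around the threshold $\tilde\eta$: it must be chosen as a function of $d$ alone, not of the distribution of the $\ty\pl_{\alpha,c}$, which is precisely why the statement fixes $HW|\D|=d$ rather than leaving the sample count free. Everything else — the moment identities, the sign lemma, and the vanishing of the best-fit residual under positive homogeneity — is elementary.
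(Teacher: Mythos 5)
Your proof is correct and follows essentially the same route as the paper's: show that the hypothesis forces every $\ty\pl_{\alpha,c}$ to share the sign of $\mu\uc(\ty\pl)$ (handling the degenerate zero-mean/zero-variance case separately), then use positive homogeneity of $\phi$ to get exact linearity on the relevant half-line, so the best-fit residual in \Eqref{eq:linearized2} vanishes and $\rvz\pl_{\alpha,c}=\wtz\pl_{\alpha,c}$ pointwise. The only difference is cosmetic: you obtain the constant-sign property from a direct single-atom bound ($s^2\geq m^2/d$ whenever one of the $d$ values lies on the wrong side of the origin), whereas the paper reaches the identical conclusion via Chebyshev's inequality with $k=1/\sqrt{\eta}$ and $\eta<1/(HW|\D|)$ --- equivalent statements for a uniform distribution on $d$ atoms, with matching thresholds ($\tilde{\eta}=\tfrac{1}{d+1}$ versus the paper's $\tilde{\eta}=\tfrac{1}{2d}$).
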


\begin{proof}\textbf{.} Any positive homogeneous $\phi$ satisfies $\phi(r)=r\phi(1)$ and $\phi(-r)=r\phi(-1)$ for any $r\geq0$. This means that any positive homogeneous $\phi$ is: (i) fully determined by its values at $+1$ and $-1$; (ii) linear on the intervals $(-\infty,0]$ and $[0,+\infty)$.

A sufficient condition for the linearity with respect to $\rvx,\alpha$ in channel $c$ is therefore a constant sign of $\ty\pl_{\alpha,c}$ for all $\rvx,\alpha$. Let us see that this constant sign is implied by a sufficiently severe channel-wise collapse.

We start by proving the result with the two distinct conditionalities: (i) $\sigma\uc(\ty\pl)=0$ and (ii) $\sigma\uc(\ty\pl)>0$.

\paragraph{Conditionality $\sigma\uc(\ty\pl)=0$.}
If $\sigma\uc(\ty\pl)=0$, then $\ty\pl_{\alpha,c}=\mu\uc(\ty\pl)$, $\forall \rvx,\alpha$.

Let us then define $\lambda\uc$ such that $\lambda\uc=0$ if $\mu\uc(\ty\pl)=0$, and $\lambda\uc=\frac{\phi(\mu\uc(\ty\pl))}{\mu\uc(\ty\pl)}$ otherwise.

For any choice of positive homogeneous $\phi$, it holds that $\phi(0)=0$. Combined with the definition of $\lambda\uc$, this implies $\phi(\mu\uc(\ty\pl))=\lambda\uc\mu\uc(\ty\pl)$ and thus $\forall \rvx,\alpha$:
\begin{align*}
\rvz\pl_{\alpha,c}=\phi(\ty\pl_{\alpha,c})=\phi(\mu\uc(\ty\pl))=\lambda\uc\mu\uc(\ty\pl)= \lambda\uc\ty\pl_{\alpha,c}.
\end{align*}

Given the definition of the linearized post-activations $\wtz\pl$, this means 
\begin{align*}
\E_{\rvx,\alpha}\Big[ \big(\rvz\pl_{\alpha,c} - \wtz\pl_{\alpha,c} \big)^2 \Big]\leq \E_{\rvx,\alpha}\Big[ \big(\rvz\pl_{\alpha,c} - \lambda\uc\ty\pl_{\alpha,c} \big)^2 \Big]=0.
\end{align*}

This immediately implies $\forall \rvx,\alpha$: $\rvz\pl_{\alpha,c}=\wtz\pl_{\alpha,c}$. Thus, for any $(\phi,H,W,\D,\Theta\pl)$ such that $\sigma\uc(\ty\pl)=0$, it holds that $\forall \rvx,\alpha$: $\rvz\pl_{\alpha,c}=\wtz\pl_{\alpha,c}$. 

More concisely, it holds for any choice of $(\phi,H,W,\D,\Theta\pl)$ that
\begin{align}
\sigma\uc(\ty\pl) = 0 
\implies \rvz\pl_{\alpha,c}=\wtz\pl_{\alpha,c} \;  \forall \rvx,\alpha. \label{eq:channelwise_linearity1}
\end{align}

\paragraph{Conditionality $\sigma\uc(\ty\pl)>0$.}
We start by fixing $(\phi,H,W,\D,\Theta\pl)$. For any given $k>0$, Chebyshev's inequality implies
\begin{align*}
\P_{\rvx,\alpha}\Big[ |\ty\pl_{\alpha,c}-\mu\uc(\ty\pl)| \geq k\sigma\uc(\ty\pl) \Big] & \leq \frac{1}{k^2}, \\
\P_{\rvx,\alpha}\Big[ \big(\ty\pl_{\alpha,c}-\mu\uc(\ty\pl)\big)^2 \geq k^2\sigma\uc(\ty\pl)^2 \Big] & \leq \frac{1}{k^2}.
\end{align*}

Thus, if $\pwr\uc(\ty\pl) - \pwrone\uc(\ty\pl) \leq \eta \pwrone\uc(\ty\pl)$ for some $\eta>0$, it holds for any given $k>0$ that
\begin{align*}
1-\frac{1}{k^2} 
& \leq \P_{\rvx,\alpha}\Big[ \big(\ty\pl_{\alpha,c}-\mu\uc(\ty\pl)\big)^2 < k^2\sigma\uc(\ty\pl)^2 \Big]  
 = \P_{\rvx,\alpha}\Big[ \big(\ty\pl_{\alpha,c}-\mu\uc(\ty\pl)\big)^2 < k^2 \big(\pwr\uc(\ty\pl) - \pwrone\uc(\ty\pl)\big) \Big] \\
& \leq \P_{\rvx,\alpha}\Big[ \big(\ty\pl_{\alpha,c}-\mu\uc(\ty\pl)\big)^2 < k^2\eta\pwrone\uc(\ty\pl) \Big] 
 = \P_{\rvx,\alpha}\Big[ \big(\ty\pl_{\alpha,c}-\mu\uc(\ty\pl)\big)^2 < k^2\eta\mu\uc(\ty\pl)^2 \Big]  \\
& \leq \P_{\rvx,\alpha}\Big[ |\ty\pl_{\alpha,c}-\mu\uc(\ty\pl)| < k \sqrt{\eta} |\mu\uc(\ty\pl)| \Big]. 
\end{align*}

Choosing $k=\frac{1}{\sqrt{\eta}}$, we get
\begin{align*}
\P_{\rvx,\alpha}\Big[ |\ty\pl_{\alpha,c}-\mu\uc(\ty\pl)| \geq |\mu\uc(\ty\pl)| \Big] \leq \eta.
\end{align*}

Now if we suppose that $\eta$ is such that $\frac{1}{HW|\D|}>\eta>0$, we get
\begin{align}
\P_{\rvx,\alpha}\Big[ |\ty\pl_{\alpha,c}-\mu\uc(\ty\pl)| \geq |\mu\uc(\ty\pl)| \Big] < \frac{1}{HW|\D|}. \label{eq:channelwise_linearity2}
\end{align}

\Eqref{eq:channelwise_linearity2} could not hold if there existed $\rvx\in \D$ and $\alpha \in \{1,\dots,H\} \times \{1,\dots,W\}$ such that $|\ty\pl_{\alpha,c}-\mu\uc(\ty\pl)| \geq |\mu\uc(\ty\pl)|$. Consequently, $|\ty\pl_{\alpha,c}-\mu\uc(\ty\pl)| < |\mu\uc(\ty\pl)|$ for all $\rvx,\alpha$, implying that there exists a tensor $\rvr\pl\in \R^{H\times W\times C\ul}$ that implicitly depends on $\rvx$ such that $\forall \rvx,\alpha$:
\begin{align}
\ty\pl_{\alpha,c} = \rvr\pl_{\alpha,c} \mu\uc(\ty\pl), \qquad  \rvr\pl_{\alpha,c} \geq 0. \label{eq:channelwise_linearity3}
\end{align}

Now if we combine $\sigma\uc(\ty\pl)>0$ with $\pwr\uc(\ty\pl)-\pwrone\uc(\ty\pl) \leq \eta \pwrone\uc(\ty\pl)$, we deduce that $\pwrone\uc(\ty\pl)>0$ and thus that $\mu\uc(\ty\pl) \neq 0$. Combining this with \Eqref{eq:channelwise_linearity3}, we get $\forall \rvx,\alpha$:
\begin{align*}
\rvz\pl_{\alpha,c} = \phi(\ty\pl_{\alpha,c}) = \rvr\pl_{\alpha,c} \phi(\mu\uc(\ty\pl)) 
= \frac{\phi(\mu\uc(\ty\pl))}{\mu\uc(\ty\pl)} \rvr\pl_{\alpha,c} \mu\uc(\ty\pl)
= \lambda\uc \ty\pl_{\alpha,c},
\end{align*}
where we defined $\lambda\uc\equiv\frac{\phi(\mu\uc(\ty\pl))}{\mu\uc(\ty\pl)}$.

Given the definition of the linearized post-activations $\wtz\pl$, this means 
\begin{align*}
\E_{\rvx,\alpha}\Big[ \big(\rvz\pl_{\alpha,c} - \wtz\pl_{\alpha,c} \big)^2 \Big]\leq \E_{\rvx,\alpha}\Big[ \big(\rvz\pl_{\alpha,c} - \lambda\uc\ty\pl_{\alpha,c} \big)^2 \Big]=0.
\end{align*}

This immediately implies $\forall \rvx,\alpha$: $\rvz\pl_{\alpha,c}=\wtz\pl_{\alpha,c}$. 

Thus, if we fix some $d \in \sN^{*}$ and if we define $\tilde{\eta}=\frac{1}{2d}$, it holds for any choice of $(\phi,H,W,\D,\Theta\pl)$ such that (i) $HW|\D|=d$, (ii) $\pwr\uc(\ty\pl) - \pwrone\uc(\ty\pl) \leq \tilde{\eta} \pwrone\uc(\ty\pl)$, (iii) $\sigma\uc(\ty\pl)>0$, that $\forall \rvx,\alpha$: $\rvz\pl_{\alpha,c}=\wtz\pl_{\alpha,c}$. 

More concisely, it holds for any choice of $(\phi,H,W,\D,\Theta\pl)$ that
\begin{align}
\Big( HW|\D| = d \Big) 
\land \Big( \pwr\uc(\ty\pl) - \pwrone\uc(\ty\pl) \leq \tilde{\eta} \pwrone\uc(\ty\pl) \Big)
\land \Big( \sigma\uc(\ty\pl) > 0 \Big) 
\implies \rvz\pl_{\alpha,c}=\wtz\pl_{\alpha,c} \;  \forall \rvx,\alpha. \label{eq:channelwise_linearity4}
\end{align}

\paragraph{General case.} To wrap up, if we fix some $d \in \sN^{*}$ and if we reuse the definition $\tilde{\eta}=\frac{1}{2d}$, it holds for any choice of $(\phi,H,W,\D,\Theta\pl)$ that
\begin{align}
& \Big( HW|\D| = d \Big) 
\land \Big( \pwr\uc(\ty\pl) - \pwrone\uc(\ty\pl) \leq\tilde{\eta} \pwrone\uc(\ty\pl) \Big)\nonumber \\
& \qquad \implies \Bigg( \Big( HW|\D| = d \Big) \land \Big( \pwr\uc(\ty\pl) - \pwrone\uc(\ty\pl) \leq \tilde{\eta} \pwrone\uc(\ty\pl) \Big) \land \Big( \sigma\uc(\ty\pl) = 0 \Big) \Bigg)\nonumber \\
& \qquad \qquad \lor \Bigg( \Big( HW|\D| = d \Big) \land \Big( \pwr\uc(\ty\pl) - \pwrone\uc(\ty\pl) \leq \tilde{\eta} \pwrone\uc(\ty\pl) \Big) \land \Big( \sigma\uc(\ty\pl) > 0 \Big) \Bigg) \nonumber \\
& \qquad \implies \Big( \rvz\pl_{\alpha,c}=\wtz\pl_{\alpha,c} \;  \forall \rvx,\alpha \Big) \lor \Big( \rvz\pl_{\alpha,c}=\wtz\pl_{\alpha,c} \;  \forall \rvx,\alpha \Big) \label{eq:channelwise_linearity5} \\
& \qquad \implies \Big( \rvz\pl_{\alpha,c}=\wtz\pl_{\alpha,c} \;  \forall \rvx,\alpha \Big), \nonumber 
\end{align}
where \Eqref{eq:channelwise_linearity5} is obtained using \Eqref{eq:channelwise_linearity1} and \Eqref{eq:channelwise_linearity4} and $\land$, $\lor$ are the logical \say{and} and \say{or}. \qed
\end{proof}

\begin{Proposition}
If we fix some $d \in \sN^{*}$, there exists $\widetilde{\eta}>0$ such that for any choice of $(\phi,H,W,\D,\Theta\pl)$, it holds that
\begin{align*}
\Big( HW|\D|=d \Big) \land \Big( \pwr\uc(\ty\pl) - \pwrone\uc(\ty\pl) \leq \widetilde{\eta} \pwr\uc(\ty\pl) \Big)  \implies \rvz\pl_{\alpha,c} = \wtz\pl_{\alpha,c} \;  \forall \rvx,\alpha,
 \end{align*}
where $\wtz\pl$ are the linearized post-activations defined in \Eqref{eq:linearized1}, (\ref{eq:linearized2}) and $\land$ is the logical \say{and}.
\label{prop:channelwise_linearity2}
\end{Proposition}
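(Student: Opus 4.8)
The plan is to reduce this statement to \Propref{prop:channelwise_linearity1}, which already furnishes channel-wise linearity from a bound on the channel-wise collapse measured \emph{relative to} $\pwrone\uc(\ty\pl)$. The first step is to translate both collapse conditions into statements about $\mu\uc(\ty\pl)$ and $\sigma\uc(\ty\pl)$. By the moment definitions, $\pwrone\uc(\ty\pl)=\mu\uc(\ty\pl)^2$ and $\pwr\uc(\ty\pl)=\mu\uc(\ty\pl)^2+\sigma\uc(\ty\pl)^2$, so that $\pwr\uc(\ty\pl)-\pwrone\uc(\ty\pl)=\sigma\uc(\ty\pl)^2$. The hypothesis of \Propref{prop:channelwise_linearity1} thus reads $\sigma\uc(\ty\pl)^2\leq\tilde{\eta}\,\mu\uc(\ty\pl)^2$, whereas the hypothesis here reads $\sigma\uc(\ty\pl)^2\leq\widetilde{\eta}\big(\mu\uc(\ty\pl)^2+\sigma\uc(\ty\pl)^2\big)$.

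Given $d\in\sN^{*}$, I would first invoke \Propref{prop:channelwise_linearity1} to obtain the constant $\tilde{\eta}>0$ for which the $\pwrone$-relative bound forces $\rvz\pl_{\alpha,c}=\wtz\pl_{\alpha,c}$ for all $\rvx,\alpha$. I would then set $\widetilde{\eta}=\frac{\tilde{\eta}}{1+\tilde{\eta}}$, which in particular satisfies $\widetilde{\eta}<1$. The core step is a one-line rearrangement: from $\sigma\uc(\ty\pl)^2\leq\widetilde{\eta}\big(\mu\uc(\ty\pl)^2+\sigma\uc(\ty\pl)^2\big)$ one gets $(1-\widetilde{\eta})\,\sigma\uc(\ty\pl)^2\leq\widetilde{\eta}\,\mu\uc(\ty\pl)^2$, and dividing by the strictly positive $1-\widetilde{\eta}$ yields $\sigma\uc(\ty\pl)^2\leq\frac{\widetilde{\eta}}{1-\widetilde{\eta}}\,\mu\uc(\ty\pl)^2=\tilde{\eta}\,\mu\uc(\ty\pl)^2$. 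This is exactly the hypothesis of \Propref{prop:channelwise_linearity1} (with the same $d$, since $HW|\D|=d$ is assumed), so its conclusion applies directly and the proof closes.

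The only point requiring care — and what plays the role of the ``hard part'' in this otherwise purely algebraic reduction — is ensuring $\widetilde{\eta}<1$ so that dividing by $1-\widetilde{\eta}$ is both valid and sign-preserving; the choice $\widetilde{\eta}=\tilde{\eta}/(1+\tilde{\eta})$ guarantees this for every $\tilde{\eta}>0$. Notably, no separate treatment of the degenerate case $\mu\uc(\ty\pl)=0$ is needed: the derived inequality is precisely \Propref{prop:channelwise_linearity1}'s hypothesis, and that proposition already absorbs both the $\sigma\uc(\ty\pl)=0$ and $\sigma\uc(\ty\pl)>0$ conditionalities through \Eqref{eq:channelwise_linearity1} and \Eqref{eq:channelwise_linearity4}. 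In fact, when $\widetilde{\eta}<1$ the present hypothesis already excludes the configuration $\mu\uc(\ty\pl)=0$ with $\sigma\uc(\ty\pl)>0$, which is consistent with the reduction.
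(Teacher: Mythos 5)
Your proof is correct and is essentially the paper's own argument: the paper likewise observes that for $1>\eta>0$ the condition $\pwr\uc(\ty\pl)-\pwrone\uc(\ty\pl)\leq\eta\,\pwr\uc(\ty\pl)$ is equivalent to $\pwr\uc(\ty\pl)-\pwrone\uc(\ty\pl)\leq\frac{\eta}{1-\eta}\pwrone\uc(\ty\pl)$, sets $\widetilde{\eta}=\tilde{\eta}/(1+\tilde{\eta})$ with $\tilde{\eta}$ from \Propref{prop:channelwise_linearity1}, and concludes by that proposition. Your rephrasing via $\sigma\uc(\ty\pl)^2$ and $\mu\uc(\ty\pl)^2$ is the same algebra, and your closing observations about the degenerate cases are accurate but not needed.
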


\begin{proof}\textbf{.}
We start by noting that for any $1>\eta>0$:
\begin{align*}
 \pwr\uc(\ty\pl) - \pwrone\uc(\ty\pl) \leq \eta \pwr\uc(\ty\pl) & \iff  \pwr\uc(\ty\pl) - \pwrone\uc(\ty\pl) \leq \eta \Big( \pwr\uc(\ty\pl) - \pwrone\uc(\ty\pl) + \pwrone\uc(\ty\pl) \Big) \\
& \iff \pwr\uc(\ty\pl) - \pwrone\uc(\ty\pl) \leq \frac{\eta}{1-\eta} \pwrone\uc(\ty\pl).
 \end{align*}

Thus, if we fix some $d \in \sN^{*}$ and if we define $\tilde{\eta}=\frac{1}{2d}$ and $\widetilde{\eta}=\frac{\tilde{\eta}}{1+\tilde{\eta}}$, it holds for any choice of $(\phi,H,W,\D,\Theta\pl)$ that
\begin{align}
& \Big( HW|\D| = d \Big) 
\land \Big( \pwr\uc(\ty\pl) - \pwrone\uc(\ty\pl) \leq\widetilde{\eta} \pwr\uc(\ty\pl) \Big)  \nonumber \\
& \qquad \iff \Big( HW|\D| = d \Big) \land \Big( \pwr\uc(\ty\pl) - \pwrone\uc(\ty\pl) \leq\frac{\widetilde{\eta}}{1-\widetilde{\eta}} \pwrone\uc(\ty\pl) \Big) \nonumber \\
& \qquad \iff \Big( HW|\D| = d \Big) \land \Big( \pwr\uc(\ty\pl) - \pwrone\uc(\ty\pl) \leq\tilde{\eta} \pwrone\uc(\ty\pl) \Big)\nonumber \\
& \qquad \implies \Big( \rvz\pl_{\alpha,c}=\wtz\pl_{\alpha,c} \;  \forall \rvx,\alpha \Big), \label{eq:channelwise_linearity6}
\end{align}
where \Eqref{eq:channelwise_linearity6} is obtained using \Propref{prop:channelwise_linearity1} and $\land$ is the logical \say{and}. \qed
\end{proof}

\subsection{Alteration of expressivity with IN}
\label{sec:alteration_IN}

In this section, we first prove that, for any dataset $\D$, networks without normalization can express mappings arbitrarily close to the identity (\Propref{prop:identity_nonorm}). We then prove that, in general, networks with IN cannot express mappings arbitrarily close to the identity (\Propref{prop:identity_in}).

\begin{Proposition}
\label{prop:identity_nonorm}
Lift any assumptions on $\phi$ and suppose instead that $\phi$ is non-polynomial. Further suppose that each layer $l$ up to depth $L$ implements the following two steps $\forall \alpha,c$:
\begin{align}
\rvy\pl_{\alpha,c} & = (\vomega\pl \ast \rvz\plm)_{\alpha,c} + \rvb\pl\uc, \label{eq_identity_nonorm1} \\
 \rvz\pl_{\alpha,c} & = \phi\big(\rvy\pl_{\alpha,c} \big), \label{eq_identity_nonorm2} 
\end{align}
where $\rvz\po\equiv\rvx$, and $\vomega\pl \in \R^{K\ul \times K\ul \times C\ulm \times C\ul}$ and $\rvb\pl \in \R^{C\ul}$ are the weights and biases at layer $l$.

Now fix a layer $l\in\{1,\dots,L \}$, the spatial extents $H$, $W$, the widths $C\uo$, $C\ul$ assumed equal at layer~$0$ and layer~$l$, and the dataset $\D$. Further denote $\Phi\ul$ the network mapping from $\rvx$ to $\rvy\pl$ such that $\rvy\pl=\Phi\ul(\rvx)$. 

Then for any $\epsilon>0$, there exists a choice of intermediate widths $(C\uk)_{1\leq k <l}$ and model parameters $(\vomega^1,\rvb^1,\dots,\vomega\pl,\rvb\pl)$ such that
\begin{align*}
\max_{\rvx\in \D} ||\Phi\ul(\rvx) - \rvx || \leq \epsilon.
\end{align*}
\end{Proposition}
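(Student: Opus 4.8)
The plan is to strip away the convolutional and spatial structure and reduce the statement to a single application of the classical universal approximation theorem for one-hidden-layer networks with non-polynomial activation (Leshno, Lin, Pinkus and Schocken). First I would take every kernel $\vomega\pk$ to be supported only at its center, i.e. an effective $1\times1$ convolution, by zeroing all off-center weights. Then each layer acts identically and independently at every spatial position, so the whole network collapses to a single depth-$l$ multilayer perceptron $\psi:\R^{C\uo}\to\R^{C\ul}$ with weights shared across pixels, applied to each $C\uo$-dimensional pixel vector. Since the identity acts per pixel and $C\uo=C\ul$, it suffices to make $\psi$ approximate the identity $\identity:\R^{C\uo}\to\R^{C\uo}$ on the finite set $S=\{(\rvx_{\alpha,1},\dots,\rvx_{\alpha,C\uo}):\rvx\in\D,\ \alpha\}\subset\R^{C\uo}$ to within $\epsilon/\sqrt{HW}$: summing the per-pixel errors over the $HW$ positions then gives $\|\Phi\ul(\rvx)-\rvx\|\le\epsilon$ for every $\rvx\in\D$.

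For $l=1$ the map $\Phi\ul$ is affine, so the centered identity kernel together with $\rvb^1=0$ gives $\rvy^1=\rvx$ exactly. For $l\ge2$ I would split the network into a \emph{front part} (layers $1$ through $l-2$) producing a feature $\rvz^{l-2}=F^{l-2}(\rvx_\alpha)$, and the final two layers, which form the one-hidden-layer network $\rvz^{l-2}\mapsto\vomega\pl\ast\phi(\vomega\plm\ast\rvz^{l-2}+\rvb\plm)+\rvb\pl$ reading out the pre-activation $\rvy\pl$. The crucial requirement is that the front part be chosen so that $F^{l-2}$ is \emph{injective} on the finite set $S$; for $l=2$ the front part is empty and $F^0=\identity$ is trivially injective.

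To secure injectivity I would argue by genericity, enlarging the intermediate widths $C\uk$ as needed. A single affine layer of full column rank maps the finitely many distinct points of $S$ to distinct pre-activation vectors, and since $\phi$ is non-polynomial it is in particular non-constant, so for each pair of distinct points at least one output coordinate can be made to separate them; the parameters failing to do so form, for each of the finitely many pairs, a measure-zero set, whose union is still measure zero. A generic choice of weights and biases therefore makes $\rvx_\alpha\mapsto\phi(\vomega\pk\ast\cdot+\rvb\pk)$ injective on the relevant finite set, and composing injective layers yields an injective $F^{l-2}$. Its image $T=F^{l-2}(S)$ is then finite, so the decoding function $g:T\to\R^{C\uo}$ defined by $g(F^{l-2}(\rvx_\alpha))=\rvx_\alpha$ is well defined, and I would extend it to a continuous function on a compact neighborhood of $T$.

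Finally I would invoke universal approximation: because $\phi$ is non-polynomial, the one-hidden-layer network formed by layers $l-1$ and $l$ approximates the continuous function $g$ uniformly on $T$ to within $\epsilon/\sqrt{HW}$ once the intermediate width $C\ulm$ is large enough, giving $\rvy\pl_\alpha\approx g(\rvz^{l-2}_\alpha)=\rvx_\alpha$ at every pixel and hence the claimed bound after aggregating over positions. I expect the injectivity step for $l>2$ to be the main obstacle: since $\phi$ need not be monotonic, nor even possess any interval of injectivity, one cannot simply route the pre-activations into a monotone branch of $\phi$, and the argument must instead rest on the perturbation fact that a non-constant continuous $\phi$ composed with a generic affine map separates any prescribed finite set.
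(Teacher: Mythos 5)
Your overall architecture is sound and your reductions to $1\times1$ kernels and to a per-pixel map with error budget $\epsilon/\sqrt{HW}$ are exactly the paper's first two steps. Where you diverge is the core approximation step: the paper simply invokes the multilayer universal approximation theorem of Leshno et al.\ (whose result is stated for multilayer, not just one-hidden-layer, networks with non-polynomial activation), applied to the whole depth-$l$ perceptron at once, whereas you reduce to the single-hidden-layer version by asking the first $l-2$ layers to be injective on the finite set $S$ and letting the last two layers approximate the inverse map. That is a legitimate alternative route, but it forces you to prove the injectivity claim yourself, and that is where your argument breaks.

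The genericity step is false as stated. You claim that, for a fixed pair of distinct points $u\neq v$, the parameters $(w,b)$ of a single unit for which $\phi(w^{\mathrm{T}}u+b)=\phi(w^{\mathrm{T}}v+b)$ form a measure-zero set. The proposition only assumes $\phi$ is non-polynomial, so $\phi=\relu$ is admissible; for $\relu$ the failing set contains the open set $\{(w,b): w^{\mathrm{T}}u+b<0,\ w^{\mathrm{T}}v+b<0\}$, which has positive measure, and the same happens for any continuous $\phi$ that is constant on some interval. Taking a product over the units of a layer and a union over pairs therefore does not give a null set, and "almost every choice of parameters separates $S$" is simply wrong. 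The repair is easy and you should make it explicit: for each of the finitely many pairs the set of separating $(w,b)$ is non-empty (pick $w$ with $w^{\mathrm{T}}u\neq w^{\mathrm{T}}v$, then a bias placing the two pre-activations at points where the non-constant $\phi$ takes different values) and open, so with layer width at least the number of pairs you can dedicate one unit per pair and obtain a layer that is injective on the relevant finite set; composing such layers gives an injective front part. With that fix, together with a word on why the output biases $\rvb\pl$ are absorbed into the Leshno et al.\ network class, your proof goes through. Alternatively, you could drop the injectivity machinery entirely and cite the multilayer form of the universal approximation theorem directly, as the paper does.
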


\begin{proof}\textbf{.} The proof proceeds in multiple steps of increasing generality.

\paragraph{Case of unit spatial extent of activations and kernels.} When $H$, $W$ are equal to $1$ and $K\uk$ is equal to one at every layer $k$, the propagation of \Eqref{eq_identity_nonorm1}, (\ref{eq_identity_nonorm2}) becomes strictly equivalent to the propagation in a fully-connected network. 

If $l$ is the first layer in the network ($l=1$), we may obtain the strict equality $\Phi\ul=\identity$ by choosing the reshaped matricial version $\mW\pl\in\R^{C\uo \times C\ul}$ of $\vomega\pl$ as the identity and $\rvb\pl$ as zero.

Otherwise ($l\geq 2$), we may apply the universal approximation theorem \citep{Leshno93}. Given the assumption of non-polynomial activation function $\phi$, this means that for any $\epsilon>0$, there exists a choice of intermediate widths $(C\uk)_{1\leq k <l}$ and model parameters $(\vomega^1,\rvb^1,\dots,\vomega\pl,\rvb\pl)$ such that
\begin{align}
\max_{\rvx\in \D} ||\Phi\ul(\rvx) - \rvx || \leq \epsilon. \label{eq_identity_nonorm3} 
\end{align}

\paragraph{Case of unit spatial extent of kernels.} When $K\uk$ is equal to one at every layer $k$, the propagation of \Eqref{eq_identity_nonorm1}, (\ref{eq_identity_nonorm2}) \say{occurs} strictly independently for each spatial position $\alpha$.

Let us then consider a neural network that takes an input $\bar{\rvx}\in \R^{1\times 1\times C\uo}$ and provides \mbox{$\bar{\rvy}\pl$, $\bar{\rvz}\pl$} at every layer $l$ by implementing the same steps as \Eqref{eq_identity_nonorm1}, (\ref{eq_identity_nonorm2}). Let us denote $\Psi\ul$ the network mapping from $\bar{\rvx}$ to $\bar{\rvy}\pl$ such that $\bar{\rvy}\pl=\Psi\ul(\bar{\rvx})$. And let us further denote $\overline{\D}=\{ \bar{\rvx}^{(\rvx,\alpha)} \}_{\rvx\in \D, \alpha\in \{1,\dots,H\} \times \{1,\dots,W\}}$, where $\bar{\rvx}^{(\rvx,\alpha)}\in \R^{1\times1\times C\uo}$ denotes the reshaped version of $\rvx_{\alpha,:}\in \R^{C\uo}$ for any $\rvx$, $\alpha$.

If we fix any $\epsilon>0$ and if we apply \Eqref{eq_identity_nonorm3} from the previous case with $\frac{1}{\sqrt{H W}} \epsilon$, we get that there exists a choice of intermediate widths \mbox{$(C\uk)_{1\leq k <l}$ and model parameters $(\vomega^1,\rvb^1,\dots,\vomega\pl,\rvb\pl)$ such that}
\begin{align}
\max_{\bar{\rvx}\in \overline{\D}} ||\Psi\ul(\bar{\rvx}) - \bar{\rvx} || 
= \max_{\rvx\in \D,\alpha \in \{1,\dots,H\} \times \{1,\dots,W\}} ||\Psi\ul(\bar{\rvx}^{(\rvx,\alpha)}) - \bar{\rvx}^{(\rvx,\alpha)} || 
\leq \frac{1}{\sqrt{H W}} \epsilon.  \label{eq_identity_nonorm4} 
\end{align}
Let us then fix $(C\uk)_{1\leq k <l}$ and $(\vomega^1,\rvb^1,\dots,\vomega\pl,\rvb\pl)$ such that \Eqref{eq_identity_nonorm4} holds.

Due to the independence of spatial positions, the mapping $\Phi\ul$ is such that $\Phi\ul(\rvx)_{\alpha,:}$ is a reshaped version of $\Psi\ul(\bar{\rvx}^{(\rvx,\alpha)})$ for any $\rvx$, $\alpha$. This means that $\forall \rvx\in \D$ and $\forall \alpha \in \{1,\dots,H\} \times \{1,\dots,W\}$:
\begin{align*}
||\Phi\ul(\rvx)_{\alpha,:} - \rvx_{\alpha,:} || & = || \Psi\ul(\bar{\rvx}^{(\rvx,\alpha)}) - \bar{\rvx}^{(\rvx,\alpha)}|| \leq \frac{1}{\sqrt{H W}} \epsilon, \\
||\Phi\ul(\rvx) - \rvx ||^2 & = \sum_{ \alpha \in \{1,\dots,H\} \times \{1,\dots,W\}} ||\Phi\ul(\rvx)_{\alpha,:} - \rvx_{\alpha,:} ||^2 
  \leq \sum_{ \alpha \in \{1,\dots,H\} \times \{1,\dots,W\}}  \frac{1}{H W} \epsilon^2  \leq \epsilon^2.
\end{align*}
This immediately implies
\begin{align*}
\max_{\rvx\in \D} ||\Phi\ul(\rvx) - \rvx || \leq \epsilon.
\end{align*}

\paragraph{General case.} Let us consider the neural network that takes $\rvx\in \D$ as input and provides $\bar{\rvy}\pl$, $\bar{\rvz}\pl$ at every layer $l$ by implementing \Eqref{eq_identity_nonorm1}, (\ref{eq_identity_nonorm2}) with weights $\bar{\vomega}\pl \in \R^{1 \times 1 \times C\ulm \times C\ul}$, biases $\bar{\rvb}\pl \in \R^{C\ul}$ and activation function $\phi$. Let us then denote $\Psi\ul$ the network mapping from $\rvx$ to $\bar{\rvy}\pl$ such that $\bar{\rvy}\pl=\Psi\ul(\rvx)$.

If we fix any $\epsilon>0$, we get from the previous case that there exists a choice of intermediate widths $(C\uk)_{1\leq k <l}$ and model parameters $(\bar{\vomega}^1,\bar{\rvb}^1,\dots,\bar{\vomega}\pl,\bar{\rvb}\pl)$ such that
\begin{align}
\max_{\rvx\in \D} ||\Psi\ul(\rvx) - \rvx || \leq \epsilon.  \label{eq_identity_nonorm5}
\end{align}
Let us then fix $(C\uk)_{1\leq k <l}$ and $(\bar{\vomega}^1,\bar{\rvb}^1,\dots,\bar{\vomega}\pl,\bar{\rvb}\pl)$ such that \Eqref{eq_identity_nonorm5} holds. Let us further define the weights and biases $\vomega\pk$, $\rvb\pk$ at each layer $k$ such that $\forall h,w,c,c'$:
\begin{align*}
\vomega\pk_{h,w,c,c'} & \equiv \left\{
    \begin{array}{cc}
        & \text{if the multi-index $(h,w,c,c')$ in the weights} \\
        \bar{\vomega}\pk_{1,1,c,c'} & \text{associates spatial positions $\alpha$ in the convolution input $\rvz\plm_{\alpha,c}$} \\
        & \text{to the same spatial positions $\alpha$ in the convolution output $\rvy\pl_{\alpha,c'}$}, \\
        \\
        0 & \text{otherwise,} 
    \end{array}
\right. \\
\rvb\pk_{c'} & \equiv \bar{\rvb}\pk_{c'}.
\end{align*}

Then it holds that $\Phi\ul=\Psi\ul$, which in turn implies
\begin{align*}
\max_{\rvx\in \D} ||\Phi\ul(\rvx) - \rvx || \leq \epsilon.  \tag*{\qed}
\end{align*}
\end{proof}

\begin{Proposition}
\label{prop:identity_in}
Suppose that the neural network implements \Eqref{eq:propagation1}, (\ref{eq:propagation2}), (\ref{eq:propagation3}) in every layer up to depth $L$ and suppose $\Norm=\IN$.

Further fix a layer $l\in\{1,\dots,L\}$, the spatial extents $H$, $W$, the widths $C\uo$, $C\ul$ assumed equal at layer~$0$ and layer~$l$, and any dataset $\D$ such that there exists at least one channel in which the inputs of $\D$ do not all share the same statistics of instance mean, i.e.
\begin{align*}
\exists c,\; \exists \rvx', \rvx'' \in \D: \qquad \E_\alpha[\rvx'_{\alpha,c}] \neq \E_\alpha[\rvx''_{\alpha,c}].
\end{align*}
Then there exists $\epsilon>0$ such that for any choice of intermediate widths $(C\uk)_{1\leq k <l}$ and model parameters $(\vomega^1,\vbeta^1,\vgamma^1,\dots,\vomega\pl,\vbeta\pl,\vgamma\pl)$, it holds that
\begin{align*}
\max_{\rvx\in \D} ||\Phi\ul(\rvx) - \rvx || > \epsilon,
\end{align*}
where $\Phi\ul$ denotes the network mapping from $\rvx$ to $\ty\pl$ such that $\ty\pl=\Phi\ul(\rvx)$, $\forall \rvx$. 
\end{Proposition}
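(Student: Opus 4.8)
The plan is to exploit the rigidity that IN imposes on instance-level mean statistics, which by \Thmref{thm:in_variability} is \emph{independent of the parameters} and therefore cannot be bent towards the identity. Concretely, applying \Thmref{thm:in_variability} at layer $l$ yields $\pwrone\uc(\rvy\pl)=0$ and $\pwrtwo\uc(\rvy\pl)=0$ for every channel $c$, which together force $\mu_{\rvx,c}(\rvy\pl)=0$ for all inputs $\rvx$ (IN centers each instance in each channel, irrespective of whether the denominator vanishes). After the affine step $\ty\pl_{\alpha,c}=\vgamma\pl\uc\rvy\pl_{\alpha,c}+\vbeta\pl\uc$, the instance mean becomes $\mu_{\rvx,c}(\ty\pl)=\vbeta\pl\uc$, which is the \emph{same constant for all $\rvx$}, no matter which intermediate widths $(C\uk)_{1\leq k<l}$ and parameters are chosen. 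This structural constraint is the entire engine of the argument.

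Next I would isolate the channel $c$ and the two inputs $\rvx',\rvx''\in\D$ furnished by the hypothesis, and set $\delta\equiv|\E_\alpha[\rvx'_{\alpha,c}]-\E_\alpha[\rvx''_{\alpha,c}]|>0$. Since $\mu_{\rvx',c}(\ty\pl)=\mu_{\rvx'',c}(\ty\pl)=\vbeta\pl\uc$, a triangle inequality gives
\[
\big|\mu_{\rvx',c}(\ty\pl)-\E_\alpha[\rvx'_{\alpha,c}]\big|+\big|\mu_{\rvx'',c}(\ty\pl)-\E_\alpha[\rvx''_{\alpha,c}]\big|\;\geq\;\big|\E_\alpha[\rvx'_{\alpha,c}]-\E_\alpha[\rvx''_{\alpha,c}]\big|\;=\;\delta,
\]
so at least one of $\rvx\in\{\rvx',\rvx''\}$ satisfies $|\mu_{\rvx,c}(\ty\pl)-\E_\alpha[\rvx_{\alpha,c}]|\geq\delta/2$. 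The conceptual point is that because $\vbeta\pl\uc$ is pinned to a single value, it cannot simultaneously track two distinct input instance-means, so one of them is missed by at least $\delta/2$.

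Finally I would convert this mismatch of instance means into a lower bound on $\|\Phi\ul(\rvx)-\rvx\|$. Writing $\mu_{\rvx,c}(\ty\pl)-\E_\alpha[\rvx_{\alpha,c}]=\E_\alpha[\ty\pl_{\alpha,c}-\rvx_{\alpha,c}]$ and applying Cauchy--Schwarz over the $HW$ spatial positions yields
\[
\big|\mu_{\rvx,c}(\ty\pl)-\E_\alpha[\rvx_{\alpha,c}]\big|\;\leq\;\sqrt{\E_\alpha\big[(\ty\pl_{\alpha,c}-\rvx_{\alpha,c})^2\big]}\;\leq\;\tfrac{1}{\sqrt{HW}}\,\|\Phi\ul(\rvx)-\rvx\|.
\]
Combining with the previous step gives $\max_{\rvx\in\D}\|\Phi\ul(\rvx)-\rvx\|\geq\tfrac{\sqrt{HW}}{2}\,\delta$, so the claim holds with any $\epsilon<\tfrac{\sqrt{HW}}{2}\delta$, e.g. $\epsilon=\tfrac{\sqrt{HW}}{4}\delta$. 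I do not anticipate a serious obstacle; the only thing to watch is \emph{uniformity}, since the bound must hold across every choice of intermediate widths and parameters at once. This is automatic, because the sole ingredient used, namely that $\mu_{\rvx,c}(\ty\pl)=\vbeta\pl\uc$ is $\rvx$-independent, is a parameter-independent consequence of IN, and the triangle-inequality step never invokes the actual value of $\vbeta\pl\uc$.
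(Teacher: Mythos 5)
Your proof is correct and follows essentially the same route as the paper's: both hinge on the single structural fact that IN forces the instance mean $\mu_{\rvx,c}(\ty\pl)$ to be the same constant for every input, and both then convert the gap between the two input instance means into a lower bound on $\max_{\rvx}\|\Phi\ul(\rvx)-\rvx\|$ of order $\sqrt{HW}\,\delta$ via Jensen/Cauchy--Schwarz over spatial positions. The paper argues by contradiction and uses $(a-b)^2\leq 2a^2+2b^2$ where you use a direct triangle-inequality split, but these are cosmetic differences; your handling of the zero-denominator convention is a nice touch of extra care.
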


\begin{proof}\textbf{.} 
Let us proceed by contradiction and suppose that for any $\epsilon>0$, there exists a choice of intermediate widths $(C\uk)_{1\leq k <l}$ and model parameters $(\vomega^1,\vbeta^1,\vgamma^1,\dots,\vomega\pl,\vbeta\pl,\vgamma\pl)$ such that
\begin{align}
\max_{\rvx\in \D} ||\Phi\ul(\rvx) - \rvx || \leq \epsilon. \label{eq_identity_in1}
\end{align}
Given the assumption on $\D$, there exists some channel $c$ and some inputs $\rvx', \rvx'' \in \D$ such that
\begin{align}
\E_\alpha[\rvx'_{\alpha,c}] \neq \E_\alpha[\rvx''_{\alpha,c}]. \label{eq_identity_in2}
\end{align}
Let us then fix $c$ and $\rvx', \rvx'' \in \D$ satisfying \Eqref{eq_identity_in2}, and let us define $\eta \equiv \big| \E_\alpha[\rvx'_{\alpha,c}] - \E_\alpha[\rvx''_{\alpha,c}] \big| > 0$.

Applying \Eqref{eq_identity_in1} with $\epsilon=\frac{\sqrt{HW}}{4} \eta$, we get that there exists a choice of intermediate widths $(C\uk)_{1\leq k <l}$ and model parameters $(\vomega^1,\vbeta^1,\vgamma^1,\dots,\vomega\pl,\vbeta\pl,\vgamma\pl)$ such that
\begin{align}
\max_{\rvx\in\D} ||\Phi\ul(\rvx) - \rvx|| & \leq \frac{\sqrt{HW}}{4} \eta. \label{eq_identity_in3}
\end{align}

Let us then fix $(C\uk)_{1\leq k <l}$ and $(\vomega^1,\vbeta^1,\vgamma^1,\dots,\vomega\pl,\vbeta\pl,\vgamma\pl)$ such that \Eqref{eq_identity_in3} holds. This gives
\begin{align}
\sum_\alpha \big(\Phi\ul(\rvx')_{\alpha,c} - \rvx'_{\alpha,c}\big)^2 & \leq ||\Phi\ul(\rvx') - \rvx'||^2 \leq \Big( \frac{\sqrt{HW}}{4} \eta \Big)^2, \nonumber \\
\sum_\alpha \big(\Phi\ul(\rvx'')_{\alpha,c} - \rvx''_{\alpha,c}\big)^2 & \leq ||\Phi\ul(\rvx'') - \rvx''||^2 \leq \Big( \frac{\sqrt{HW}}{4} \eta \Big)^2, \nonumber \\
\E_\alpha \Big[ \big(\Phi\ul(\rvx')_{\alpha,c} - \rvx'_{\alpha,c}\big)^2 \Big]
+ \E_\alpha \Big[ \big(\Phi\ul(\rvx'')_{\alpha,c} - \rvx''_{\alpha,c}\big)^2 \Big] & \leq \frac{2}{HW} \Big( \frac{\sqrt{HW}}{4} \eta \Big)^2. \label{eq_identity_in4}
\end{align}

At the same time, for any input $\rvx$, it holds that
\begin{align}
\E_\alpha \Big[\big(\Phi\ul(\rvx)_{\alpha,c} - \rvx_{\alpha,c}\big)^2 \Big] 
& = \E_\alpha\Big[\Phi\ul(\rvx)_{\alpha,c} - \rvx_{\alpha,c} \Big]^2  
+ \Var_\alpha\Big[\Phi\ul(\rvx)_{\alpha,c} - \rvx_{\alpha,c} \Big] \nonumber \\
& \geq \E_\alpha\Big[\Phi\ul(\rvx)_{\alpha,c} - \rvx_{\alpha,c} \Big]^2. \label{eq_identity_in5}
\end{align}

Using $\forall a,b$: $(a-b)^2 \leq 2a^2 + 2b^2$, combined with \Eqref{eq_identity_in5} and \Eqref{eq_identity_in4}, we get
\begin{align}
\Bigg( \E_\alpha\Big[\Phi\ul(\rvx')_{\alpha,c} - \rvx'_{\alpha,c} \Big] 
& - \E_\alpha\Big[\Phi\ul(\rvx'')_{\alpha,c} - \rvx''_{\alpha,c} \Big] \Bigg)^2 \nonumber \\
& \leq 2 \E_\alpha\Big[\Phi\ul(\rvx')_{\alpha,c} - \rvx'_{\alpha,c} \Big]^2
+ 2 \E_\alpha\Big[\Phi\ul(\rvx'')_{\alpha,c} - \rvx''_{\alpha,c} \Big]^2 \nonumber \\
& \leq 2\E_\alpha \Big[ \big(\Phi\ul(\rvx')_{\alpha,c} - \rvx'_{\alpha,c}\big)^2 \Big]
+ 2\E_\alpha \Big[ \big(\Phi\ul(\rvx'')_{\alpha,c} - \rvx''_{\alpha,c}\big)^2 \Big] \nonumber \\
& \leq \frac{4}{HW} \Big( \frac{\sqrt{HW}}{4} \eta \Big)^2. \label{eq_identity_in6}
\end{align}

Next, we note that with IN all inputs $\rvx$ are associated to the same instance means in each channel of $\ty\pl=\Phi\ul(\rvx)$. This means in particular that
\begin{align}
\E_\alpha\Big[\Phi\ul(\rvx')_{\alpha,c}\Big] & = \E_\alpha\Big[\Phi\ul(\rvx'')_{\alpha,c}\Big]. \label{eq_identity_in7}
\end{align}

Combining \Eqref{eq_identity_in6} with \Eqref{eq_identity_in7}, we get
\begin{align*}
\Big( \E_\alpha[\rvx'_{\alpha,c}] - \E_\alpha[\rvx''_{\alpha,c}] \Big)^2 
& \leq \frac{4}{HW} \Big(\frac{\sqrt{HW}}{4} \eta \Big)^2, \\
\big| \E_\alpha[\rvx'_{\alpha,c}] - \E_\alpha[\rvx''_{\alpha,c}] \big| 
& \leq \frac{2}{\sqrt{HW}}\frac{\sqrt{HW}}{4} \eta = \frac{\eta}{2}.
\end{align*}

Since we earlier defined $\eta$ as $\eta \equiv \big| \E_\alpha[\rvx'_{\alpha,c}] - \E_\alpha[\rvx''_{\alpha,c}] \big| > 0$, we reach a contradiction. \qed
\end{proof}

\section{Proof of \Thmref{thm:ln_collapse}}
\label{sec:ln_collapse}

\subsection{Additional notations}

Some additional notations are required in this section. 

\paragraph{Model parameters.} We introduce the notations $\theta\pl\equiv(\vomega\pl,\vbeta\pl,\vgamma\pl)$ for the model parameters at layer~$l$ and $\Theta\pl\equiv(\vomega^1,\vbeta^1,\vgamma^1,\dots,\vomega\pl,\vbeta\pl,\vgamma\pl)$ for the aggregated model parameters up to layer~$l$.

\paragraph{Activation tensors.} For each layer $l$, we define the tensors $\hz\plm$, $\hx\pl$, $\hy\pl$, $\cy\pl$, $\cz\pl$ such that $\forall \alpha,c$:
\begin{align}
\hz\plm_{\alpha,c} & = \sqrt{\frac{\pwr(\rvz\plm)}{\pwr_\rvx(\rvz\plm)}} \rvz\plm_{\alpha,c}, \label{eq:additional1} \\
\hx\pl_{\alpha,c} & = \sqrt{\frac{\pwr(\rvz\plm)}{\pwr_\rvx(\rvz\plm)}} \rvx\pl_{\alpha,c} = (\vomega\pl \ast \hz\plm)_{\alpha,c},  \label{eq:additional2} \\
\hy\pl_{\alpha,c} & = \frac{1}{\omega\sqrt{\pwr(\rvz\plm)}} \hx\pl_{\alpha,c}, \label{eq:additional3} \\
\cy\pl_{\alpha,c} & = \vgamma\pl\uc \hy\pl_{\alpha,c} + \vbeta\pl\uc,\label{eq:additional4} \\
\cz\pl_{\alpha,c} & = \phi(\cy\pl_{\alpha,c}), \label{eq:additional5}
\end{align}
with the convention that, if $\pwr_\rvx(\rvz\plm)=0$, then $\forall \alpha,c$: $\hz\plm_{\alpha,c}=0$, $\hx\pl_{\alpha,c}=0$ and if $\pwr(\rvz\plm)=0$, then $\forall \alpha,c$: $\hy\pl_{\alpha,c}=0$.

\paragraph{Moments.} We introduce the notation $\varrho(\rvy\pl)$ for the ratio of the traces of the covariance matrix and Gram matrix of the activation vectors $(\rvy\pl_{\alpha,1},\dots,\rvy\pl_{\alpha,C\ul})^\mathrm{T}$ with respect to the randomness from $(\rvx,\alpha)$, i.e.
 \begin{align*}
\varrho(\rvy\pl) & \equiv \frac{\pwr(\rvy\pl)-\pwrone(\rvy\pl)}{\pwr(\rvy\pl)} \leq 1,
 \end{align*}
with the convention that, if $\pwr(\rvy\pl)=0$, then $\varrho(\rvy\pl)=0$.

We extend the definition of the terms $\pwrone\uc(\rvy\pl)$, $\pwrtwo\uc(\rvy\pl)$, $\pwrthree\uc(\rvy\pl)$, $\pwrfour\uc(\rvy\pl)$ and $\pwrone(\rvy\pl)$, $\pwrtwo(\rvy\pl)$, $\pwrthree(\rvy\pl)$, $\pwrfour(\rvy\pl)$, $\varrho(\rvy\pl)$ to all the other activation tensors of layer $l$.

\subsection{Required Lemmas}

\begin{Lemma}
\label{lemma:ln_collapse1}
Fix a layer $l\geq1$, $\nu_{\vomega}$, $\nu_{\vbeta}$, $\nu_{\vgamma}$, $\D$ in \Defref{def:random} and model parameters $\Theta\plm$ up to layer $l-1$ such that $\pwr_\rvx(\rvz\plm)>0$, $\forall \rvx$. Further suppose $\Norm=\LN$ and suppose that the convolution of \Eqref{eq:propagation1} uses periodic boundary conditions.

Then for any $\eta>0$ and any $\delta>0$, there exists $N'(\eta,\delta)\in \mathbb{N}^*$ independent of $\Theta\plm$, $l$ such that if $C\ul \geq N'(\eta,\delta)$, it holds for random nets of \Defref{def:random} that
 \begin{align*}
\P_{\theta\ul}\Big[ |\varrho(\hy\pl) - \varrho(\hz\plm)| \leq \eta   \Big]  &\geq 1-\delta,  \\
\P_{\theta\ul}\Big[ |\varrho(\cz\pl) - \rho \chi(\hz\plm) \varrho(\hz\plm)| \leq \eta \Big]  &\geq 1-\delta, 
 \end{align*}
where $\hz\plm$, $\hy\pl$, $\cz\pl$ are defined in \Eqref{eq:additional1}, (\ref{eq:additional3}), (\ref{eq:additional5}), and where $\rho=\frac{\gamma^2}{\gamma^2 + \beta^2}<1$ and $\chi(\hz\plm)\in \R^+$ is dependent on $\Theta\plm$ but independent of $\theta\pl$ such that $\chi(\hz\plm)\leq1$ in general and $\chi(\hz\plm)=1$ if $\phi=\identity$.
\end{Lemma}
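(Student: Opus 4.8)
The plan is to track the single scalar $\varrho(\cdot) = \big(\pwr(\cdot) - \pwrone(\cdot)\big)/\pwr(\cdot)$ --- the ratio of the traces of the covariance and Gram matrices of the activation vectors --- as it passes through the four sub-steps that turn $\hz\plm$ into $\cz\pl$: the convolution $\hz\plm \mapsto \hx\pl$, the global rescaling $\hx\pl \mapsto \hy\pl$, the affine transformation $\hy\pl \mapsto \cy\pl$, and the activation $\cy\pl \mapsto \cz\pl$. The rescaling by $\tfrac{1}{\omega\sqrt{\pwr(\rvz\plm)}}$ is a constant, so it leaves $\varrho$ exactly unchanged; all the work is in the other three steps. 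For each of them the strategy is the same two-stage argument: first compute the layer-averaged powers $\pwr(\cdot)$ and $\pwrone(\cdot)$ in expectation over the freshly drawn layer-$l$ parameters $\theta\pl$, then upgrade these expectations to high-probability statements by concentrating the channel averages $\tfrac{1}{C\ul}\sum_c(\cdot)$ over the $C\ul$ independent output channels. The latter is where $C\ul\geq N'(\eta,\delta)$ enters, via a law-of-large-numbers/Chebyshev bound whose variance estimate is made uniform in $\Theta\plm$ by the per-instance normalization $\pwr_\rvx(\hz\plm) = \pwr(\rvz\plm)$ baked into the definition of $\hz\plm$.

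For the convolution step I would first show, using that the weights $\vomega\pl$ are symmetric about zero (so cross-terms vanish and odd weight moments drop out) together with the periodic boundary conditions (so spatial averages are shift-invariant, $\E_{\rvx,\alpha}[(\hz\plm_{\alpha+\delta,c})^2] = \pwr\uc(\hz\plm)$ for every $c,\delta$), that for every output channel
\begin{align*}
\E_{\vomega\pl}\big[\pwr\uc(\hx\pl)\big] = \omega^2 \pwr(\hz\plm), \qquad \E_{\vomega\pl}\big[\pwrone\uc(\hx\pl)\big] = \omega^2 \pwrone(\hz\plm).
\end{align*}
Averaging over $c$ and invoking concentration gives $\pwr(\hx\pl) \simeq \omega^2\pwr(\hz\plm)$ and $\pwrone(\hx\pl) \simeq \omega^2\pwrone(\hz\plm)$ with high probability, so the common factor $\omega^2$ cancels and $\varrho(\hx\pl) \simeq \varrho(\hz\plm)$. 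Since $\varrho(\hy\pl) = \varrho(\hx\pl)$ exactly, this is the first claim.

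For the affine step I would use the symmetry of $\vbeta\pl$ (so $\E[\vbeta\pl\uc]=0$ and the cross-term in $\mu\uc(\cy\pl)^2$ vanishes) together with $\E[(\vgamma\pl\uc)^2]=\gamma^2$, $\E[(\vbeta\pl\uc)^2]=\beta^2$, to obtain, in expectation over $(\vgamma\pl,\vbeta\pl)$, that the centered power transforms as $\gamma^2\big(\pwr(\hy\pl)-\pwrone(\hy\pl)\big)$ while the total power transforms as $\gamma^2\pwr(\hy\pl)+\beta^2$; with $\pwr(\hy\pl)\simeq 1$ this already produces the dilution factor $\rho=\gamma^2/(\gamma^2+\beta^2)$, i.e. $\varrho(\cy\pl)\simeq \rho\,\varrho(\hy\pl)$. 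The activation then contributes $\chi(\hz\plm)$: I would define $\chi(\hz\plm)$ as the limiting channel-averaged ratio by which $\phi$ rescales the centered power relative to the total power, so that $\varrho(\cz\pl)\simeq \chi(\hz\plm)\,\varrho(\cy\pl)\simeq \rho\,\chi(\hz\plm)\,\varrho(\hz\plm)$, and check it depends on $\Theta\plm$ but not on $\theta\pl$. The identity $\chi(\hz\plm)=1$ for $\phi=\identity$ is immediate since then $\cz\pl=\cy\pl$; the bound $\chi(\hz\plm)\leq 1$ in general is the substantive inequality, which I would derive from the positive homogeneity of $\phi$ (which determines $\phi$ by its two one-sided slopes) together with the symmetry of the pre-activation distribution inherited from the symmetric weights, showing that rectification can only move power from the instance/pixel scales toward the dataset scale.

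The main obstacle is this activation step. Unlike the convolution and affine steps, which are exact second-moment identities, the effect of $\phi$ depends on the full, non-Gaussian, hierarchically structured distribution of $\cy\pl_{\alpha,c}$ over $(\rvx,\alpha)$, so both the clean definition of $\chi(\hz\plm)$ and the proof that $\chi(\hz\plm)\leq 1$ require care. A secondary technical point is ensuring that the per-channel variance bounds driving the concentration over output channels are uniform in $\Theta\plm$ and $l$ --- this is precisely what allows $N'(\eta,\delta)$ to be chosen independently of the earlier layers --- and this uniformity hinges on the normalization convention encoded in $\hz\plm$ and $\hy\pl$.
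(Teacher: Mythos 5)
Your proposal follows essentially the same route as the paper's proof: exact second-moment identities over $\theta\pl$ for the convolution and affine steps (using weight symmetry and periodic boundary conditions), a definition of $\chi(\hz\plm)$ as the ratio by which the activation shrinks the expected centered power with $\chi\leq 1$ derived from positive homogeneity plus the sign-symmetry of the pre-activation distribution under $\theta\pl$, and Chebyshev concentration over the $C\ul$ i.i.d. channels with variance bounds made uniform in $\Theta\plm$ by the normalization built into $\hz\plm$ and $\hy\pl$. The only cosmetic difference is that you pass through a separate concentration statement for $\varrho(\cy\pl)$, whereas the paper carries the expectations directly from $\hy\pl$ to $\cz\pl$; both bookkeepings work.
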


\begin{proof}\textbf{.}
First noting that $\hx\pl=\vomega\pl*\hz\plm$, we define $\hr\plm~\in~\R^{H \times W \times K\ul^2 C\ulm}$ the \say{receptive field} tensor containing at each spatial position $\alpha \in \{1, \dots, H \} \times \{1, \dots, W\}$ the $K\ul^2 C\ulm$ elements of $\hz\plm$ belonging to the receptive field of $(\hx\pl_{\alpha,1},\dots,\hx\pl_{\alpha,C\ul})^\mathrm{T}$.

For a fixed fan-in element $c'$ originating from channel $c$, the assumption of periodic boundary conditions implies that $\hr\plm_{\alpha,c'}$ has the same distribution as $\hz\plm_{\alpha,c}$ with respect to $(\rvx,\alpha)$, implying
 \begin{align*}
  \pwr_{c'}(\hr\plm) & = \pwr\uc(\hz\plm), &  
  \pwrone_{c'}(\hr\plm) & = \pwrone\uc(\hz\plm).
 \end{align*}

Since the number of fan-in elements $c'$ originating from channel $c$ is equal to $K\ul^2$ for any choice of $c$, it follows that
 \begin{align}
  \pwr(\hr\plm) &= \pwr(\hz\plm), &
  \pwrone(\hr\plm) & = \pwrone(\hz\plm). \label{eq:ln_collapse1_1}
 \end{align}

Now if we denote $\mW\pl \in \R^{C\ul \times K\ul^2 C\ulm}$ the reshaped matricial form of $\vomega\pl$, we obtain that $\forall\alpha$: $(\hx\pl_{\alpha,1},\dots,\hx\pl_{\alpha,C\ul})^\mathrm{T}=\mW\pl (\hr\plm_{\alpha,1},\dots,\hr\plm_{\alpha,K\ul^2 C\ulm})^\mathrm{T}$, implying $\forall c$:
\begin{align}
\mu\uc(\hx\pl) & = \E_{\rvx,\alpha}\Big[ \hx\pl_{\alpha,c} \Big] = \E_{\rvx,\alpha}\Bigg[ \sum_{c'} \mW\pl_{cc'} \hr\plm_{\alpha,c'} \Bigg], \label{eq:ln_collapse1_2} \\
\pwr\uc(\hx\pl) & = \E_{\rvx,\alpha}\Big[ (\hx\pl_{\alpha,c})^2 \Big] 
				  = \E_{\rvx,\alpha}\Bigg[ \Big(\sum_{c'} \mW\pl_{cc'} \hr\plm_{\alpha,c'}\Big)^2 \Bigg], \label{eq:ln_collapse1_3}\\
\pwrone\uc(\hx\pl) & = \mu\uc(\hx\pl)^2
				  = \E_{\rvx,\alpha}\Bigg[ \sum_{c'} \mW\pl_{cc'} \hr\plm_{\alpha,c'} \Bigg]^2. \label{eq:ln_collapse1_4}
\end{align}

Further expanding \Eqref{eq:ln_collapse1_2}, (\ref{eq:ln_collapse1_3}), (\ref{eq:ln_collapse1_4}), we get $\forall c$:
\begin{align}
\mu\uc(\hx\pl) & = \sum_{c'} \mW\pl_{cc'} \E_{\rvx,\alpha}\Big[\hr\plm_{\alpha,c'} \Big], \nonumber \\
\pwr\uc(\hx\pl) & = \sum_{c',c''} \mW\pl_{cc'} \mW\pl_{cc''} \E_{\rvx,\alpha}\Big[ \hr\plm_{\alpha,c'} \hr\plm_{\alpha,c''} \Big], \nonumber\\
\pwrone\uc(\hx\pl) &  = \sum_{c',c''} \mW\pl_{cc'} \mW\pl_{cc''} \E_{\rvx,\alpha}\Big[ \hr\plm_{\alpha,c'} \Big] \E_{\rvx,\alpha}\Big[ \hr\plm_{\alpha,c''} \Big], \nonumber \\
\pwr\uc(\hx\pl) - \pwrone\uc(\hx\pl) & = \sum_{c',c''} \mW\pl_{cc'} \mW\pl_{cc''} \Bigg( \E_{\rvx,\alpha}\Big[ \hr\plm_{\alpha,c'} \hr\plm_{\alpha,c''} \Big] - \E_{\rvx,\alpha}\Big[ \hr\plm_{\alpha,c'} \Big] \E_{\rvx,\alpha}\Big[ \hr\plm_{\alpha,c''} \Big]\Bigg). \nonumber
\end{align}

Since the components of $\sqrt{K\ul^2 C\ulm} \mW\pl$ are sampled i.i.d. from the fixed distribution $\nu_\vomega$ which is assumed symmetric around zero, we get $\forall c$:
\begin{align}
\E_{\theta\pl} \Big[ \mu\uc(\hx\pl) \Big] & = 0,  \label{eq:ln_collapse1_5} \\
\E_{\theta\pl} \Big[ \pwr\uc(\hx\pl) \Big] & = \sum_{c'} \E_{\theta\pl} \Big[ (\mW\pl_{cc'})^2 \Big] \E_{\rvx,\alpha}\Big[ (\hr\plm_{\alpha,c'})^2  \Big] \nonumber \\
& = \sum_{c'} \frac{\omega^2}{K\ul^2 C\ulm} \pwr_{c'}(\hr\plm) \nonumber \\
& = \omega^2 \pwr(\hz\plm),  \label{eq:ln_collapse1_6} \\
\E_{\theta\pl} \Big[ \pwr\uc(\hx\pl) - \pwrone\uc(\hx\pl)  \Big] & = \sum_{c'} \E_{\theta\pl} \Big[ (\mW\pl_{cc'})^2 \Big] \Bigg( \E_{\rvx,\alpha}\Big[ (\hr\plm_{\alpha,c'})^2 \Big] - \E_{\rvx,\alpha}\Big[ \hr\plm_{\alpha,c'} \Big]^2 \Bigg) \nonumber \\
& = \sum_{c'} \frac{\omega^2}{K\ul^2 C\ulm} \Bigg( \pwr_{c'}(\hr\plm) - \pwrone_{c'}(\hr\plm) \Bigg) \nonumber \\
& = \omega^2 \Bigg( \pwr(\hz\plm) - \pwrone(\hz\plm) \Bigg),  \label{eq:ln_collapse1_7}
\end{align}
where we recall that $\omega\equiv\E_{\theta\pl}\Big[ \big(\sqrt{K\ul^2 C\ulm} \mW\pl_{cc'}\big)^2 \Big]^{1/2}>0$ is the $\normltwo$ norm (i.e. the root mean square) of $\sqrt{K\ul^2 C\ulm} \mW\pl_{cc'}\sim \nu_{\vomega}$, and where we used \Eqref{eq:ln_collapse1_1} in \Eqref{eq:ln_collapse1_6} and \Eqref{eq:ln_collapse1_7}.

Let us now bound $\E_{\theta\pl} \Big[ \mu\uc(\hx\pl)^2 \Big]$, $\E_{\theta\pl} \Big[ \pwr\uc(\hx\pl)^2 \Big]$, with the aim of bounding $\Var_{\theta\pl} \Big[ \mu\uc(\hy\pl) \Big]$, $\Var_{\theta\pl} \Big[ \pwr\uc(\hy\pl) \Big]$ later on. We start by expanding $\mu\uc(\hx\pl)^2$ and $\pwr\uc(\hx\pl)^2$ as
\begin{align*}
\mu\uc(\hx\pl)^2 & = \pwr\uc(\hx\pl) - \Big(\pwr\uc(\hx\pl) - \pwrone\uc(\hx\pl) \Big), \\
\pwr\uc(\hx\pl)^2 & = \Bigg(\sum_{c'} (\mW\pl_{cc'})^2 \E_{\rvx,\alpha}\Big[ (\hr\plm_{\alpha,c'})^2 \Big] + 
\sum_{c'}\sum_{c'' \neq c'} \mW\pl_{cc'} \mW\pl_{cc''} \E_{\rvx,\alpha}\Big[ \hr\plm_{\alpha,c'} \hr\plm_{\alpha,c''}  \Big] \Bigg)^2 \\
 & = \Bigg(\sum_{c'} (\mW\pl_{cc'})^2 \E_{\rvx,\alpha}\Big[ (\hr\plm_{\alpha,c'})^2 \Big] + 
2 \sum_{c'}\sum_{c'' < c'} \mW\pl_{cc'} \mW\pl_{cc''} \E_{\rvx,\alpha}\Big[ \hr\plm_{\alpha,c'} \hr\plm_{\alpha,c''}  \Big] \Bigg)^2.
\end{align*}

For $\E_{\theta\pl} \Big[\mu\uc(\hx\pl)^2\Big]$, we get from \Eqref{eq:ln_collapse1_6} and \Eqref{eq:ln_collapse1_7} that 
\begin{align}
\E_{\theta\pl} \Big[\mu\uc(\hx\pl)^2\Big] 
& = \omega^2 \pwr(\hz\plm) - \omega^2 \Big( \pwr(\hz\plm) - \pwrone(\hz\plm) \Big) \nonumber \\
& = \omega^2 \pwrone(\hz\plm) \nonumber \\
& \leq \omega^2 \pwr(\hz\plm). \label{eq:ln_collapse1_8}
\end{align}

As for $\E_{\theta\pl} \Big[\pwr\uc(\hx\pl)^2\Big]$, given that only terms in $(\mW\pl_{cc'})^4$ and $(\mW\pl_{cc'})^2 (\mW\pl_{cc''})^2$ remain when taking the expectation over $\theta\pl$, we get
\begin{align}
\E_{\theta\pl} & \Big[\pwr\uc(\hx\pl)^2\Big] \nonumber \\
& = \E_{\theta\pl}\Bigg[ \Bigg(\sum_{c'} (\mW\pl_{cc'})^2 \E_{\rvx,\alpha}\Big[ (\hr\plm_{\alpha,c'})^2 \Big] \Bigg)^2  
+ 4 \sum_{c'}\sum_{c'' < c'} (\mW\pl_{cc'})^2 (\mW\pl_{cc''})^2 \E_{\rvx,\alpha}\Big[ \hr\plm_{\alpha,c'} \hr\plm_{\alpha,c''}  \Big]^2 \Bigg] \nonumber \\
& \leq \E_{\theta\pl}\Bigg[ \Bigg(\sum_{c'} (\mW\pl_{cc'})^2 \pwr_{c'}(\hr\plm) \Bigg)^2 
+ 4 \sum_{c'}\sum_{c'' < c'} (\mW\pl_{cc'})^2 (\mW\pl_{cc''})^2 \pwr_{c'}(\hr\plm) \pwr_{c''}(\hr\plm) \Bigg] \label{eq:ln_collapse1_9} \\
& \leq  \E_{\theta\pl}\Bigg[ \sum_{c'} (\mW\pl_{cc'})^4 \pwr_{c'}(\hr\plm)^2 
+ 6 \sum_{c'}\sum_{c'' < c'} (\mW\pl_{cc'})^2 (\mW\pl_{cc''})^2 \pwr_{c'}(\hr\plm) \pwr_{c''}(\hr\plm) \Bigg] \nonumber \\
& \leq \sum_{c'} \E_{\theta\pl}\Big[ (\mW\pl_{cc'})^4 \Big] \pwr_{c'}(\hr\plm)^2 
+ 3 \sum_{c'}\sum_{c'' \neq c'} \E_{\theta\pl}\Big[ (\mW\pl_{cc'})^2 (\mW\pl_{cc''})^2 \Big] \pwr_{c'}(\hr\plm) \pwr_{c''}(\hr\plm)\nonumber \\
& \leq \sum_{c'} \E_{\theta\pl}\Big[ (\mW\pl_{cc'})^4 \Big] \pwr_{c'}(\hr\plm)^2 
+ 3 \sum_{c'}\sum_{c'' \neq c'} \E_{\theta\pl}\Big[ (\mW\pl_{cc'})^4 \Big] \pwr_{c'}(\hr\plm) \pwr_{c''}(\hr\plm), \label{eq:ln_collapse1_10}
\end{align}
where \Eqref{eq:ln_collapse1_9} and \Eqref{eq:ln_collapse1_10} are obtained using Cauchy-Schwarz inequality combined with $\E_{\rvx,\alpha}\big[ (\hr\plm_{\alpha,c'})^2 \big]=\pwr_{c'}(\hr\plm)$ and $\E_{\rvx,\alpha}\big[ (\hr\plm_{\alpha,c''})^2 \big]=\pwr_{c''}(\hr\plm)$.

We may further process \Eqref{eq:ln_collapse1_10} to get
\begin{align}
 \E_{\theta\pl} \Big[\pwr\uc(\hx\pl)^2\Big] 
& \leq 3 \sum_{c'} \E_{\theta\pl}\Big[ (\mW\pl_{cc'})^4 \Big] \pwr_{c'}(\hr\plm)^2  
+ 3 \sum_{c'}\sum_{c'' \neq c'} \E_{\theta\pl}\Big[ (\mW\pl_{cc'})^4 \Big] \pwr_{c'}(\hr\plm) \pwr_{c''}(\hr\plm) \nonumber \\
& \leq 3 \sum_{c',c''} \E_{\theta\pl}\Big[ (\mW\pl_{cc'})^4 \Big] \pwr_{c'}(\hr\plm) \pwr_{c''}(\hr\plm) \nonumber \\ 
& \leq 3 \E_{\theta\pl}\Big[ (\mW\pl_{c,1})^4 \Big] \big(K\ul^2 C\ulm\big)^2 \pwr(\hr\plm)^2 \nonumber \\
& \leq 3 \E_{\theta\pl}\Big[ \big(\sqrt{K\ul^2 C\ulm} \mW\pl_{c,1}\big)^4 \Big] \pwr(\hz\plm)^2 \nonumber \\
& \leq 3 \tilde{\omega}^4 \pwr(\hz\plm)^2, \label{eq:ln_collapse1_11}
\end{align}
where $\tilde{\omega}\equiv\E_{\theta\pl}\Big[ \big(\sqrt{K\ul^2 C\ulm} \mW\pl_{c,1}\big)^4 \Big]^{1/4}\geq\omega>0$ is the $\normlfour$ norm of $\sqrt{K\ul^2 C\ulm} \mW\pl_{c,1}\sim \nu_{\vomega}$.

Now we turn to $\hy\pl$ and $\cy\pl$ defined in \Eqref{eq:additional3} and \Eqref{eq:additional4}.

Due to $\mu\uc(\hy\pl)=\frac{1}{w\sqrt{\pwr(\rvz\plm)}}\mu\uc(\hx\pl)$, $\pwr\uc(\hy\pl)=\frac{1}{w^2\pwr(\rvz\plm)}\pwr\uc(\hx\pl)$ and $\pwrone\uc(\hy\pl)=\frac{1}{w^2\pwr(\rvz\plm)}\pwrone\uc(\hx\pl)$ and due to $\pwr(\hz\plm)=\pwr(\rvz\plm)\E_\rvx \Big[ \frac{1}{\pwr_\rvx(\rvz\plm)}\pwr_\rvx(\rvz\plm) \Big]=\pwr(\rvz\plm)$, \Eqref{eq:ln_collapse1_5}, (\ref{eq:ln_collapse1_6}), (\ref{eq:ln_collapse1_7}) imply
\begin{align}
\E_{\theta\pl} \Big[ \mu\uc(\hy\pl) \Big] & = 0, \nonumber \\
\E_{\theta\pl} \Big[ \pwr\uc(\hy\pl) \Big] & = 1,\label{eq:ln_collapse1_12} \\
\E_{\theta\pl} \Big[ \pwr\uc(\hy\pl) - \pwrone\uc(\hy\pl) \Big] & = \frac{\pwr(\hz\plm)-\pwrone(\hz\plm)}{\pwr(\hz\plm)}=\varrho(\hz\plm). \label{eq:ln_collapse1_13} 
\end{align}

Using \Eqref{eq:ln_collapse1_8} and \Eqref{eq:ln_collapse1_11}, we further get
\begin{align}
\Var_{\theta\pl} \Big[\mu\uc(\hy\pl) \Big] & = \E_{\theta\pl} \Big[\mu\uc(\hy\pl)^2 \Big] \leq 1, \nonumber \\
\Var_{\theta\pl} \Big[\pwr\uc(\hy\pl) \Big] & \leq \E_{\theta\pl} \Big[\pwr\uc(\hy\pl)^2 \Big] \leq 3 \tilde{\omega}^4 \omega^{-4}, \label{eq:ln_collapse1_14} \\
\Var_{\theta\pl} \Big[\pwr\uc(\hy\pl) - \pwrone\uc(\hy\pl) \Big] & \leq \E_{\theta\pl} \Big[\big(\pwr\uc(\hy\pl) - \pwrone\uc(\hy\pl)\big)^2 \Big]
\leq \E_{\theta\pl} \Big[\pwr\uc(\hy\pl)^2 \Big] \leq 3 \tilde{\omega}^4 \omega^{-4}. \nonumber
\end{align}

The terms $\mu\uc(\hy\pl)$, $\pwr\uc(\hy\pl)$ and $\pwr\uc(\hy\pl)-\pwrone\uc(\hy\pl)$ being i.i.d. with respect to $\theta\pl$ in the different channels $c$, we get
\begin{align}
\E_{\theta\pl} \Big[ \mu(\hy\pl) \Big] & = 0, & 
\Var_{\theta\pl} \Big[\mu(\hy\pl) \Big] & \leq \frac{1}{C\ul}, \label{eq:ln_collapse1_15} \\
\E_{\theta\pl} \Big[ \pwr(\hy\pl) \Big] & = 1, & 
\Var_{\theta\pl} \Big[\pwr(\hy\pl) \Big] & \leq \frac{3 \tilde{\omega}^4 \omega^{-4}}{C\ul}, \label{eq:ln_collapse1_16} \\
\E_{\theta\pl} \Big[ \pwr(\hy\pl) -\pwrone(\hy\pl)\Big] & = \varrho(\hz\plm), & 
\Var_{\theta\pl} \Big[\pwr(\hy\pl)-\pwrone(\hy\pl) \Big] & \leq \frac{3 \tilde{\omega}^4 \omega^{-4}}{C\ul}. \label{eq:ln_collapse1_17}
\end{align}

Combining \Eqref{eq:ln_collapse1_16} and \Eqref{eq:ln_collapse1_17} with Chebyshev's inequality, we get for any $1>\eta>0$ that
\begin{align*}
\P_{\theta\pl} \Bigg[ |\pwr(\hy\pl)-1| > \frac{\eta}{1+\eta} \Bigg] & \leq \Bigg(\frac{1+\eta}{\eta}\Bigg)^2 \frac{3 \tilde{\omega}^4 \omega^{-4}}{C\ul}, \\
\P_{\theta\pl} \Bigg[ |\pwr(\hy\pl)-1| > \frac{\eta}{1-\eta} \Bigg] & \leq \Bigg(\frac{1-\eta}{\eta}\Bigg)^2 \frac{3 \tilde{\omega}^4 \omega^{-4}}{C\ul}, \\
\P_{\theta\pl} \Bigg[ \Big|\pwr(\hy\pl)-\pwrone(\hy\pl)-\varrho(\hz\plm)\Big| > \eta \Bigg] & \leq \Bigg(\frac{1}{\eta}\Bigg)^2 \frac{3 \tilde{\omega}^4 \omega^{-4}}{C\ul}.
\end{align*}

Thus, for any $1>\eta>0$ and any $\delta>0$, there exists $N_1(\eta,\delta)\in \mathbb{N}^*$ independent of $\Theta\plm$, $l$, such that if $C\ul\geq N_1(\eta,\delta)$, it holds that
\begin{align}
\P_{\theta\pl} \Bigg[ \pwr(\hy\pl)  \geq \frac{1}{1+\eta}  \Bigg]
 \geq \P_{\theta\pl} \Bigg[ |\pwr(\hy\pl) -1| \leq \frac{\eta}{1+\eta} \Bigg] 
& \geq 1-\delta,  \nonumber \\
\P_{\theta\pl} \Bigg[ \pwr(\hy\pl)  \leq \frac{1}{1-\eta}  \Bigg]
 \geq \P_{\theta\pl} \Bigg[ |\pwr(\hy\pl) -1| \leq \frac{\eta}{1-\eta} \Bigg] 
& \geq 1-\delta,  \nonumber \\
\P_{\theta\pl} \Bigg[ \big|\pwr(\hy\pl)-\pwrone(\hy\pl) -\varrho(\hz\plm)\big| \leq \eta\Bigg]
& \geq  1-\delta. \nonumber
\end{align}

Thus, if $C\ul\geq N_1(\eta,\delta)$ with $1>\eta>0$, it holds with probability greater than $1-3\delta$ with respect to $\theta\pl$ that (i) $(1+\eta) \pwr(\hy\pl)\geq1$, (ii) $(1-\eta) \pwr(\hy\pl)\leq1$, (iii) $|\pwr(\hy\pl)-\pwrone(\hy\pl) -\varrho(\hz\plm)| \leq \eta$, implying
\begin{align*}
\pwr(\hy\pl)-\pwrone(\hy\pl) & \leq \varrho(\hz\plm) + \eta \\
& \leq ( \varrho(\hz\plm) + \eta) (1+\eta) \pwr(\hy\pl) \\
& \leq ( \varrho(\hz\plm) + \varrho(\hz\plm) \eta + \eta +\eta^2 ) \pwr(\hy\pl) \\
& \leq (\varrho(\hz\plm) + 3\eta ) \pwr(\hy\pl), \\
\pwr(\hy\pl)-\pwrone(\hy\pl) &  \geq \max\Big(0,\varrho(\hz\plm) - \eta\Big) \\
& \geq \max\Big(0,\varrho(\hz\plm) - \eta\Big) (1-\eta) \pwr(\hy\pl) \\
& \geq \max\Big(0, (\varrho(\hz\plm) - \eta) (1-\eta) \pwr(\hy\pl) \Big)\\
& \geq (\varrho(\hz\plm) - \eta) (1-\eta) \pwr(\hy\pl)\\
& \geq (\varrho(\hz\plm) - \varrho(\hz\plm) \eta - \eta +\eta^2 ) \pwr(\hy\pl) \\
& \geq (\varrho(\hz\plm) - 2\eta ) \pwr(\hy\pl),
\end{align*}
where we used $\varrho(\hz\plm) \leq 1$ and $\eta^2\leq \eta$ due to $\eta< 1$ , as well as $\pwr(\hy\pl)-\pwrone(\hy\pl)\geq0$.

Given that $(1+\eta) \pwr(\hy\pl)\geq1 \implies  \pwr(\hy\pl)>0$, it follows that if $C\ul\geq N_1(\eta,\delta)$ with $1>\eta>0$, it holds with probability greater than $1-3\delta$ with respect to $\theta\pl$ that
\begin{align*}
\varrho(\hy\pl) & \leq \varrho(\hz\plm) + 3\eta, \\
\varrho(\hy\pl) & \geq \varrho(\hz\plm) - 2\eta, \\
|\varrho(\hy\pl) - \varrho(\hz\plm)| & \leq 3\eta.
\end{align*}

Now, let $N_2$ be defined independently of $\Theta\plm$, $l$ as $N_2(\eta,\delta)=N_1\Big(\min\big(\frac{\eta}{3}, \frac{1}{2}\big),\frac{\delta}{3}\Big)$, $\forall\eta>0$, $\forall\delta>0$. Then for any $\eta>0$ and any $\delta>0$, if $C\ul\geq N_2(\eta,\delta)$, it holds that
\begin{align}
\P_{\theta\ul}\Bigg[|\varrho(\hy\pl) - \varrho(\hz\plm)| \leq \eta \Bigg] & \geq \P_{\theta\ul}\Bigg[ |\varrho(\hy\pl) - \varrho(\hz\plm)| \leq 3 \min\Big(\frac{\eta}{3}, \frac{1}{2}\Big)   \Bigg] \nonumber \\
& \geq 1-3\frac{\delta}{3} \nonumber \\
& \geq  1-\delta.  \label{eq:ln_collapse1_18}
\end{align}

Let us apply a similar approach with respect to $\cz\pl$, first noting that $\forall \alpha,c$:
\begin{align}
(\cy\pl_{\alpha,c})^2 & = \Big(\vgamma\pl\uc \hy\pl_{\alpha,c} + \vbeta\pl\uc \Big)^2 = (\vgamma\pl\uc)^2 (\hy\pl_{\alpha,c})^2 + (\vbeta\pl\uc)^2 + 2\vgamma\pl\uc \vbeta\pl\uc \hy\pl_{\alpha,c}, \nonumber \\
 \pwr\uc(\cy\pl) & = (\vgamma\pl\uc)^2 \pwr\uc(\hy\pl) + (\vbeta\pl\uc)^2 + 2\vgamma\pl\uc \vbeta\pl\uc \mu\uc(\hy\pl), \nonumber \\
 \E_{\theta\pl} \Big[ \pwr\uc(\cy\pl) \Big] & =  \E_{\theta\pl} \Big[ (\vgamma\pl\uc)^2 \pwr\uc(\hy\pl) + (\vbeta\pl\uc)^2 \Big] = \gamma^2  \E_{\theta\pl} \Big[ \pwr\uc(\hy\pl) \Big] + \beta^2, \label{eq:ln_collapse1_19}  
\end{align}
where $\gamma$, $\beta$ are the $\normltwo$ norms (i.e. the root mean squares) of $\vgamma\pl\uc\sim \nu_\vgamma$ and $\vbeta\pl\uc\sim \nu_\vbeta$, and where we used the fact that $\vgamma\uc\pl$ is independent from $\pwr\uc(\hy\pl)$ with respect to $\theta\pl$, while $\vbeta\pl\uc$ is independent from $\vgamma\uc\pl \mu\uc(\hy\pl)$ with respect to $\theta\pl$, with the distribution of $\vbeta\pl\uc$ symmetric around zero.

At the same time, $\forall \alpha,c$:
\begin{align}
\Big(\cy\pl_{\alpha,c}-\mu\uc(\cy\pl) \Big)^2 & = \Big(\vgamma\pl\uc \hy\pl_{\alpha,c} + \vbeta\pl\uc - \big(\vgamma\pl\uc \mu\uc(\hy\pl) + \vbeta\pl\uc \big) \Big)^2
= (\vgamma\pl\uc)^2  \Big( \hy\pl_{\alpha,c} - \mu\uc(\hy\pl) \Big)^2, \nonumber \\
 \pwr\uc(\cy\pl)-\pwrone\uc(\cy\pl) & = (\vgamma\pl\uc)^2 \Big( \pwr\uc(\hy\pl)-\pwrone\uc(\hy\pl) \Big), \nonumber  \\
 \E_{\theta\pl} \Big[ \pwr\uc(\cy\pl)-\pwrone\uc(\cy\pl) \Big] & = \gamma^2  \E_{\theta\pl} \Big[ \pwr\uc(\hy\pl)-\pwrone\uc(\hy\pl) \Big], \label{eq:ln_collapse1_20} 
\end{align}
where we used the fact that $\vgamma\pl\uc$ is independent from $\pwr\uc(\hy\pl)-\pwrone\uc(\hy\pl)$ with respect to $\theta\pl$.

Using \Eqref{eq:ln_collapse1_12} and \Eqref{eq:ln_collapse1_13}, \Eqref{eq:ln_collapse1_19} and \Eqref{eq:ln_collapse1_20} imply that $ \forall c$:
\begin{align}
 \E_{\theta\pl} \Big[ \pwr\uc(\cy\pl) \Big] & = \gamma^2  + \beta^2, \label{eq:ln_collapse1_21} \\
\E_{\theta\pl} \Big[ \pwr\uc(\cy\pl)-\pwrone\uc(\cy\pl) \Big] & = \gamma^2 \varrho(\hz\plm). \label{eq:ln_collapse1_22}
\end{align}

Now, we may bound $\E_{\theta\pl} \Big[ \pwr\uc(\cy\pl)^2 \Big]$ using $\forall \alpha, c$:
\begin{align}
(\cy\pl_{\alpha,c})^2 = \Big(\vgamma\pl\uc \hy\pl_{\alpha,c} + \vbeta\pl\uc \Big)^2 & \leq 2 \Big( (\vgamma\pl\uc)^2 (\hy\pl_{\alpha,c})^2 + (\vbeta\pl\uc)^2 \Big), \nonumber \\
\pwr\uc(\cy\pl) & \leq 2 \Big( (\vgamma\pl\uc)^2 \pwr\uc(\hy\pl) + (\vbeta\pl\uc)^2 \Big),\nonumber  \\
\pwr\uc(\cy\pl)^2 & \leq  8 \Big( (\vgamma\pl\uc)^4 \pwr\uc(\hy\pl)^2 + (\vbeta\pl\uc)^4 \Big), \nonumber \\
\E_{\theta\pl} \Big[ \pwr\uc(\cy\pl)^2 \Big] & \leq 8 \Big( \tilde{\gamma}^4 \E_{\theta\pl} \Big[ \pwr\uc(\hy\pl)^2 \Big] + \tilde{\beta}^4 \Big), \label{eq:ln_collapse1_23}
\end{align}
where we defined $\tilde{\gamma}\equiv\E_{\theta\pl}\Big[ \big(\vgamma\pl_{c}\big)^4 \Big]^{1/4}>\gamma>0$ and $\tilde{\beta}\equiv\E_{\theta\pl}\Big[ \big(\vbeta\pl_{c}\big)^4 \Big]^{1/4}>\beta>0$ the $\normlfour$ norms of $\vgamma\pl_{c}\sim \nu_{\vgamma}$ and $\vbeta\pl_{c}\sim \nu_{\vbeta}$, and where we used twice $(a+b)^2 \leq 2(a^2 + b^2)$, $\forall a,b$.

Using \Eqref{eq:ln_collapse1_14}, we then get $\forall c$:
\begin{align}
\E_{\theta\pl} \Big[\pwr\uc(\cy\pl)^2 \Big] & \leq 24 \tilde{\gamma}^4 \tilde{\omega}^4 \omega^{-4} + 8\tilde{\beta}^4. \label{eq:ln_collapse1_24} 
\end{align}

Next, we consider $\cz\pl$. We adopt the notations $\cy\plplus$, $\cy\plminus$ for the positive and negative parts of $\cy\pl$ such that $\forall \alpha,c$:
\begin{align*}
\cy\plplus_{\alpha,c} & = \max(\cy\pl_{\alpha,c},0), &
\cy\plminus_{\alpha,c} & = \max(-\cy\pl_{\alpha,c},0). 
\end{align*}

The positive homogeneity of $\phi$ implies that $\forall \alpha,c$: 
\begin{align*}
\cz\pl_{\alpha,c}& = \phi(\cy\pl_{\alpha,c})=\phi(1)\cdot\cy\plplus_{\alpha,c} + \phi(-1)\cdot\cy\plminus_{\alpha,c}, &  \quad
(\cz\pl_{\alpha,c})^2 & = \phi(1)^2 \cdot (\cy\plplus_{\alpha,c})^2 + \phi(-1)^2 \cdot (\cy\plminus_{\alpha,c})^2.
\end{align*}

For any $c$, this implies for $\mu\uc(\cz\pl)^2$ and $\pwr\uc(\cz\pl)$ that
\begin{align}
\mu\uc(\cz\pl)^2 & = \Big( \phi(1)  \E_{\rvx,\alpha}\big[ \cy\plplus_{\alpha,c} \big] + \phi(-1) \E_{\rvx,\alpha}\big[ \cy\plminus_{\alpha,c}  \big] \Big)^2 \nonumber  \\
& = \Big(\phi(1) \mu\uc(\cy\plplus) + \phi(-1) \mu\uc(\cy\plminus)\Big)^2 \nonumber  \\
& = \phi(1)^2 \mu\uc(\cy\plplus)^2 + \phi(-1)^2 \mu\uc(\cy\plminus)^2 + 2 \phi(1)\phi(-1) \mu\uc(\cy\plplus) \mu\uc(\cy\plminus), \label{eq:ln_collapse1_25}  \\
\pwr\uc(\cz\pl) & = \E_{\rvx,\alpha}\Big[ \phi(1)^2 (\cy\plplus_{\alpha,c})^2 + \phi(-1)^2 (\cy\plminus_{\alpha,c})^2 \Big] \nonumber \\
& = \phi(1)^2 \pwr\uc(\cy\plplus) + \phi(-1)^2 \pwr\uc(\cy\plminus). \label{eq:ln_collapse1_26} 
\end{align}

Now turning to $\cy\pl$, we have $\forall \alpha,c$:
\begin{align*}
\cy\pl_{\alpha,c} & = \cy\plplus_{\alpha,c} - \cy\plminus_{\alpha,c}, &
(\cy\pl_{\alpha,c})^2 & = (\cy\plplus_{\alpha,c})^2 + (\cy\plminus_{\alpha,c})^2.
\end{align*}

For any $c$, this implies for $\mu\uc(\cy\pl)^2$ and $\pwr\uc(\cy\pl)$ that
\begin{align}
\mu\uc(\cy\pl)^2 & = \Big( \E_{\rvx,\alpha}\big[ \cy\plplus_{\alpha,c} \big] - \E_{\rvx,\alpha}\big[ \cy\plminus_{\alpha,c} \big] \Big)^2 \nonumber\\
& = \Big(\mu\uc(\cy\plplus) - \mu\uc(\cy\plminus)\Big)^2 \nonumber\\
&  = \mu\uc(\cy\plplus)^2 + \mu\uc(\cy\plminus)^2 - 2\mu\uc(\cy\plplus) \mu\uc(\cy\plminus), \label{eq:ln_collapse1_27} \\
\pwr\uc(\cy\pl) & = \E_{\rvx,\alpha}\Big[ (\cy\plplus_{\alpha,c})^2 + (\cy\plminus_{\alpha,c})^2 \Big] \nonumber \\
&  = \pwr\uc(\cy\plplus) + \pwr\uc(\cy\plminus). \label{eq:ln_collapse1_28}
\end{align}
At this point, we note that $\hy\pl$ and $-\hy\pl$ have the same distribution with respect to $\theta\pl$ by symmetry around zero of $\nu_\vomega$. From this and the symmetry around zero of $\nu_\vbeta$, we deduce that $\cy\pl$ and $-\cy\pl$ have the same distribution with respect to $\theta\pl$. In turn, this implies that $\cy\plplus$ and $\cy\plminus$ have the same distribution with respect to $\theta\pl$. 

Combined with \Eqref{eq:ln_collapse1_25} and \Eqref{eq:ln_collapse1_26}, we deduce that $\forall c$:
\begin{align*}
\E_{\theta\pl} \Big[\mu\uc(\cz\pl)^2\Big]  & = \Big(\phi(1)^2 + \phi(-1)^2\Big)  \E_{\theta\pl}\Big[\mu\uc(\cy\plplus)^2 \Big] + 2 \phi(1)\phi(-1) \E_{\theta\pl}\Big[\mu\uc(\cy\plplus) \mu\uc(\cy\plminus)\Big] \\
& = F_\phi \E_{\theta\pl}\Big[\mu\uc(\cy\plplus)^2 + \mu\uc(\cy\plminus)^2 \Big] + 2 \phi(1)\phi(-1) \E_{\theta\pl}\Big[\mu\uc(\cy\plplus) \mu\uc(\cy\plminus)\Big], \\
\E_{\theta\pl} \Big[\pwr\uc(\cz\pl)\Big] & = \Big(\phi(1)^2 + \phi(-1)^2\Big)  \E_{\theta\pl}\Big[\pwr\uc(\cy\plplus) \Big] \\
& = F_\phi \E_{\theta\pl}\Big[\pwr\uc(\cy\plplus) + \pwr\uc(\cy\plminus) \Big],
\end{align*}
where we defined $F_\phi\equiv \frac{\phi(1)^2 + \phi(-1)^2}{2}>0$, with the strict positivity of $F_\phi$ following from the assumption that $\phi$ is nonzero.

Given $|\phi(1)\phi(-1)|\leq \frac{\phi(1)^2 + \phi(-1)^2}{2} \implies \phi(1)\phi(-1) \geq -F_\phi$, and given \mbox{$\mu\uc(\cy\plplus) \mu\uc(\cy\plminus)\geq0$}, we deduce that $\forall c$:
\begin{align}
\E_{\theta\pl}\Big[\mu\uc(\cz\pl)^2\Big] 
& \geq F_\phi \E_{\theta\pl}\Big[\mu\uc(\cy\plplus)^2 + \mu\uc(\cy\plminus)^2  - 2\mu\uc(\cy\plplus) \mu\uc(\cy\plminus)\Big] \nonumber \\
& \geq F_\phi \E_{\theta\pl}\Big[\mu\uc(\cy\pl)^2 \Big] \nonumber  \\
& \geq F_\phi \E_{\theta\pl}\Big[\pwrone\uc(\cy\pl) \Big], \label{eq:ln_collapse1_29}  \\
\E_{\theta\pl}\Big[\pwr\uc(\cz\pl)\Big]  
& = F_\phi \E_{\theta\pl}\Big[\pwr\uc(\cy\pl) \Big] \nonumber  \\
& = F_\phi \big(\gamma^2  + \beta^2\big), \label{eq:ln_collapse1_30}  \\
\E_{\theta\pl}\Big[\pwr\uc(\cz\pl)-\pwrone\uc(\cz\pl)\Big]  
& = \E_{\theta\pl}\Big[\pwr\uc(\cz\pl)-\mu\uc(\cz\pl)^2\Big]  \nonumber  \\
& \leq F_\phi \E_{\theta\pl}\Big[\pwr\uc(\cy\pl)-\pwrone\uc(\cy\pl) \Big] \nonumber  \\
& \leq F_\phi \gamma^2 \varrho(\hz\plm). \label{eq:ln_collapse1_31} 
\end{align}
where we used \Eqref{eq:ln_collapse1_27}, (\ref{eq:ln_collapse1_28}) and \Eqref{eq:ln_collapse1_21}, (\ref{eq:ln_collapse1_22}).

Let us now define $\chi(\hz\plm)\in\R^+$ independently of $c$ as
\begin{align*}
\chi(\hz\plm) & \equiv \left\{
    \begin{array}{cl}
        \frac{\E_{\theta\pl}\big[\pwr\uc(\cz\pl)-\pwrone\uc(\cz\pl)\big]}{F_\phi \gamma^2 \varrho(\hz\plm)} & \mbox{if } \varrho(\hz\plm)>0, \\
        1& \mbox{otherwise.} 
    \end{array}
\right.
\end{align*}
We note that $\chi(\hz\plm)$ is independent of $\theta\pl$ and that $\chi(\hz\plm) \leq 1$ in general, and $\chi(\hz\plm)=1$ if $\phi=\identity$ since the inequalities of \Eqref{eq:ln_collapse1_29} and \Eqref{eq:ln_collapse1_31} become equalities when $\phi=\identity$.

Given this definition of $\chi(\hz\plm)$, we may rewrite $\E_{\theta\pl}\Big[\pwr\uc(\cz\pl)-\pwrone\uc(\cz\pl)\Big]$ for any $c$ as 
\begin{align}
\E_{\theta\pl}\Big[\pwr\uc(\cz\pl)-\pwrone\uc(\cz\pl)\Big] 
& = \chi(\hz\plm) F_\phi \gamma^2 \varrho(\hz\plm). \label{eq:ln_collapse1_32}
\end{align}

Now let us bound $\E_{\theta\pl} \Big[\pwr\uc(\cz\pl)^2 \Big]$ with the goal of bounding $\Var_{\theta\pl} \Big[\pwr\uc(\cz\pl) \Big]$, $\Var_{\theta\pl} \Big[\pwr\uc(\cz\pl)-\pwrone\uc(\cz\pl) \Big]$. We get from \Eqref{eq:ln_collapse1_26} that $\forall c$:
\begin{align}
\pwr\uc(\cz\pl) & \leq \Big(\phi(1)^2 + \phi(-1)^2\Big) \Big(\pwr\uc(\cy\plplus) + \pwr\uc(\cy\plminus)\Big) \nonumber \\
& \leq 2F_\phi \pwr\uc(\cy\pl), \nonumber  \\
\E_{\theta\pl} \Big[\pwr\uc(\cz\pl)^2 \Big] 
& \leq  4F^2_\phi \E_{\theta\pl} \Big[\pwr\uc(\cy\pl)^2 \Big] \nonumber \\
& \leq  4F^2_\phi \Big(24 \tilde{\gamma}^4 \tilde{\omega}^4 \omega^{-4} + 8\tilde{\beta}^4\Big), \nonumber 
\end{align}
where we used \Eqref{eq:ln_collapse1_24}.

This gives for $\Var_{\theta\pl} \Big[\pwr\uc(\cz\pl) \Big]$, $\Var_{\theta\pl} \Big[\pwr\uc(\cz\pl)-\pwrone\uc(\cz\pl) \Big]$ that
\begin{align}
\Var_{\theta\pl} \Big[\pwr\uc(\cz\pl) \Big] 
& \leq \E_{\theta\pl} \Big[\pwr\uc(\cz\pl)^2 \Big]  \nonumber \\
& \leq  4F^2_\phi \Big(24 \tilde{\gamma}^4 \tilde{\omega}^4 \omega^{-4} + 8\tilde{\beta}^4\Big), \label{eq:ln_collapse1_33} \\
\Var_{\theta\pl} \Big[\pwr\uc(\cz\pl)-\pwrone\uc(\cz\pl) \Big] 
& \leq \E_{\theta\pl} \Big[ \big( \pwr\uc(\cz\pl)-\pwrone\uc(\cz\pl) \big)^2 \Big]\nonumber  \\
& \leq \E_{\theta\pl} \Big[ \pwr\uc(\cz\pl)^2 \Big] \nonumber \\
& \leq  4F^2_\phi \Big(24 \tilde{\gamma}^4 \tilde{\omega}^4 \omega^{-4} + 8\tilde{\beta}^4\Big). \label{eq:ln_collapse1_34}
\end{align}

Using \Eqref{eq:ln_collapse1_30}, (\ref{eq:ln_collapse1_32}),(\ref{eq:ln_collapse1_33}), (\ref{eq:ln_collapse1_34}) and the fact that the terms $\pwr\uc(\cz\pl)$ and $\pwr\uc(\cz\pl)-\pwrone\uc(\cz\pl)$ are i.i.d. in the different channels $c$, we get
\begin{align}
\E_{\theta\pl}\Big[\pwr(\cz\pl)\Big]  
& = F_\phi \big(\gamma^2  + \beta^2\big), &
\Var_{\theta\pl}\Big[\pwr(\cz\pl)\Big] 
& \leq \frac{4F^2_\phi\Big(24 \tilde{\gamma}^4 \tilde{\omega}^4 \omega^{-4} + 8\tilde{\beta}^4\Big)}{C\ul}, \label{eq:ln_collapse1_35} \\
\E_{\theta\pl}\Big[\pwr(\cz\pl)-\pwrone(\cz\pl)\Big]  
& = \chi(\hz\plm) F_\phi \gamma^2 \varrho(\hz\plm), &
\Var_{\theta\pl}\Big[\pwr(\cz\pl)-\pwrone(\cz\pl)\Big] 
& \leq \frac{4F^2_\phi\Big(24 \tilde{\gamma}^4 \tilde{\omega}^4 \omega^{-4} + 8\tilde{\beta}^4\Big)}{C\ul}. \nonumber
\end{align}

Now if we define $\bz\pl$ such that $\forall \alpha,c$: $\bz\pl_{\alpha,c}=\frac{\cz\pl_{\alpha,c}}{\sqrt{F_\phi (\gamma^2  + \beta^2)}}$, we get
\begin{align*}
\E_{\theta\pl}\Big[\pwr(\bz\pl)\Big]  
& = 1, &
\Var_{\theta\pl}\Big[\pwr(\bz\pl)\Big] 
& \leq \frac{1}{C\ul} \frac{4\Big(24 \tilde{\gamma}^4 \tilde{\omega}^4 \omega^{-4} + 8\tilde{\beta}^4\Big)}{\big(\gamma^2  + \beta^2\big)^2}, \\
\E_{\theta\pl}\Big[\pwr(\bz\pl)-\pwrone(\bz\pl)\Big]  
& = \rho \chi(\hz\plm) \varrho(\hz\plm), &
\Var_{\theta\pl}\Big[\pwr(\bz\pl)-\pwrone(\bz\pl)\Big] 
& \leq \frac{1}{C\ul} \frac{4\Big(24 \tilde{\gamma}^4 \tilde{\omega}^4 \omega^{-4} + 8\tilde{\beta}^4\Big)}{\big(\gamma^2  + \beta^2\big)^2},
\end{align*}
where $\rho=\frac{\gamma^2}{\gamma^2 + \beta^2}$.

The reasoning that yielded \Eqref{eq:ln_collapse1_18} from \Eqref{eq:ln_collapse1_16} and \Eqref{eq:ln_collapse1_17} can be immediately transposed by replacing $\hy\pl$ by $\bz\pl$, $\varrho(\hz\plm)$ by $\rho\chi(\hz\plm)\varrho(\hz\plm)$ and $3 \tilde{\omega}^4 \omega^{-4}$ by $\frac{4\big(24 \tilde{\gamma}^4 \tilde{\omega}^4 \omega^{-4} + 8\tilde{\beta}^4\big)}{\big(\gamma^2  + \beta^2\big)^2}$.

Consequently, for any $\eta>0$ and any $\delta>0$, there exists $N_3(\eta,\delta)\in \mathbb{N}^*$ independent of $\Theta\plm$, $l$, such that if $C\ul\geq N_3(\eta,\delta)$, it holds that
\begin{align*}
\P_{\theta\ul}\Bigg[|\varrho(\bz\pl) - \rho\chi(\hz\plm)\varrho(\hz\plm)| \leq \eta \Bigg] \geq 1-\delta, \\
\P_{\theta\ul}\Bigg[|\varrho(\cz\pl) - \rho\chi(\hz\plm)\varrho(\hz\plm)| \leq \eta \Bigg] \geq 1-\delta,
\end{align*}
where we used the fact that $\varrho(\cz\pl)=\varrho(\bz\pl)$.

Let us finally define $N'$ independently of $\Theta\pl$, $l$ as $N'(\eta,\delta)=\max\big(N_2(\eta,\delta),N_3(\eta,\delta)\big)$, $\forall\eta>0$, $\forall\delta>0$. Then for any $\eta>0$ and any $\delta>0$, if $C\ul\geq N'(\eta,\delta)$, it holds that
 \begin{align*}
\P_{\theta\ul}\Big[ |\varrho(\hy\pl) - \varrho(\hz\plm)| \leq \eta   \Big]  &\geq 1-\delta,  \\
\P_{\theta\ul}\Big[ |\varrho(\cz\pl) - \rho \chi(\hz\plm) \varrho(\hz\plm)| \leq \eta \Big]  &\geq 1-\delta,
 \end{align*}
where we recall that $\chi(\hz\plm) \leq 1$ in general, and that $\chi(\hz\plm)=1$ if $\phi=\identity$.  \qed
\end{proof}

\begin{Lemma}
\label{lemma:ln_collapse2}
Fix a layer $l\geq1$, $\nu_{\vomega}$, $\nu_{\vbeta}$, $\nu_{\vgamma}$, $\D$ in \Defref{def:random} and model parameters $\Theta\plm$ up to layer $l-1$ such that $\pwr_\rvx(\rvz\plm)>0$, $\forall \rvx$. Further suppose $\Norm=\LN$ and suppose and that the convolution of \Eqref{eq:propagation1} uses periodic boundary conditions.

Then for any $\eta>0$ and any $\delta>0$, there exists $N''(\eta,\delta)\in \mathbb{N}^*$ independent of $\Theta\plm$, $l$ such that if $C\ul \geq N''(\eta,\delta)$, it holds for random nets of \Defref{def:random} with probability greater than $1-\delta$ with respect to $\theta\pl$ that
 \begin{align*}
|\pwrone(\hy\pl)-\pwrone(\rvy\pl)| & \leq \eta, &
|\pwrone(\cz\pl)-\pwrone(\rvz\pl)| & \leq \eta,  \\
|\pwr_\rvx(\hy\pl)-\pwr_\rvx(\rvy\pl)| & \leq \eta, \quad \forall \rvx\in \D, &
|\pwr_\rvx(\cz\pl)-\pwr_\rvx(\rvz\pl)| & \leq \eta, \quad \forall \rvx\in \D, \\
|\pwr_\rvx(\hy\pl)-1| & \leq \eta, \quad \forall \rvx\in \D, &
\big|\pwr_\rvx(\cz\pl)-F_\phi \big(\gamma^2  + \beta^2\big)\big| & \leq \eta, \quad \forall \rvx\in \D,
 \end{align*}
where $F_\phi \equiv \frac{\phi(1)^2 + \phi(-1)^2}{2}>0$ and $\hy\pl$, $\cz\pl$ are defined in \Eqref{eq:additional3} and \Eqref{eq:additional5}.
\end{Lemma}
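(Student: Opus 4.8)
The plan is to reduce the whole statement to a single per-instance affine relation between $\hy\pl$ and $\rvy\pl$, exploiting that Layer Normalization is invariant to any positive, $\rvx$-dependent rescaling of its input. Since $\hy\pl=\frac{1}{\omega\sqrt{\pwr_\rvx(\rvz\plm)}}\rvx\pl$ is obtained from $\rvx\pl$ by multiplication by a positive scalar that is constant in $\alpha,c$ for fixed $\rvx$, we have $\rvy\pl=\LN(\rvx\pl)=\LN(\hy\pl)$, hence
\begin{align*}
\rvy\pl_{\alpha,c} = s_\rvx\,\hy\pl_{\alpha,c} - m_\rvx, \qquad s_\rvx \equiv \frac{1}{\sigma_\rvx(\hy\pl)}, \quad m_\rvx \equiv \frac{\mu_\rvx(\hy\pl)}{\sigma_\rvx(\hy\pl)},
\end{align*}
where crucially $s_\rvx,m_\rvx$ are scalars independent of $c$. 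Everything then follows from showing that, with high probability over $\theta\pl$ and uniformly over the finite dataset $\D$, one has $s_\rvx\simeq1$ and $m_\rvx\simeq0$ when $C\ul$ is large.

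First I would establish the concentration of the hatted instance statistics. Mirroring the per-channel convolution expansion of \Lemmaref{lemma:ln_collapse1} (Cauchy--Schwarz plus a fourth-moment bound) and using the periodic-boundary receptive-field identity $\pwr_\rvx(\hz\plm)=\pwr(\rvz\plm)$, which holds for \emph{every} $\rvx$, I would obtain for each fixed $\rvx,c$
\begin{align*}
\E_{\theta\pl}\big[\mu_{\rvx,c}(\hy\pl)\big]=0, \quad \E_{\theta\pl}\big[\pwr_{\rvx,c}(\hy\pl)\big]=1, \quad \E_{\theta\pl}\big[\mu_{\rvx,c}(\hy\pl)^2\big]\leq 1, \quad \E_{\theta\pl}\big[\pwr_{\rvx,c}(\hy\pl)^2\big]\leq 3\tilde{\omega}^4\omega^{-4},
\end{align*}
with bounds independent of both $\rvx$ and $\Theta\plm$; this $\Theta\plm$-independence is exactly what the hat rescaling buys and is what makes $N''$ independent of $l$. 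As the $C\ul$ channels are i.i.d. in $\theta\pl$, averaging divides the variances by $C\ul$, and Chebyshev's inequality forces $\pwr_\rvx(\hy\pl)\to1$ and $\mu_\rvx(\hy\pl)\to0$, hence $\sigma_\rvx(\hy\pl)^2=\pwr_\rvx(\hy\pl)-\mu_\rvx(\hy\pl)^2\to1$, for each fixed $\rvx$. A union bound over the $|\D|$ inputs upgrades this to a statement holding simultaneously for all $\rvx\in\D$; this already yields $|\pwr_\rvx(\hy\pl)-1|\leq\eta$ and, on the same good event, $|s_\rvx-1|$ and $|m_\rvx|$ arbitrarily small.

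Next I would propagate the affine relation to the remaining moments. Averaging $\pwr_{\rvx,c}(\rvy\pl)=s_\rvx^2\pwr_{\rvx,c}(\hy\pl)-2s_\rvx m_\rvx\mu_{\rvx,c}(\hy\pl)+m_\rvx^2$ over $c$ expresses $\pwr_\rvx(\rvy\pl)$ in terms of $s_\rvx,m_\rvx,\pwr_\rvx(\hy\pl),\mu_\rvx(\hy\pl)$; on the good event the first two are near $1$ and $0$ while the latter two are bounded, giving $|\pwr_\rvx(\hy\pl)-\pwr_\rvx(\rvy\pl)|\leq\eta$. Likewise $\mu_{\rvx,c}(\rvy\pl)=s_\rvx\mu_{\rvx,c}(\hy\pl)-m_\rvx$, and taking $\E_\rvx$, squaring and averaging over $c$ gives $|\pwrone(\hy\pl)-\pwrone(\rvy\pl)|\leq\eta$. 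For the post-activations I would write $\rvz\pl_{\alpha,c}=\phi(\vgamma\pl\uc\rvy\pl_{\alpha,c}+\vbeta\pl\uc)$ and $\cz\pl_{\alpha,c}=\phi(\vgamma\pl\uc\hy\pl_{\alpha,c}+\vbeta\pl\uc)$, use that a positive homogeneous $\phi$ is globally Lipschitz with constant $\max(|\phi(1)|,|\phi(-1)|)$ so that $|\rvz\pl_{\alpha,c}-\cz\pl_{\alpha,c}|\leq\max(|\phi(1)|,|\phi(-1)|)\,|\vgamma\pl\uc|\,|(s_\rvx-1)\hy\pl_{\alpha,c}-m_\rvx|$, and split $\pwr_\rvx(\cz\pl)-\pwr_\rvx(\rvz\pl)=\E_{\alpha,c}\big[(\cz\pl_{\alpha,c}-\rvz\pl_{\alpha,c})(\cz\pl_{\alpha,c}+\rvz\pl_{\alpha,c})\big]$ by Cauchy--Schwarz into a small factor times a bounded one, yielding $|\pwr_\rvx(\cz\pl)-\pwr_\rvx(\rvz\pl)|\leq\eta$ and, after $\E_\rvx$, $|\pwrone(\cz\pl)-\pwrone(\rvz\pl)|\leq\eta$. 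The final claim $\pwr_\rvx(\cz\pl)\simeq F_\phi(\gamma^2+\beta^2)$ comes from computing $\E_{\theta\pl}[\pwr_{\rvx,c}(\cz\pl)]=F_\phi(\gamma^2+\beta^2)$ via the positive/negative-part decomposition and the symmetry of $\nu_\vomega,\nu_\vbeta$ (exactly as in \Eqref{eq:ln_collapse1_30}), bounding its second moment, and closing with the same channel-averaging, Chebyshev, and union-bound scheme.

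The main obstacle I expect is twofold. The first is keeping every concentration bound uniform over $\rvx\in\D$ with constants depending only on $\nu_\vomega,\nu_\vbeta,\nu_\vgamma$ (through $\omega,\tilde{\omega},\gamma,\tilde{\gamma},\beta,\tilde{\beta},F_\phi$) and on $|\D|$, and in particular not on $\Theta\plm$ or $l$; this is precisely what the hat normalization secures and what the layerwise induction underlying \Thmref{thm:ln_collapse} requires. The second is controlling error propagation through the per-instance affine map and the nonlinearity, since $s_\rvx$ and $m_\rvx$ are themselves random and correlated with the statistics of $\hy\pl$. I would neutralize this correlation by arguing entirely on the single high-probability event — the intersection of the finitely many events delivered by the union bound — on which $s_\rvx\simeq1$, $m_\rvx\simeq0$, $\pwr_\rvx(\hy\pl)\simeq1$ and $\mu_\rvx(\hy\pl)\simeq0$ hold at once, so that only deterministic inequalities remain thereafter.
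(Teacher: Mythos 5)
Your proposal is correct and follows essentially the same route as the paper's proof: the identity $\rvy\pl=\LN(\hy\pl)$ giving a per-instance affine relation, Chebyshev concentration of $\mu_\rvx(\hy\pl)$, $\pwr_\rvx(\hy\pl)$, $\pwr_\rvx(\cz\pl)$ from the $\Theta\plm$-independent conditional moment bounds inherited from \Lemmaref{lemma:ln_collapse1}, a union bound over the finite dataset, and Lipschitzness of $\phi$ plus Cauchy--Schwarz to transfer the bounds to $\rvz\pl$ versus $\cz\pl$. The only detail worth flagging is that the factor $\E\uc\big[(\vgamma\pl\uc)^2(\pwr_{\rvx,c}(\rvy\pl)+1)\big]$ appearing in the post-activation bound is itself random in $\theta\pl$ and only bounded with high probability (the paper uses Markov's inequality here), which your single-good-event strategy accommodates but should be made explicit.
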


\begin{proof}\textbf{.} Let us start by noting that $\forall \alpha,c$:
\begin{align*}
\hx\pl_{\alpha,c} & = \sqrt{\frac{\pwr(\rvz\plm)}{\pwr_\rvx(\rvz\plm)}} \rvx\pl_{\alpha,c}, &
\hy\pl_{\alpha,c} & = \frac{1}{\omega\sqrt{\pwr(\rvz\plm)}} \hx\pl_{\alpha,c} = \frac{1}{\omega\sqrt{\pwr_\rvx(\rvz\plm)}} \rvx\pl_{\alpha,c}.
\end{align*}
This implies that $\hy\pl$ only depends on $\rvx$ and not on other inputs in the dataset.

Thus, \Eqref{eq:ln_collapse1_15}, (\ref{eq:ln_collapse1_16}), (\ref{eq:ln_collapse1_35}) still hold when considering the moments conditioned on $\rvx$, such that $\forall \rvx$:
\begin{align}
\E_{\theta\pl} \Big[ \mu_\rvx(\hy\pl) \Big] & = 0, & 
\Var_{\theta\pl} \Big[\mu_\rvx(\hy\pl) \Big] & \leq \frac{1}{C\ul}, \label{eq:ln_collapse2_1} \\
\E_{\theta\pl} \Big[ \pwr_\rvx(\hy\pl) \Big] & = 1, & 
\Var_{\theta\pl} \Big[\pwr_\rvx(\hy\pl) \Big] & \leq \frac{3 \tilde{\omega}^4 \omega^{-4}}{C\ul}, \label{eq:ln_collapse2_2} \\
\E_{\theta\pl}\Big[\pwr_\rvx(\cz\pl)\Big]  
& = F_\phi \big(\gamma^2  + \beta^2\big), &
\Var_{\theta\pl}\Big[\pwr_\rvx(\cz\pl)\Big] 
& \leq \frac{4F^2_\phi\Big(24 \tilde{\gamma}^4 \tilde{\omega}^4 \omega^{-4} + 8\tilde{\beta}^4\Big)}{C\ul}. \label{eq:ln_collapse2_3}
\end{align}

Combining \Eqref{eq:ln_collapse2_1}, (\ref{eq:ln_collapse2_2}), (\ref{eq:ln_collapse2_3}) with Chebyshev's inequality, we get for any $\eta>0$ and any $\delta>0$ that there exists $N_4(\eta,\delta)\in \mathbb{N}^*$ independent of $\Theta\plm$, $l$ such that, if $C\ul\geq N_4(\eta,\delta)$, it holds for any $\rvx$ that
\begin{align}
\P_{\theta\pl}\Big[ |\mu_\rvx(\hy\pl)| \leq \eta \Big] & \geq 1-\delta, \\ 
\P_{\theta\pl}\Big[ |\pwr_\rvx(\hy\pl)-1| \leq \eta \Big] & \geq 1-\delta, \\
\P_{\theta\pl}\Bigg[ \big|\pwr_\rvx(\cz\pl)-F_\phi(\gamma^2 + \beta^2)\big| \leq \eta \Bigg] & \geq 1-\delta.
\end{align}

Next we turn to $\pwr_\rvx(\hy\pl - \rvy\pl)$, $\pwr_\rvx(\cz\pl - \rvz\pl)$. Given that $\rvy\pl_{\alpha,c}=\frac{\rvx\pl_{\alpha,c}-\mu_\rvx(\rvx\pl)}{\sigma_\rvx(\rvx\pl)}=\frac{\hy\pl_{\alpha,c}-\mu_\rvx(\hy\pl)}{\sigma_\rvx(\hy\pl)}$, $\forall\rvx,\alpha,c$, we deduce $\forall \rvx,\alpha, c$:
\begin{align*}
\hy\pl_{\alpha,c} & = \sigma_\rvx(\hy\pl) \rvy\pl_{\alpha,c} + \mu_\rvx(\hy\pl) \\
& = \rvy\pl_{\alpha,c} + (\sigma_\rvx(\hy\pl) -1 ) \rvy\pl_{\alpha,c} + \mu_\rvx(\hy\pl), \\
\cy\pl_{\alpha,c}  & = \vgamma\pl\uc \rvy\pl_{\alpha,c} + \vbeta\pl\uc + \vgamma\pl\uc ( \sigma_\rvx(\hy\pl) -1 ) \rvy\pl_{\alpha,c} + \vgamma\pl\uc \mu_\rvx(\hy\pl) \\
& = \ty\pl_{\alpha,c} + \vgamma\pl\uc ( \sigma_\rvx(\hy\pl) -1 ) \rvy\pl_{\alpha,c} + \vgamma\pl\uc \mu_\rvx(\hy\pl).
\end{align*}

Now let us fix $\rvx$ and bound $\pwr_\rvx(\hy\pl - \rvy\pl)$ and $\pwr_\rvx(\cz\pl - \rvz\pl)$. We start by noting that
\begin{align*}
(\hy\pl_{\alpha,c} - \rvy\pl_{\alpha,c})^2
& \leq 2 (\sigma_\rvx(\hy\pl) -1 )^2 (\rvy\pl_{\alpha,c})^2 + 2 \mu_\rvx(\hy\pl)^2, \\
(\cy\pl_{\alpha,c} - \ty\pl_{\alpha,c})^2
& \leq 2 (\sigma_\rvx(\hy\pl) -1)^2 (\vgamma\pl\uc \rvy\pl_{\alpha,c})^2 + 2 \mu_\rvx(\hy\pl)^2 (\vgamma\pl\uc)^2, \\
(\cz\pl_{\alpha,c} - \rvz\pl_{\alpha,c})^2
& = \big(\phi(\cy\pl_{\alpha,c}) - \phi(\ty\pl_{\alpha,c})\big)^2 \\
& \leq 2F_\phi (\cy\pl_{\alpha,c} - \ty\pl_{\alpha,c})^2 \\
& \leq 4F_\phi (\sigma_\rvx(\hy\pl) -1)^2 (\vgamma\pl\uc \rvy\pl_{\alpha,c})^2 + 4F_\phi \mu_\rvx(\hy\pl)^2 (\vgamma\pl\uc)^2,
\end{align*}
where we used $(a+b)^2\leq 2a^2+2b^2$, $\forall a,b$ and $\big(\phi(a)-\phi(b)\big)^2\leq (F'_\phi)^2 (a-b)^2 \leq 2F_\phi (a-b)^2$, $\forall a,b$, by $F'_\phi$-Lipschitzness of $\phi$ with $F'_\phi=\max\big(|\phi(1)|,|\phi(-1)|\big) \leq \sqrt{2F_\phi}$.

We deduce for $\pwr_\rvx(\hy\pl - \rvy\pl)$ and $\pwr_\rvx(\cz\pl - \rvz\pl)$ that
\begin{align}
\pwr_\rvx(\hy\pl - \rvy\pl)
& \leq 2 ( \sigma_\rvx(\hy\pl) -1 )^2 \pwr_\rvx(\rvy\pl) + 2 \mu_\rvx(\hy\pl)^2 \nonumber \\
& \leq 2\Big( (\sigma_\rvx(\hy\pl) -1)^2  + \mu_\rvx(\hy\pl)^2 \Big),  \label{eq:ln_collapse2_4}  \\
\pwr_\rvx(\cz\pl - \rvz\pl) 
& \leq 4F_\phi (\sigma_\rvx(\hy\pl) -1 )^2 \E\uc\Big[ (\vgamma\pl\uc)^2 \pwr_{\rvx,c}(\rvy\pl) \Big] + 4F_\phi \mu_\rvx(\hy\pl)^2 \E\uc\Big[ (\vgamma\pl\uc)^2\Big] \nonumber \\
& \leq 4F_\phi \Big( (\sigma_\rvx(\hy\pl) -1 )^2 + \mu_\rvx(\hy\pl)^2 \Big) \E\uc\Big[ (\vgamma\pl\uc)^2 \big( \pwr_{\rvx,c}(\rvy\pl) +1 \big) \Big], \label{eq:ln_collapse2_5} 
\end{align}
where we used $\pwr_\rvx(\rvy\pl)\leq 1$.

Next, let us bound the expectation over $\theta\pl$ of $\E\uc\Big[ (\vgamma\pl\uc)^2 \big(\pwr_{\rvx,c}(\rvy\pl)+1\big) \Big]$:
\begin{align*}
\E_{\theta\pl} \Big[ \E\uc\Big[ (\vgamma\pl\uc)^2 \big(\pwr_{\rvx,c}(\rvy\pl) + 1\big) \Big] \Big] 
& = \E\uc \Big[ \E_{\theta\pl} \Big[ (\vgamma\pl\uc)^2 \big(\pwr_{\rvx,c}(\rvy\pl)+ 1\big) \Big] \Big] \nonumber \\
& = \E\uc \Big[ \E_{\theta\pl} \Big[ (\vgamma\pl\uc)^2 \Big] \E_{\theta\pl} \Big[ \pwr_{\rvx,c}(\rvy\pl) +1 \Big] \Big] \nonumber  \\
& = \gamma^2 \E_{\theta\pl} \Big[ \E\uc \Big[ \pwr_{\rvx,c}(\rvy\pl) + 1 \Big] \Big] \nonumber \\
& = \gamma^2 \E_{\theta\pl} \Big[ \pwr_\rvx(\rvy\pl) +1 \Big] \\
& \leq 2\gamma^2,
\end{align*}
where we used the independence of $\vgamma\pl\uc$ and $\pwr_{\rvx,c}(\rvy\pl)$ \mbox{with respect to $\theta\pl$ for any $c$, and again $\pwr_\rvx(\rvy\pl)\leq 1$.}

Markov's inequality then gives for any $\delta>0$ that
\begin{align}
\P_{\theta\pl} \Bigg[ \E\uc\Big[ (\vgamma\pl\uc)^2 \big(\pwr_{\rvx,c}(\rvy\pl)+1\big) \Big] \geq \frac{2\gamma^2}{\delta} \Bigg] & \leq 2\gamma^2 \frac{\delta}{2\gamma^2} = \delta. \nonumber
\end{align}

Thus, for any $1\geq\eta>0$ and any $\delta>0$, if $C\ul\geq N_4(\eta,\delta)$, it holds for any $\rvx$ with probability greater than $1-4\delta$ with respect to $\theta\pl$ that
\begin{align}
|\mu_\rvx(\hy\pl)| & \leq \eta, \label{eq:ln_collapse2_6}  \\ 
|\pwr_\rvx(\hy\pl)-1| & \leq \eta, \label{eq:ln_collapse2_7}  \\
\big|\pwr_\rvx(\cz\pl)-F_\phi(\gamma^2 + \beta^2)\big| & \leq \eta, \label{eq:ln_collapse2_8}  \\
\E\uc\Big[ (\vgamma\pl\uc)^2 \big(\pwr_{\rvx,c}(\rvy\pl)+1\big) \Big] & \leq \frac{2\gamma^2}{\delta}. \label{eq:ln_collapse2_9}
\end{align}

If both inequalities of \Eqref{eq:ln_collapse2_6} and \Eqref{eq:ln_collapse2_7} hold with $1\geq\eta>0$, then
\begin{align*}
|\sigma_\rvx(\hy\pl)-1| 
& \leq |\sigma_\rvx(\hy\pl)-1| |\sigma_\rvx(\hy\pl)+1| = |\sigma_\rvx(\hy\pl)^2-1| \\
& \leq |\pwr_\rvx(\hy\pl)-1| + \mu_\rvx(\hy\pl)^2 \\
& \leq \eta + \eta^2 \leq 2\eta, \\
(\sigma_\rvx(\hy\pl)-1)^2 
& \leq 4\eta^2 \leq 4\eta, \\
\mu_\rvx(\hy\pl)^2 & \leq \eta^2 \leq \eta,
\end{align*}
where we used $\eta^2 \leq \eta$ for $1\geq\eta>0$.

Injecting this into \Eqref{eq:ln_collapse2_4} and \Eqref{eq:ln_collapse2_5}, we get that, if $C\ul\geq N_4(\eta,\delta)$ with $1\geq\eta>0$, it holds for any $\rvx$ with probability greater than $1-4\delta$ with respect to $\theta\pl$ that
\begin{align*}
|\pwr_\rvx(\hy\pl)-1| & \leq \eta, \\
\pwr_\rvx(\hy\pl - \rvy\pl) & \leq 10 \eta,  \\
\big|\pwr_\rvx(\cz\pl)-F_\phi(\gamma^2 + \beta^2)\big| & \leq \eta, \\
\pwr_\rvx(\cz\pl - \rvz\pl) & \leq 20F_\phi\eta \frac{2\gamma^2}{\delta}.
\end{align*}

Let us define $N_5$ independently of $\Theta\plm$, $l$ as $N_5(\eta,\delta)=N_4\Big( \min(\frac{\eta}{10}, \frac{\eta}{20F_\phi} \frac{\delta}{2\gamma^2}, 1),\frac{\delta}{4|\D|}\Big)$, $\forall\eta>0$, $\forall\delta>0$. Then $\forall\eta>0$, $\forall\delta>0$ if $C\ul\geq N_5(\eta,\delta)$, it holds with probability greater than $1-\delta$ with respect to $\theta\pl$ that
\begin{align}
|\pwr_\rvx(\hy\pl)-1| & \leq \eta, \quad \forall \rvx \in \D, &
 \big|\pwr_\rvx(\cz\pl)-F_\phi(\gamma^2 + \beta^2)\big| & \leq \eta, \quad \forall \rvx \in \D, \label{eq:ln_collapse2_10} \\
\pwr_\rvx(\hy\pl - \rvy\pl) & \leq \eta, \quad \forall \rvx \in \D,  &
\pwr_\rvx(\cz\pl - \rvz\pl) & \leq \eta, \quad \forall \rvx \in \D. \label{eq:ln_collapse2_11}
\end{align}

If \Eqref{eq:ln_collapse2_10} and \Eqref{eq:ln_collapse2_11} hold, then
\begin{align*} 
\big| \pwrone(\hy\pl)-\pwrone(\rvy\pl) \big|
& = \big| \pwrone(\hy\pl)-\pwrone(\hy\pl+\rvy\pl-\hy\pl) \big| \\
& \leq \pwrone(\rvy\pl-\hy\pl) + 2 \Big| \E\uc\Big[ \mu_{c}(\hy\pl)\mu_{c}(\rvy\pl-\hy\pl) \Big] \Big| \\
& \leq \pwrone(\rvy\pl-\hy\pl) + 2 \E\uc\Big[ |\mu_{c}(\hy\pl)| |\mu_{c}(\rvy\pl-\hy\pl)| \Big] \\
& \leq \pwrone(\rvy\pl-\hy\pl) + 2 \sqrt{\E\uc\Big[ \mu_{c}(\hy\pl)^2 \Big]  \E\uc\Big[ \mu_{c}(\rvy\pl-\hy\pl)^2 \Big]} \\
& \leq \pwr(\rvy\pl-\hy\pl) + 2 \sqrt{\pwr(\hy\pl) \pwr(\rvy\pl-\hy\pl)} \\
& \leq \E_\rvx\Big[\pwr_\rvx(\rvy\pl-\hy\pl)\Big] + 2 \sqrt{\E_\rvx\Big[\pwr_\rvx(\hy\pl)\Big] \E_\rvx\Big[\pwr_\rvx(\rvy\pl-\hy\pl)\Big]} \\
& \leq \eta + 2 \sqrt{(1+\eta)\eta}, 
\end{align*}
where we used $\pwrone(\rvy\pl-\hy\pl)\leq \pwr(\rvy\pl-\hy\pl)$ and $\pwrone(\hy\pl)\leq \pwr(\hy\pl)$, as well as Jensen's inequality and Cauchy-Schwartz inequality.

Similarly, if \Eqref{eq:ln_collapse2_10} and \Eqref{eq:ln_collapse2_11} hold, then $\forall \rvx$:
\begin{align*} 
\big|\pwr_\rvx(\hy\pl) - \pwr_\rvx(\rvy\pl)\big| 
& = \big| \pwr_\rvx(\hy\pl)-\pwr_\rvx(\hy\pl+\rvy\pl-\hy\pl) \big| \\
& \leq \pwr_\rvx(\rvy\pl-\hy\pl) + 2 \mu_\rvx(|\hy\pl| |\rvy\pl-\hy\pl|) \\
& \leq \pwr_\rvx(\rvy\pl-\hy\pl) + 2 \sqrt{\pwr_\rvx(\hy\pl) \pwr_\rvx(\rvy\pl-\hy\pl)} \\
& \leq \eta + 2 \sqrt{(1+\eta)\eta}.
\end{align*}

A similar calculation with $\cz\pl$, $\rvz\pl$ shows that if \Eqref{eq:ln_collapse2_10} and \Eqref{eq:ln_collapse2_11} hold, then
\begin{align*} 
\big| \pwrone(\cz\pl)-\pwrone(\rvz\pl) \big| 
& \leq \eta + 2 \sqrt{\eta}\sqrt{F_\phi(\gamma^2+\beta^2)+\eta}, \\ 
\big|\pwr_\rvx(\cz\pl) - \pwr_\rvx(\rvz\pl)\big| 
& \leq \eta + 2 \sqrt{\eta}\sqrt{F_\phi(\gamma^2+\beta^2)+\eta}, \quad \forall \rvx.
\end{align*}

Given that the three terms $\eta$, $\eta + 2 \sqrt{(1+\eta)\eta}$ and $\eta + 2 \sqrt{\eta}\sqrt{F_\phi(\gamma^2+\beta^2)+\eta}$ converge to $0$ as $\eta\to0$, it follows that there exists a mapping $h$ such that for any $\eta>0$: $h(\eta)>0$ and
\begin{align*} 
 h(\eta) & \leq \eta, \\
 h(\eta) + 2 \sqrt{(1+h(\eta))h(\eta)} &  \leq \eta, &\\
 h(\eta) + 2 \sqrt{h(\eta)}\sqrt{F_\phi(\gamma^2+\beta^2)+h(\eta)} & \leq \eta.
\end{align*}

Let us finally define $N''$ independently of $\Theta\pl$, $l$ as $N''(\eta,\delta)=N_5(h(\eta),\delta)$, $\forall\eta>0$, $\forall\delta>0$. Then $\forall\eta>0$, $\forall\delta>0$, if $C\ul\geq N''(\eta,\delta)$, it holds with probability greater than $1-\delta$ with respect to $\theta\pl$ that 
 \begin{align*}
|\pwrone(\hy\pl)-\pwrone(\rvy\pl)| & \leq \eta,  &
|\pwrone(\cz\pl)-\pwrone(\rvz\pl)| & \leq \eta,  \\
|\pwr_\rvx(\hy\pl)-\pwr_\rvx(\rvy\pl)| & \leq \eta, \quad \forall \rvx\in \D, &
|\pwr_\rvx(\cz\pl)-\pwr_\rvx(\rvz\pl)| & \leq \eta, \quad \forall \rvx\in \D, \\
|\pwr_\rvx(\hy\pl)-1| & \leq \eta, \quad \forall \rvx\in \D, &
\big|\pwr_\rvx(\cz\pl)-F_\phi (\gamma^2+\beta^2)\big| & \leq \eta, \quad \forall \rvx\in \D. \tag*{\qed}
 \end{align*}

\end{proof}

\begin{Lemma}
\label{lemma:ln_collapse3}
Fix a layer $l\geq1$, $\nu_{\vomega}$, $\nu_{\vbeta}$, $\nu_{\vgamma}$, $\D$ in \Defref{def:random} and model parameters $\Theta\plm$ up to layer $l-1$ such that $\pwr_\rvx(\rvz\plm)>0$, $\forall \rvx$. Further suppose $\Norm=\LN$ and suppose that the convolution of \Eqref{eq:propagation1} uses periodic boundary conditions.

Then for any $\eta>0$ and any $\delta>0$, there exists $N'''(\eta,\delta)\in \mathbb{N}^*$ independent of $\Theta\plm$, $l$ such that if $C\ul \geq N'''(\eta,\delta)$, it holds for random nets of \Defref{def:random} with probability greater than $1-\delta$ with respect to $\theta\pl$ that
 \begin{align*}
|\varrho(\rvy\pl)-\varrho(\hz\plm)| & \leq \eta, & 
|\varrho(\rvz\pl) - \rho\chi(\hz\plm) \varrho(\hz\plm)| & \leq \eta, \\
|\pwr_\rvx(\rvy\pl)-1| & \leq \eta,\quad \forall \rvx \in \D, &
\big|\pwr_\rvx(\rvz\pl)-F_\phi(\gamma^2+\beta^2)\big| & \leq \eta,\quad \forall \rvx \in \D,
 \end{align*}
where $F_\phi \equiv \frac{\phi(1)^2 + \phi(-1)^2}{2}>0$, $\rho=\frac{\gamma^2}{\gamma^2 + \beta^2}<1$ and $\chi(\hz\plm)\in \R^+$ is dependent on $\Theta\plm$ but independent of $\theta\pl$ such that $\chi(\hz\plm)\leq1$ in general and $\chi(\hz\plm)=1$ if $\phi=\identity$.
\end{Lemma}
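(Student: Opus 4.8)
The plan is to deduce the four statements for $\rvy\pl$ and $\rvz\pl$ by transferring the corresponding statements already established for the auxiliary tensors $\hy\pl$ and $\cz\pl$. Concretely, \Lemmaref{lemma:ln_collapse1} controls $\varrho(\hy\pl)$ and $\varrho(\cz\pl)$, while \Lemmaref{lemma:ln_collapse2} shows that $\hy\pl$ is close to $\rvy\pl$ and $\cz\pl$ is close to $\rvz\pl$ in all the relevant moments ($\pwrone$ and the instance-conditional powers $\pwr_\rvx$). I would therefore chain these two lemmas with the triangle inequality. Throughout I would use the identity $\pwr(\cdot)=\E_\rvx[\pwr_\rvx(\cdot)]$, so that a bound holding for every $\rvx\in\D$ averages into the corresponding unconditional bound; I would also invoke \Propref{prop:powers_one} to fix $\pwr(\rvy\pl)=1$ exactly for $\Norm=\LN$.

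I would first dispatch the two power statements. From \Lemmaref{lemma:ln_collapse2}, with high probability $|\pwr_\rvx(\rvy\pl)-1|\leq|\pwr_\rvx(\rvy\pl)-\pwr_\rvx(\hy\pl)|+|\pwr_\rvx(\hy\pl)-1|$ is small for every $\rvx$, and likewise $|\pwr_\rvx(\rvz\pl)-F_\phi(\gamma^2+\beta^2)|\leq|\pwr_\rvx(\rvz\pl)-\pwr_\rvx(\cz\pl)|+|\pwr_\rvx(\cz\pl)-F_\phi(\gamma^2+\beta^2)|$ is small for every $\rvx$. This already yields the third and fourth claims. Averaging over $\rvx$ then gives $\pwr(\rvz\pl)\approx F_\phi(\gamma^2+\beta^2)$, a quantity bounded away from zero since $F_\phi>0$ and $\gamma,\beta>0$ by \Defref{def:random}; this positivity is what I will need to divide safely in the $\varrho$ estimates.

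For the two $\varrho$ statements I would exploit that $\pwr(\rvy\pl)=1$, so $\varrho(\rvy\pl)=1-\pwrone(\rvy\pl)$, whereas $\varrho(\hy\pl)=1-\pwrone(\hy\pl)/\pwr(\hy\pl)$. Writing the difference as a single fraction over $\pwr(\hy\pl)$ and using $\pwrone(\rvy\pl)\leq1$, I can bound $|\varrho(\rvy\pl)-\varrho(\hy\pl)|$ by a multiple of $\max\big(|\pwrone(\hy\pl)-\pwrone(\rvy\pl)|,\,|1-\pwr(\hy\pl)|\big)$, both of which \Lemmaref{lemma:ln_collapse2} makes small while keeping $\pwr(\hy\pl)$ near $1$. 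Combining with $|\varrho(\hy\pl)-\varrho(\hz\plm)|\leq\eta$ from \Lemmaref{lemma:ln_collapse1} gives the first claim. The second claim follows by the identical argument applied to $\rvz\pl$ versus $\cz\pl$, where now the relevant denominators $\pwr(\rvz\pl),\pwr(\cz\pl)$ sit near $F_\phi(\gamma^2+\beta^2)>0$; chaining with $|\varrho(\cz\pl)-\rho\chi(\hz\plm)\varrho(\hz\plm)|\leq\eta$ from \Lemmaref{lemma:ln_collapse1} closes the argument.

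Finally, I would set $N'''(\eta,\delta)$ to be the maximum of the thresholds $N'$ and $N''$ supplied by Lemmas \ref{lemma:ln_collapse1} and \ref{lemma:ln_collapse2}, each called with suitably rescaled tolerance and failure probability (e.g.\ $\eta$ replaced by a small constant multiple and $\delta$ by $\delta/2$), and take a union bound so that the conclusions of both lemmas hold simultaneously with probability at least $1-\delta$. The only delicate point is the error propagation through the ratio defining $\varrho$: additive errors in the numerator and denominator could in principle blow up, but this is averted precisely because every denominator that appears ($\pwr(\rvy\pl)=1$, $\pwr(\hy\pl)\approx1$, and $\pwr(\rvz\pl),\pwr(\cz\pl)\approx F_\phi(\gamma^2+\beta^2)$) is bounded below by a fixed positive constant. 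The remaining steps are routine $\varepsilon$-bookkeeping.
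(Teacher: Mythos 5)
Your proposal is correct and follows essentially the same route as the paper's proof: it chains \Lemmaref{lemma:ln_collapse1} and \Lemmaref{lemma:ln_collapse2} by the triangle inequality, averages the per-$\rvx$ bounds into unconditional ones, and controls the error propagation through the ratio defining $\varrho$ by noting that all denominators stay near $1$ or near $F_\phi(\gamma^2+\beta^2)>0$, before a final union bound with rescaled tolerances. The only quibble is your appeal to \Propref{prop:powers_one} to get $\pwr(\rvy\pl)=1$ exactly, which is not licensed by the lemma's hypotheses (nothing guarantees $\sigma_\rvx(\rvx\pl)\neq 0$ for every $\rvx$); this is harmless, however, since your own high-probability bound $|\pwr_\rvx(\rvy\pl)-1|\leq\eta$ for all $\rvx$ already gives $|\pwr(\rvy\pl)-1|\leq\eta$, which is all the denominator control requires --- and it is what the paper uses.
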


\begin{proof}\textbf{.} 
First let us note that
\begin{align*}
 |\pwr_\rvx(\hy\pl)-\pwr_\rvx(\rvy\pl)| & \leq \eta, \quad \forall \rvx\in \D 
 && \implies & |\pwr(\hy\pl)-\pwr(\rvy\pl)| & \leq \eta, \\
|\pwr_\rvx(\hy\pl)-1| & \leq \eta, \quad \forall \rvx\in \D
 && \implies & |\pwr(\hy\pl)-1| & \leq \eta, \\
|\pwr_\rvx(\cz\pl)-\pwr_\rvx(\rvz\pl)| & \leq \eta, \quad \forall \rvx\in \D 
 && \implies & |\pwr(\cz\pl)-\pwr(\rvz\pl)| & \leq \eta, \\
\big|\pwr_\rvx(\cz\pl)-F_\phi (\gamma^2+\beta^2)\big| & \leq \eta, \quad \forall \rvx\in \D
 && \implies & |\pwr(\cz\pl)-F_\phi (\gamma^2+\beta^2)\big| & \leq \eta.
 \end{align*}

Combined with \Lemmaref{lemma:ln_collapse2}, we deduce for any $\eta>0$ and any $\delta>0$ that there exists $N''(\eta,\delta)\in \mathbb{N}^*$ independent of $\Theta\plm$, $l$ such that if $C\ul \geq N''(\eta,\delta)$, it holds with probability greater than $1-\delta$ with respect to $\theta\pl$ that
 \begin{align}
|\pwrone(\hy\pl)-\pwrone(\rvy\pl)| & \leq \eta, &
|\pwrone(\cz\pl)-\pwrone(\rvz\pl)| & \leq \eta, \label{eq:ln_collapse3_1} \\
|\pwr(\hy\pl)-\pwr(\rvy\pl)| & \leq \eta, &
|\pwr(\cz\pl)-\pwr(\rvz\pl)| & \leq \eta, \label{eq:ln_collapse3_2}  \\
|\pwr(\hy\pl)-1| & \leq \eta, &
\big|\pwr(\cz\pl)-F_\phi(\gamma^2+\beta^2)\big| & \leq \eta,  \label{eq:ln_collapse3_3} 
 \end{align}
where $F_\phi \equiv \frac{\phi(1)^2 + \phi(-1)^2}{2}>0$.

If all inequalities of \Eqref{eq:ln_collapse3_1}, (\ref{eq:ln_collapse3_2}), (\ref{eq:ln_collapse3_3}) hold with $\eta\leq\frac{1}{4}F_\phi(\gamma^2+\beta^2)$, then $\varrho(\rvz\pl)-\varrho(\cz\pl)$ may be upper bounded using
\begin{align*}
\pwr(\rvz\pl) -\pwrone(\rvz\pl)
&  \leq \pwr(\cz\pl) -\pwrone(\cz\pl) + |\pwr(\rvz\pl) - \pwr(\cz\pl)| + |\pwrone(\cz\pl)-\pwrone(\rvz\pl)| \nonumber \\
& \leq \varrho(\cz\pl) \pwr(\cz\pl) + 2\eta   \nonumber \\
& \leq \varrho(\cz\pl) \big(F_\phi(\gamma^2+\beta^2)+\eta\big) + 2\eta \nonumber \\
& \leq \varrho(\cz\pl) F_\phi(\gamma^2+\beta^2) + 3\eta, \nonumber\\
\pwr(\rvz\pl)
&\geq  \pwr(\cz\pl)-\eta \nonumber\\
& \geq  F_\phi(\gamma^2+\beta^2) - 2\eta, \nonumber\\
\varrho(\rvz\pl) 
& \leq \frac{\varrho(\cz\pl)F_\phi(\gamma^2+\beta^2) + 3\eta}{F_\phi(\gamma^2+\beta^2) - 2\eta} \\
& \leq \frac{\varrho(\cz\pl) + \frac{3\eta}{F_\phi(\gamma^2+\beta^2)}}{1 - \frac{2\eta}{F_\phi(\gamma^2+\beta^2)}}  \\
& \leq \Big(\varrho(\cz\pl) + \frac{3\eta}{F_\phi(\gamma^2+\beta^2)}\Big)\Big(1 + \frac{8\eta}{F_\phi(\gamma^2+\beta^2)}\Big) \\
& \leq \varrho(\cz\pl) + \frac{35\eta}{F_\phi(\gamma^2+\beta^2)},
\end{align*}
where we used $\varrho(\cz\pl)\leq 1$, as well as $\frac{1}{1-x}\leq 1+4x$ for $x\leq\frac{1}{2}$ and $\Big(\frac{\eta}{F_\phi(\gamma^2+\beta^2)}\Big)^2\leq \frac{\eta}{F_\phi(\gamma^2+\beta^2)}$ for $\frac{\eta}{F_\phi(\gamma^2+\beta^2)}\leq 1$.

Similarly, if all inequalities of \Eqref{eq:ln_collapse3_1}, (\ref{eq:ln_collapse3_2}), (\ref{eq:ln_collapse3_3}) hold with $\eta\leq\frac{1}{4}F_\phi(\gamma^2+\beta^2)$, then $\varrho(\rvz\pl)-\varrho(\cz\pl)$ may be lower bounded using
\begin{align*}
\pwr(\rvz\pl) -\pwrone(\rvz\pl)
&  \geq \pwr(\cz\pl) -\pwrone(\cz\pl) - |\pwr(\rvz\pl) - \pwr(\cz\pl)| - |\pwrone(\cz\pl)-\pwrone(\rvz\pl)| \nonumber \\
& \geq \varrho(\cz\pl) \pwr(\cz\pl) - 2\eta   \nonumber \\
& \geq \varrho(\cz\pl) \big(F_\phi(\gamma^2+\beta^2)-\eta\big) - 2\eta \nonumber \\
& \geq \varrho(\cz\pl)F_\phi(\gamma^2+\beta^2) - 3\eta, \nonumber\\
\pwr(\rvz\pl)
&\leq  \pwr(\cz\pl)+\eta \leq  F_\phi(\gamma^2+\beta^2)+2\eta, \nonumber\\
\varrho(\rvz\pl) 
& \geq \frac{\max\Big(\varrho(\cz\pl)F_\phi(\gamma^2+\beta^2) - 3\eta,0\Big)}{F_\phi(\gamma^2+\beta^2) + 2\eta}  \\
& \geq \frac{\max\Big(\varrho(\cz\pl) - \frac{3\eta}{F_\phi(\gamma^2+\beta^2)},0\Big)}{1+\frac{2\eta}{F_\phi(\gamma^2+\beta^2)}} \\
& \geq \max\Bigg(\Big(\varrho(\cz\pl) - \frac{3\eta}{F_\phi(\gamma^2+\beta^2)}\Big) \Big(1-\frac{2\eta}{F_\phi(\gamma^2+\beta^2)}\Big),0\Bigg) \\
& \geq \varrho(\cz\pl) - \frac{5\eta}{F_\phi(\gamma^2+\beta^2)},
\end{align*}
where we used $\varrho(\cz\pl)\leq 1$, as well as $\frac{1}{1+x}\geq 1-x \geq0$ for $0 \leq x\leq1$.

We deduce that if all inequalities of \Eqref{eq:ln_collapse3_1}, (\ref{eq:ln_collapse3_2}), (\ref{eq:ln_collapse3_3}) hold with $\eta\leq\frac{1}{4}F_\phi(\gamma^2+\beta^2)$, then
\begin{align}
|\varrho(\rvz\pl)-\varrho(\cz\pl)| \leq  \frac{35\eta}{F_\phi(\gamma^2+\beta^2)}. \label{eq:ln_collapse3_4} 
\end{align}

The reasoning that yielded \Eqref{eq:ln_collapse3_4} from \Eqref{eq:ln_collapse3_1}, (\ref{eq:ln_collapse3_2}), (\ref{eq:ln_collapse3_3}) can be immediately transposed by replacing $\rvz\pl$ by $\rvy\pl$, $\cz\pl$ by $\hy\pl$ and $F_\phi(\gamma^2+\beta^2)$ by $1$.

Consequently, if all inequalities of \Eqref{eq:ln_collapse3_1}, (\ref{eq:ln_collapse3_2}), (\ref{eq:ln_collapse3_3}) hold with $\eta\leq\frac{1}{4}$, then
\begin{align*}
|\varrho(\rvy\pl)-\varrho(\hy\pl)| \leq  35\eta. 
\end{align*}

\Lemmaref{lemma:ln_collapse1} also tells us that for any $\eta>0$ and any $\delta>0$, there exists $N'(\eta,\delta)\in \mathbb{N}^*$ independent of $\Theta\plm$, $l$ such that if $C\ul \geq N'(\eta,\delta)$, it holds with probability greater than $1-2\delta$ with respect to $\theta\pl$ that
 \begin{align*}
|\varrho(\hy\pl)-\varrho(\hz\plm)| & \leq \eta, &
|\varrho(\cz\pl) - \rho \chi(\hz\plm) \varrho(\hz\plm)| & \leq \eta,
\end{align*}
where $\rho=\frac{\gamma^2}{\gamma^2 + \beta^2}<1$ and $\chi(\hz\plm)\in \R^+$ is dependent on $\Theta\plm$ but independent of $\theta\pl$ such that $\chi(\hz\plm)\leq1$ in general and $\chi(\hz\plm)=1$ if $\phi=\identity$.

Let us then define $N_6$ independently of $\Theta\plm$, $l$ as 
\begin{align*}
N_6(\eta,\delta)=\max\Bigg(N'(\eta,\delta),N''\Big(\min\big(\eta,\frac{1}{4},\frac{1}{4}F_\phi(\gamma^2+\beta^2)\big),\delta\Big)\Bigg), \quad \forall\eta>0, \quad \forall\delta>0.
\end{align*}

\mbox{Then $\forall \eta>0$, $\forall \delta>0$, if $C\ul\geq N_6(\eta,\delta)$, it holds with probability greater than $1-3\delta$ with respect to $\theta\pl$ that}
\begin{align*}
|\varrho(\rvy\pl) - \varrho(\hz\plm)|
& \leq |\varrho(\rvy\pl) - \varrho(\hy\pl)| + |\varrho(\hy\pl) - \varrho(\hz\plm)|  \\
& \leq 35 \eta + \eta  \\
& \leq 36 \eta,  \\
|\varrho(\rvz\pl) - \rho \chi(\hz\plm)\varrho(\hz\plm)|
& \leq |\varrho(\rvz\pl) - \varrho(\cz\pl)| + |\varrho(\cz\pl) - \rho \chi(\hz\plm)\varrho(\hz\plm)|  \\
& \leq \frac{35\eta}{F_\phi(\gamma^2+\beta^2)} + \eta  \\
& \leq \Bigg(\frac{35 + F_\phi(\gamma^2+\beta^2)}{F_\phi(\gamma^2+\beta^2)} \Bigg) \eta.
\end{align*}

Now, let us define $N_7$ independently of $\Theta\plm$, $l$ as 
\begin{align*}
N_7(\eta,\delta)=N_6\Bigg(\min\Big(\frac{\eta}{36},\frac{F_\phi(\gamma^2+\beta^2)}{35+F_\phi(\gamma^2+\beta^2)} \eta  \Big),\frac{\delta}{3}\Bigg), \quad \forall\eta>0, \quad \forall\delta>0. 
\end{align*}

Then $\forall \eta>0$, $\forall\delta>0$, if $C\ul\geq N_7(\eta,\delta)$, it holds with probability greater than $1-\delta$ with respect to $\theta\pl$ that
\begin{align*}
|\varrho(\rvy\pl) - \varrho(\hz\plm)| & \leq \eta, &
|\varrho(\rvz\pl) - \rho \chi(\hz\plm) \varrho(\hz\plm)| & \leq \eta.
\end{align*}

\Lemmaref{lemma:ln_collapse2} can be used again to deduce for any $\eta>0$ and any $\delta>0$, that if $C\ul\geq N''(\eta,\delta)$, it holds with probability greater than $1-\delta$ with respect to $\theta\pl$ that $\forall\rvx \in \D$:
\begin{align*}
|\pwr_\rvx(\rvy\pl)-1| & \leq |\pwr_\rvx(\rvy\pl)-\pwr_\rvx(\hy\pl)| + |\pwr_\rvx(\hy\pl)-1| \\
& \leq 2 \eta,  \\
|\pwr_\rvx(\rvz\pl)-F_\phi(\gamma^2+\beta^2)| & \leq |\pwr_\rvx(\rvz\pl)-\pwr_\rvx(\cz\pl)| + |\pwr_\rvx(\cz\pl)-F_\phi(\gamma^2+\beta^2)| \\
& \leq 2 \eta. 
 \end{align*}
Let us finally define $N'''$ independently of $\Theta\plm$, $l$ as $N'''(\eta,\delta)=\max \Big( N_7(\eta,\frac{\delta}{2}), N''\big(\frac{\eta}{2},\frac{\delta}{2}\big)\Big)$, $\forall\eta>0$, $\forall\delta>0$. Then, for any $\eta>0$ and any $\delta>0$, if $C\ul\geq N'''(\eta,\delta)$, it holds with probability greater than $1-\delta$ with respect to $\theta\pl$ that
 \begin{align*}
|\varrho(\rvy\pl) - \varrho(\hz\plm)| & \leq \eta, & 
|\varrho(\rvz\pl) - \rho \chi(\hz\plm) \varrho(\hz\plm)| & \leq \eta, \\
|\pwr_\rvx(\rvy\pl)-1| & \leq \eta,\quad \forall \rvx \in \D, &
\big|\pwr_\rvx(\rvz\pl)-F_\phi(\gamma^2+\beta^2)\big| & \leq \eta,\quad \forall \rvx \in \D. \tag*{\qed}
 \end{align*}
\end{proof}

\subsection{Proof of \Thmref{thm:ln_collapse}}
\label{sec:proof_ln_collapse}

\textbf{\Thmref{thm:ln_collapse}}\textbf{.}\emph{
Fix a layer $l\geq 1$ and $\nu_{\vomega}$, $\nu_{\vbeta}$, $\nu_{\vgamma}$, $\D$ in \Defref{def:random}. Further suppose $\Norm=\LN$ and suppose that the convolution of \Eqref{eq:propagation1} uses periodic boundary conditions.}

\emph{Then for any $\eta>0$ and any $\delta>0$, there exists $N(\eta,\delta)\in \mathbb{N}^*$ such that if $\min_{1\leq k \leq l} C\uk~\geq~N(\eta,\delta)$, it holds for random nets of \Defref{def:random} with probability greater than $1-\delta$ with respect to $\Theta\pl$ that 
 \begin{align}
  \pwr(\rvy\pl) - \pwrone(\rvy\pl) & \leq \rho\plm + \eta, &  \pwr(\rvy\pl) & =1,
 \end{align}
where $\rho \equiv \frac{\gamma^2}{\gamma^2 + \beta^2} <1$.}

\begin{proof}\textbf{.} For fixed $\Theta\pkm$ such that $\pwr_\rvx(\rvz\pkm)>0$, $\forall \rvx\in \D$, \Lemmaref{lemma:ln_collapse3} tells us for any $\delta>0$ that, if $C\uk \geq N'''\Big(\min\big(\frac{1}{2},\frac{F_\phi(\gamma^2+\beta^2)}{2}\big),\delta\Big)$ with $F_\phi \equiv \frac{\phi(1)^2 + \phi(-1)^2}{2}>0$, it holds with probability greater than $1-\delta$ with respect to $\theta\pk$ that $\forall \rvx \in \D$:
 \begin{align*}
|\pwr_\rvx(\rvy\pk)-1| & \leq \frac{1}{2},\qquad
& \pwr_\rvx(\rvy\pk) & \geq 1-\frac{1}{2}>0, \\
\big|\pwr_\rvx(\rvz\pk)-F_\phi(\gamma^2+\beta^2)\big| & \leq \frac{F_\phi(\gamma^2+\beta^2)}{2}, \qquad
&  \pwr_\rvx(\rvz\pk) & \geq F_\phi(\gamma^2+\beta^2) - \frac{F_\phi(\gamma^2+\beta^2)}{2}>0.
 \end{align*}

Let us define the event $A\pkm\equiv\Big\{\pwr_\rvx(\rvy\pkm)>0, \forall \rvx\in\D\Big\} \cap \Big\{\pwr_\rvx(\rvz\pkm)>0, \forall \rvx\in\D\Big\}$ with $A\po\equiv \Big\{\pwr_\rvx(\rvx)>0, \forall \rvx\in\D\Big\}$. 

Given that $N'''$ is independent of $\Theta\pkm$, $k$, we deduce for $C\uk \geq N'''\Big(\min\big(\frac{1}{2},\frac{F_\phi(\gamma^2+\beta^2)}{2}\big),\delta\Big)$ that
 \begin{align*}
\P_{\Theta\pk | A\pkm}\big[  A\pk \big] = \P_{\Theta\pk| A\pkm}\Bigg[\Big(\pwr_\rvx(\rvy\pk)>0, \forall \rvx\in\D\Big) \land \Big(\pwr_\rvx(\rvz\pk)>0, \forall \rvx\in\D\Big) \Bigg] \geq 1-\delta.
 \end{align*}

Using the fact that $\pwr_\rvx(\rvx)>0$, $\forall \rvx\in \D$ by \Defref{def:random}, this implies for $\min_{1\leq k \leq l} C\uk \geq N'''\Big(\min\big(\frac{1}{2},\frac{F_\phi(\gamma^2+\beta^2)}{2}\big),\delta\Big)$  that
 \begin{align*}
\P_{\Theta\pl}\big[  A\pl \big] = \P_{\Theta\pl}\Bigg[\Big(\pwr_\rvx(\rvy\pl)>0, \forall \rvx\in\D\Big) \land \Big(\pwr_\rvx(\rvz\pl)>0, \forall \rvx\in\D\Big) \Bigg] \geq (1-\delta)^l.
 \end{align*}

Thus, for any $\delta>0$ there exists $N_8(\delta)\in\mathbb{N}^*$ such that, if $\min_{1\leq k \leq l} C\uk\geq N_8(\delta)$, it holds with probability greater than $1-\delta$ with respect to $\Theta\pl$ that
\begin{align*}
\Big(\pwr_\rvx(\rvy\pl)>0, \forall \rvx\in\D \Big) \land \Big(\pwr_\rvx(\rvz\pl)>0, \forall \rvx\in\D \Big).
\end{align*}

Given that $\Big(\pwr_\rvx(\rvy\pl)>0, \forall \rvx\in\D \Big) \implies \pwr(\rvy\pl)=1$, we deduce for any $\delta>0$ that, if $\min_{1\leq k \leq l} C\uk\geq N_8(\delta)$, it holds with probability greater than $1-\delta$ with respect to $\Theta\pl$ that
\begin{align*}
 \pwr(\rvy\pl)=1.
\end{align*}

Now, until further notice, let us fix some $k$ and some $\Theta\pkm$ such that $\pwr_\rvx(\rvz\pkm)>0$, $\forall \rvx\in \D$. 

Using again \Lemmaref{lemma:ln_collapse3}, we get for any $1>\eta>0$ and any $\delta>0$ that, if $C\uk \geq N'''\Big(F_\phi (\gamma^2+\beta^2)\eta,\delta\Big)$, it holds with probability greater than $1-\delta$ with respect to $\theta\pk$ that
 \begin{align}
  F_\phi(\gamma^2+\beta^2) (1-\eta) & \leq \pwr_\rvx(\rvz\pk) \leq F_\phi(\gamma^2+\beta^2) (1+\eta), \quad \forall \rvx \in \D, \nonumber \\
  F_\phi(\gamma^2+\beta^2) (1-\eta) & \leq \pwr(\rvz\pk) \leq F_\phi(\gamma^2+\beta^2) (1+\eta), \label{eq:ln_collapse4_1} \\
\sqrt{\frac{1-\eta}{1+\eta}} & \leq \sqrt{\frac{\pwr(\rvz\pk)}{\pwr_\rvx(\rvz\pk)}} \leq \sqrt{\frac{1+\eta}{1-\eta}}, \quad  \forall \rvx \in \D,\nonumber \\
\Big( \sqrt{\frac{\pwr(\rvz\pk)}{\pwr_\rvx(\rvz\pk)}} - 1 \Big)^2 & \leq g(\eta)\equiv\max\Bigg( \Big(\sqrt{\frac{1-\eta}{1+\eta}}-1\Big)^2, \Big(\sqrt{\frac{1+\eta}{1-\eta}}-1\Big)^2 \Bigg), \quad \forall \rvx \in \D. \label{eq:ln_collapse4_2} 
 \end{align}

If \Eqref{eq:ln_collapse4_2} holds, then
 \begin{align*}
\hz\pk_{\alpha,c}-\rvz\pk_{\alpha,c} & = \Big(\sqrt{\frac{\pwr(\rvz\pk)}{\pwr_\rvx(\rvz\pk)}}-1\Big) \rvz\pk_{\alpha,c}, \\
\pwr_\rvx(\hz\pk-\rvz\pk) & = \Big(\sqrt{\frac{\pwr(\rvz\pk)}{\pwr_\rvx(\rvz\pk)}}-1\Big)^2 \pwr_\rvx(\rvz\pk) \leq g(\eta) \pwr_\rvx(\rvz\pk), \\
\pwr(\hz\pk-\rvz\pk) & \leq g(\eta) \pwr(\rvz\pk).
 \end{align*}

In turn, this implies that if both \Eqref{eq:ln_collapse4_1} and \Eqref{eq:ln_collapse4_2} hold, then
  \begin{align*}
|\pwr(\hz\pk)-\pwr(\rvz\pk)| & = |\pwr(\hz\pk-\rvz\pk+\rvz\pk)-\pwr(\rvz\pk)| \\
& \leq \pwr(\hz\pk-\rvz\pk) + 2\mu(|\hz\pk-\rvz\pk||\rvz\pk|) \\
& \leq \pwr(\hz\pk-\rvz\pk) + 2\sqrt{ \pwr(\hz\pk-\rvz\pk) \pwr(\rvz\pk)} \\
& \leq \Big(g(\eta)  + 2\sqrt{ g(\eta)} \Big) \pwr(\rvz\pk) \\
& \leq  \Big(g(\eta)  + 2\sqrt{ g(\eta)} \Big) F_\phi(\gamma^2+\beta^2) (1+\eta), \\
| \pwrone(\hz\pk)-\pwrone(\rvz\pk) | & = |\pwrone(\hz\pk-\rvz\pk+\rvz\pk)-\pwrone(\rvz\pk)| \\
& \leq \pwrone(\hz\pk-\rvz\pk) + 2 \E\uc\Big[ \big|\mu\uc(\hz\pk-\rvz\pk)\big| \big| \mu\uc(\rvz\pk) \big| \Big] \\
& \leq \pwr(\hz\pk-\rvz\pk) + 2 \sqrt{\pwr(\hz\pk-\rvz\pk) \pwr(\rvz\pk)} \\
& \leq \Big(g(\eta)  + 2\sqrt{ g(\eta)} \Big) \pwr(\rvz\pk) \\
& \leq \Big(g(\eta)  + 2\sqrt{ g(\eta)} \Big) F_\phi(\gamma^2+\beta^2) (1+\eta).
 \end{align*}
Since $\big(g(\eta)  + 2\sqrt{ g(\eta)} \big) F_\phi(\gamma^2+\beta^2) (1+\eta)\to0$ as $\eta\to 0$, we deduce for any $\eta>0$ and any $\delta>0$ that there exists $N_9(\eta,\delta)\in\mathbb{N}^*$ independent of $\Theta\pkm$, $k$ such that if $C\uk \geq N_9(\eta,\delta)$, it holds with probability greater than $1-\delta$ with respect to $\theta\pk$ that 
\begin{align*}
|\pwr_\rvx(\rvz\pk)-F_\phi(\gamma^2+\beta^2)| \leq \eta \quad  \forall \rvx \in \D, \qquad |\pwr(\rvz\pk)-F_\phi(\gamma^2+\beta^2)| & \leq \eta, \\
|\pwr(\hz\pk)-\pwr(\rvz\pk)| & \leq \eta, \\
| \pwrone(\hz\pk)-\pwrone(\rvz\pk) | & \leq \eta.
\end{align*}
Defining $N_{10}$ independently of $\Theta\pkm$, $k$ as $N_{10}(\eta,\delta)=\max\Big(N'''\big(\eta,\frac{\delta}{2}\big), N_9(\eta,\frac{\delta}{2})\Big)$, $\forall\eta>0$, $\forall\delta>0$, we deduce for any $\eta>0$ and any $\delta>0$ that, if $C\uk\geq N_{10}(\eta,\delta)$, it holds with probability greater than $1-\delta$ with respect to $\theta\pk$ that 
\begin{align*}
|\varrho(\rvy\pk) - \varrho(\hz\pkm)| \leq \eta, \\
|\varrho(\rvz\pk) - \rho \chi(\hz\pkm)\varrho(\hz\pkm)| \leq \eta, \\
|\pwr(\rvz\pk)-F_\phi(\gamma^2+\beta^2)| & \leq \eta, \\
|\pwr(\hz\pk)-\pwr(\rvz\pk)| & \leq \eta, \\
| \pwrone(\hz\pk)-\pwrone(\rvz\pk) | & \leq \eta.
\end{align*}
The reasoning that yielded \Eqref{eq:ln_collapse3_4} from \Eqref{eq:ln_collapse3_1}, (\ref{eq:ln_collapse3_2}), (\ref{eq:ln_collapse3_3}) can be immediately transposed by replacing $\cz\pl$ by $\rvz\pk$ and $\rvz\pl$ by $\hz\pk$.

Thus, if $C\uk \geq N_{10}(\eta,\delta)$, it holds with probability greater than $1-\delta$ with respect to $\theta\pk$ that 
\begin{align*}
|\varrho(\rvy\pk) - \varrho(\hz\pkm)| & \leq \eta, \\
|\varrho(\rvz\pk) - \rho \chi(\hz\pkm)\varrho(\hz\pkm)| & \leq \eta, \\
|\varrho(\hz\pk) - \varrho(\rvz\pk)| & \leq \frac{35\eta}{F_\phi(\gamma^2+\beta^2)}, \\
|\varrho(\hz\pk) - \rho \chi(\hz\pkm)\varrho(\hz\pkm)| & \leq \eta + \frac{35\eta}{F_\phi(\gamma^2+\beta^2)} = \frac{F_\phi(\gamma^2+\beta^2) + 35}{F_\phi(\gamma^2+\beta^2)} \eta.
\end{align*}

Defining $N_{11}$ independently of $\Theta\pkm$, $k$ as $N_{11}(\eta,\delta)=N_{10}\Big(\frac{F_\phi(\gamma^2+\beta^2)}{F_\phi(\gamma^2+\beta^2) + 35}\eta,\delta\Big)$, $\forall\eta>0$, $\forall\delta>0$, we deduce for any $\eta>0$ and any $\delta>0$ that, if $C\uk\geq N_{11}(\eta,\delta)$, it holds that 
 \begin{align*}
\P_{\theta\pk} \Big[ |\varrho(\rvy\pk) - \varrho(\hz\pkm)| \leq \eta \Big] \geq 
\P_{\theta\pk} \Bigg[ |\varrho(\rvy\pk) - \varrho(\hz\pkm)| \leq \frac{F_\phi(\gamma^2+\beta^2)}{F_\phi(\gamma^2+\beta^2) + 35}\eta \Bigg] & \geq 1-\delta, \\
\P_{\theta\pk} \Big[ |\varrho(\hz\pk) - \rho \chi(\hz\pkm) \varrho(\hz\pkm)| \leq \eta \Big] & \geq 1-\delta.
\end{align*}

Considering again $k$ and $\Theta\pkm$ as not fixed, we deduce for any $k$ that, if $C\uk\geq N_{11}(\eta,\delta)$, it holds that 
\begin{align*}
\P_{\Theta\pk | A\pkm} \Big[ |\varrho(\rvy\pk) - \varrho(\hz\pkm)| \leq \eta \Big] 
& \geq 1-\delta, \\
 \P_{\Theta\pk | A\pkm} \Big[ |\varrho(\hz\pk) - \rho \chi(\hz\pkm) \varrho(\hz\pkm)| \leq \eta \Big] 
& \geq 1-\delta.
\end{align*}

Defining $N_{12}$ independently of $\Theta\pk$, $k$ as $N_{12}(\eta,\delta)=\max\Big(N_{11}\big(\eta,\frac{\delta}{2}\big), N_8\big(\frac{\delta}{2}\big)\Big)$, $\forall\eta>0$, $\forall\delta>0$, we deduce for any $k$, any $\eta>0$ and any $\delta>0$ that, if $C\uk\geq N_{12}(\eta,\delta)$, it holds that 
\begin{align*}
\P_{\Theta\pk} \Big[ |\varrho(\rvy\pk) - \varrho(\hz\pkm)| \leq \eta \Big] 
& \geq \P_{\Theta\pk| A\pkm} \Big[ |\varrho(\rvy\pk) - \varrho(\hz\pkm)| \leq \eta \Big] \P_{\Theta\pkm} \big[ A\pkm \big] \\
& \geq \P_{\Theta\pk| A\pkm} \Big[ |\varrho(\rvy\pk) - \varrho(\hz\pkm)| \leq \eta \Big] \P_{\Theta\pl} \big[ A\pl \big] \\
& \geq \Big(1-\frac{\delta}{2}\Big) \Big(1-\frac{\delta}{2}\Big) \geq 1-\delta, \\
\P_{\Theta\pk} \Big[ |\varrho(\hz\pk) - \rho \chi(\hz\pkm) \varrho(\hz\pkm)| \leq \eta \Big] 
& \geq \P_{\Theta\pk| A\pkm} \Big[ |\varrho(\hz\pk) - \rho \chi(\hz\pkm) \varrho(\hz\pkm)| \leq \eta \Big] \P_{\Theta\pl} \big[ A\pl \big] \\
& \geq \Big(1-\frac{\delta}{2}\Big) \Big(1-\frac{\delta}{2}\Big) \geq 1-\delta.
\end{align*}

Thus, for any $\eta >0$ and any $\delta >0$, if $\min_{1\leq k \leq l} C\uk \geq N_{12}(\eta,\delta)$, it holds with probability greater than $1-l\delta$ with respect to $\Theta\pl$ that 
\begin{align}
|\varrho(\hz\pk) - \rho \chi(\hz\pkm)\varrho(\hz\pkm)| & \leq \eta,  \quad \forall k\leq l-1, \label{eq:ln_collapse4_3} \\
|\varrho(\rvy\pl) - \varrho(\hz\plm)| & \leq \eta. \label{eq:ln_collapse4_4}
\end{align}

Given $\chi(\hz\pkm)\leq1$, $\forall k$ and given $\varrho(\rvz^0)=\varrho(\rvx)\leq1$, we note that if \Eqref{eq:ln_collapse4_3} and \Eqref{eq:ln_collapse4_4} hold, then
\begin{align}
\varrho(\hz^1) & \leq \rho \chi(\rvz^0) \varrho(\rvz^0) + \eta  \leq \rho + \eta, \nonumber \\
\varrho(\hz^2) & \leq \rho \chi(\hz^1) \varrho(\hz^1) + \eta \leq \rho^2 + \rho \eta + \eta, \nonumber \\
 &\vdots  \nonumber \\
\varrho(\hz\plm) & \leq \rho \chi(\hz^{l-2}) \varrho(\hz^{l-2}) + \eta \leq \rho\plm + \Big(\sum^{l-2}_{k=0} \rho\pk \Big) \eta \leq \rho\plm + \frac{1}{1-\rho} \eta, \nonumber \\
\varrho(\rvy\pl) & \leq \varrho(\hz\plm) + \eta \leq \rho\plm + \frac{1}{1-\rho} \eta + \eta \leq \rho\plm + \Big(\frac{2-\rho}{1-\rho}\Big) \eta. \label{eq:ln_collapse4_5}
\end{align}

Defining $N_{13}$ such that $N_{13}(\eta,\delta)=N_{12}\Big(\frac{1-\rho}{2-\rho}\eta,\frac{1}{l}\delta\Big)$, $\forall \eta>0$, $\forall \delta>0$, we deduce for any $\eta>0$ and any $\delta>0$ that, if $\min_{1\leq k \leq l} C\uk\geq N_{13}(\eta,\delta)$, it holds with probability greater than $1-\delta$ with respect to $\Theta\pl$ that 
\begin{align*}
\varrho(\rvy\pl) & \leq \rho\plm + \eta. 
\end{align*}

Finally, let us define $N$ such that $N(\eta,\delta)=\max\Big(N_8(\frac{\delta}{2}),N_{13}(\eta,\frac{\delta}{2})\Big)$, $\forall\eta>0$, $\forall\delta>0$. Then for any $\eta>0$ and any $\delta>0$, if $\min_{1\leq k \leq l} C\uk \geq N(\eta,\delta)$, it holds with probability greater than $1-\delta$ with respect to $\Theta\pl$ that 
\begin{align*}
\varrho(\rvy\pl) = \pwr(\rvy\pl) - \pwrone(\rvy\pl) & \leq \rho\plm + \eta, &
 \pwr(\rvy\pl) & = 1. \tag*{\qed}
\end{align*}
\end{proof}

\subsection{Case $\phi=\identity$}
\label{sec:proof_ln_collapse_identity}

\begin{Proposition}
Fix a layer $l\geq 1$ and $\nu_{\vomega}$, $\nu_{\vbeta}$, $\nu_{\vgamma}$, $\D$ in \Defref{def:random}, with $\D$ \say{centered} such that $\pwrone(\rvz\po)=\pwrone(\rvx)=0$. Further suppose $\Norm=\LN$, $\phi=\identity$, and suppose that the convolution of \Eqref{eq:propagation1} uses periodic boundary conditions.

Then for any $\eta>0$ and any $\delta>0$, there exists $N(\eta,\delta)\in \mathbb{N}^*$ such that if $\min_{1\leq k \leq l} C\uk\geq N(\eta,\delta)$, it holds for random nets of \Defref{def:random} with probability greater than $1-\delta$ with respect to $\Theta\pl$ that 
 \begin{align*}
  |\pwr(\rvy\pl) - \pwrone(\rvy\pl) - \rho\plm | & \leq \eta, &  \pwr(\rvy\pl) & =1,
 \end{align*}
where $\rho \equiv \frac{\gamma^2}{\gamma^2 + \beta^2} <1$.
\end{Proposition}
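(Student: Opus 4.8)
The plan is to re-run the argument of \Thmref{thm:ln_collapse} while exploiting the two features special to the present setting: $\phi=\identity$ and the centered dataset. First I would record the two structural simplifications. Since $\phi=\identity$ gives $F_\phi=\frac{\phi(1)^2+\phi(-1)^2}{2}=1$ and, by \Lemmaref{lemma:ln_collapse1} and \Lemmaref{lemma:ln_collapse3}, $\chi(\hz\pkm)=1$ for every $k$, the per-layer estimate that drives the proof of \Thmref{thm:ln_collapse} upgrades from a one-sided bound to a two-sided one: for $C\uk$ large enough it holds with high probability that $|\varrho(\hz\pk)-\rho\,\varrho(\hz\pkm)|\le\eta$ for all $k\le l-1$, together with $|\varrho(\rvy\pl)-\varrho(\hz\plm)|\le\eta$. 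Second, the centering hypothesis $\pwrone(\rvx)=0$ forces $\varrho(\rvx)=\frac{\pwr(\rvx)-\pwrone(\rvx)}{\pwr(\rvx)}=1$ exactly, which pins the base of the recursion at the top value $1$ rather than merely at $\le 1$.

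With these in hand I would telescope the recursion in both directions. The upper direction is identical to \Eqref{eq:ln_collapse4_5}: unfolding $\varrho(\hz\pk)\le\rho\,\varrho(\hz\pkm)+\eta$ from $\varrho(\rvx)=1$ yields $\varrho(\hz\plm)\le\rho\plm+\frac{1}{1-\rho}\eta$ and hence $\varrho(\rvy\pl)\le\rho\plm+\frac{2-\rho}{1-\rho}\eta$. The new ingredient is the matching lower bound: unfolding $\varrho(\hz\pk)\ge\rho\,\varrho(\hz\pkm)-\eta$ from the exact base $\varrho(\rvx)=1$ gives $\varrho(\hz\plm)\ge\rho\plm-\eta\sum_{k=0}^{l-2}\rho\pk\ge\rho\plm-\frac{1}{1-\rho}\eta$, so $\varrho(\rvy\pl)\ge\rho\plm-\frac{2-\rho}{1-\rho}\eta$. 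Combining the two directions, $|\varrho(\rvy\pl)-\rho\plm|\le\frac{2-\rho}{1-\rho}\eta$, and rescaling $\eta$ by $\frac{1-\rho}{2-\rho}$ absorbs the constant. The claim $\pwr(\rvy\pl)=1$ is inherited verbatim from the event $A\pl$ argument in the proof of \Thmref{thm:ln_collapse}, which does not use $\phi=\identity$; on that event $\pwr(\rvy\pl)=1$ makes $\pwr(\rvy\pl)-\pwrone(\rvy\pl)=\varrho(\rvy\pl)$, delivering $|\pwr(\rvy\pl)-\pwrone(\rvy\pl)-\rho\plm|\le\eta$. Finally I would intersect the finitely many high-probability events (one per layer, plus $A\pl$) and choose $N(\eta,\delta)$ as the appropriate maximum of the thresholds supplied by \Lemmaref{lemma:ln_collapse3}, splitting $\delta$ across layers exactly as in the original proof.

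The main obstacle I expect is making the lower bound genuinely tight at the base of the recursion. The upper bound only ever needed $\chi\le1$ and $\varrho\le1$, so the strict contraction there could be crude; for equality one must verify that the convolution-plus-normalization step neither creates nor destroys $\varrho$ except through the factor $\rho\chi$, i.e. that the exact expectation identities behind \Lemmaref{lemma:ln_collapse1} (the counterparts of \Eqref{eq:ln_collapse1_13} and \Eqref{eq:ln_collapse1_22}, which become equalities precisely when $\chi=1$) really propagate $\varrho(\hz\pkm)$ to $\rho\,\varrho(\hz\pkm)$ with no systematic loss. A related delicate point is the treatment of the input tensor $\hz\po$: one should check that centering indeed sets its $\varrho$ to $1$ (equivalently that the per-instance rescaling in the definition of $\hz\pl$ does not reintroduce a nonzero $\pwrone$ at layer $0$), since any irreducible gap there would survive the telescoping and break the claimed equality. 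Once that base case is secured, the remainder is a routine two-sided repeat of the \Thmref{thm:ln_collapse} bookkeeping.
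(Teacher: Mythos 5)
Your proposal is correct and follows essentially the same route as the paper's own proof: it invokes the $\chi(\hz\pkm)=1$ equality case of \Lemmaref{lemma:ln_collapse1}/\Lemmaref{lemma:ln_collapse3} to upgrade the per-layer contraction to a two-sided estimate, uses the centering hypothesis to pin the base of the recursion at $\varrho(\rvx)=1$, telescopes in both directions to get $|\varrho(\rvy\pl)-\rho\plm|\le\frac{2-\rho}{1-\rho}\eta$, and inherits $\pwr(\rvy\pl)=1$ from the event $A\pl$ of \Thmref{thm:ln_collapse}. The two ``delicate points'' you flag are already discharged by the lemmas as stated (the inequalities behind $\chi$ become equalities for $\phi=\identity$, and the paper takes the recursion base to be $\varrho(\rvz\po)=\varrho(\rvx)$, which the centering assumption together with $\pwr_\rvx(\rvx)>0$ sets to exactly $1$).
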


\begin{proof}\textbf{.}
Since $\phi=\identity$ is a particular case of positive homogeneous activation function, the whole proof of \Thmref{thm:ln_collapse} still applies. Let us then define $N_8$, $N_{12}$ as in the proof of \Thmref{thm:ln_collapse}.

Then for any $\delta>0$, if $\min_{1\leq k \leq l} C\uk\geq N_8(\delta)$, it holds with probability greater than $1-\delta$ with respect to $\Theta\pl$ that
\begin{align*}
\pwr(\rvy\pl)=1.
\end{align*}

In addition, for any $\eta>0$ and any $\delta >0$, if $\min_{1\leq k \leq l} C\uk \geq N_{12}(\eta,\delta)$, it holds with probability greater than $1-l\delta$ with respect to $\Theta\pl$ that 
\begin{align*}
|\varrho(\hz\pk) - \rho \chi(\hz\pkm)\varrho(\hz\pkm)|=|\varrho(\hz\pk) - \rho \varrho(\hz\pkm)| & \leq \eta,  \quad \forall k\leq l-1, \\
|\varrho(\rvy\pl) - \varrho(\hz\plm)| & \leq \eta,
\end{align*}
where we used the fact that $\chi(\hz\pkm)=1$, $\forall k$ when $\phi=\identity$.

Next we note that: (i) the assumptions that $\pwrone(\rvz\po)=0$ and $\pwr_\rvx(\rvz\po)>0$, $\forall\rvx$ (cf \Defref{def:random}) together imply $\varrho(\rvz^0)=\varrho(\rvx)=1$; (ii) the reasoning yielding \Eqref{eq:ln_collapse4_5} from \Eqref{eq:ln_collapse4_3}, (\ref{eq:ln_collapse4_4}) still applies. 

We deduce that, if $\min_{1\leq k \leq l} C\uk \geq N_{12}(\eta,\delta)$, it holds with probability greater than $1-l\delta$ with respect to $\Theta\pl$ that 
\begin{align}
\varrho(\hz^1) & \geq \rho \varrho(\rvx) - \eta = \rho - \eta, \nonumber  \\
\varrho(\hz^2) & \geq \rho \varrho(\hz^1) - \eta \geq \rho^2 - \rho \eta - \eta, \nonumber  \\
 &\vdots \nonumber \\
\varrho(\hz\plm) & \geq \rho \varrho(\hz^{l-2}) - \eta \geq \rho\plm - \Big(\sum^{l-2}_{k=0} \rho\pk \Big) \eta \geq \rho\plm - \frac{1}{1-\rho} \eta, \nonumber  \\
\varrho(\rvy\pl) & \geq \varrho(\hz\plm) - \eta \geq \rho\plm - \frac{1}{1-\rho} \eta - \eta \geq \rho\plm - \Big(\frac{2-\rho}{1-\rho}\Big) \eta, \nonumber \\
\varrho(\rvy\pl) & \leq \rho\plm + \Big(\frac{2-\rho}{1-\rho}\Big) \eta, \label{eq:ln_collapse5_1} \\
|\varrho(\rvy\pl) - \rho\plm| & \leq \Big(\frac{2-\rho}{1-\rho}\Big) \eta, \nonumber 
\end{align}
where \Eqref{eq:ln_collapse5_1} follows from \Eqref{eq:ln_collapse4_5}.

As in the proof of \Thmref{thm:ln_collapse}, defining $N_{13}$ such that $N_{13}(\eta,\delta)=N_{12}\Big(\frac{1-\rho}{2-\rho}\eta,\frac{1}{l}\delta\Big)$, we deduce for any $\eta>0$ and any $\delta>0$ that, if $\min_{1\leq k \leq l} C\uk\geq N_{13}(\eta,\delta)$, it holds with probability greater than $1-\delta$ with respect to $\Theta\pl$ that 
\begin{align*}
|\varrho(\rvy\pl) - \rho\plm| \leq \eta. 
\end{align*}

As in the proof of \Thmref{thm:ln_collapse}, defining $N$ such that $N(\eta,\delta)=\max\Big(N_8(\frac{\delta}{2}),N_{13}(\eta,\frac{\delta}{2})\Big)$, $\forall\eta>0$, $\forall\delta>0$, we deduce for any $\eta>0$ and any $\delta>0$ that, if $\min_{1\leq k \leq l} C\uk \geq N(\eta,\delta)$, it holds with probability greater than $1-\delta$ with respect to $\Theta\pl$ that 
\begin{align*}
|\varrho(\rvy\pl)-\rho\plm| = |\pwr(\rvy\pl) - \pwrone(\rvy\pl)-\rho\plm| & \leq \eta, &
 \pwr(\rvy\pl) & = 1. \tag*{\qed}
\end{align*}
\end{proof}

\section{Proof of \Thmref{thm:in_variability}}
\label{sec:in_variability}

\textbf{\Thmref{thm:in_variability}} \textbf{.}\emph{
Fix a layer $l\in\{1,\dots,L\}$ and lift any assumptions on $\phi$. Further suppose $\Norm=\IN$, with \Eqref{eq:propagation2} having nonzero denominator at layer $l$ for all inputs and channels.}

\emph{Then it holds that 
\begin{itemize}
\item $\rvy\pl$ is normalized in each channel $c$ with 
 \begin{align*}
\pwrone\uc(\rvy\pl) & = 0, &  \quad
\pwr\uc(\rvy\pl) & =1; 
 \end{align*}
\item $\rvy\pl$ lacks variability in instance statistics in each channel $c$ with
 \begin{align*}
 \pwrtwo\uc(\rvy\pl)  & = 0, & \quad
 \pwrthree\uc(\rvy\pl) & = 1, & \quad
\pwrfour\uc(\rvy\pl) & = 0.
 \end{align*}
\end{itemize}
}

\begin{proof}\textbf{.} With $\Norm=\IN$, if $\forall \rvx,c$: $\sigma{}_{\rvx,c}(\rvx\pl)>0$, then instance statistics are given by
 \begin{align*}
\mu_{\rvx,c}(\rvy\pl) & = \E_{\alpha}\big[\rvy\pl_{\alpha,c}\big] = \frac{\E_{\alpha}\big[\rvx\pl_{\alpha,c}\big] - \mu_{\rvx,c}(\rvx\pl)}{\sigma{}_{\rvx,c}(\rvx\pl)}
= \frac{\mu_{\rvx,c}(\rvx\pl) - \mu_{\rvx,c}(\rvx\pl)}{\sigma{}_{\rvx,c}(\rvx\pl)} = 0, \\
\pwr_{\rvx,c}(\rvy\pl) & = \E_{\alpha}\Big[(\rvy\pl_{\alpha,c})^2 \Big] = \frac{\E_{\alpha}\Big[\big(\rvx\pl_{\alpha,c} - \mu_{\rvx,c}(\rvx\pl)\big)^2 \Big]}{\sigma_{\rvx,c}(\rvx\pl)^2}
= \frac{\sigma_{\rvx,c}(\rvx\pl)^2}{\sigma_{\rvx,c}(\rvx\pl)^2} = 1, \\
\sigma_{\rvx,c}(\rvy\pl) & = \sqrt{\pwr_{\rvx,c}(\rvy\pl) - \mu_{\rvx,c}(\rvy\pl)^2} = 1.
 \end{align*}

In turn, this implies for the different power terms:
 \begin{align*}
\pwr\uc(\rvy\pl) & = \E_{\rvx} \Big[ \pwr_{\rvx,c}(\rvy\pl) \Big] = \E_{\rvx} \big[ 1 \big] =  1,\\
\pwrone\uc(\rvy\pl) & = \E_{\rvx} \Big[ \mu_{\rvx,c}(\rvy\pl) \Big]^2  = \E_{\rvx} \big[ 0 \big]^2 = 0, & \;
\pwrtwo\uc(\rvy\pl) & = \Var_{\rvx} \Big[ \mu_{\rvx,c}(\rvy\pl) \Big] = \Var_{\rvx} \big[ 0 \big] = 0, \\
\pwrthree\uc(\rvy\pl) & = \E_{\rvx} \Big[ \sigma_{\rvx,c}(\rvy\pl) \Big]^2 = \E_{\rvx} \big[ 1 \big]^2 = 1, & \;
\pwrfour\uc(\rvy\pl) & = \Var_{\rvx} \Big[ \sigma_{\rvx,c}(\rvy\pl) \Big] = \Var_{\rvx} \big[ 1 \big] = 0.\tag*{\qed}
 \end{align*}
\end{proof}
\section{Proof of \Thmref{thm:pn_iterative}}
\label{sec:pn_iterative}

\textbf{\Thmref{thm:pn_iterative}} \textbf{.}\emph{
Fix a layer $l\in\{1,\dots,L\}$ and lift any assumptions on $\phi$ and $\rvx$'s distribution. Further suppose that the neural network implements \Eqref{eq:propagation1}, (\ref{eq:propagation2}), (\ref{eq:propagation4}) at every layer up to depth $L$, with $\epsilon=0$ and \Eqref{eq:propagation2}, (\ref{eq:propagation4}) having nonzero denominators for all layers, inputs and channels.
}

\emph{Finally suppose that
\begin{itemize}
\item The proxy-normalized post-activations $\tz\plm$ at layer $l-1$ are normalized in each channel $c$:
 \begin{align*}
\pwrone\uc(\tz\plm) & = 0, &  \quad
\pwr\uc(\tz\plm) & =1; 
 \end{align*}
\item The convolution and normalization steps at layer $l$ do not cause any aggravation of channel-wise collapse and channel imbalance, i.e. $\forall c,c'$:
 \begin{align*}
\pwrone\uc(\rvy\pl) & = 0, &  \quad
\pwr\uc(\rvy\pl) & = \pwr_{c'}(\rvy\pl); 
 \end{align*}
\item The pre-activations $\rvy\pl$ at layer $l$ are Gaussian in each channel $c$ and PN's additional parameters $\tilde{\vbeta}\pl$, $\tilde{\vgamma}\pl$ are zero.
\end{itemize}
}
\emph{Then both the pre-activations $\rvy\pl$ and the proxy-normalized post-activations $\tz\pl$ at layer $l$ are normalized in each channel $c$:
\begin{align}
\pwrone\uc(\rvy\pl) & = 0, &  \quad
\pwr\uc(\rvy\pl) & = 1, \label{eq:pn_iterative1} \\
\pwrone\uc(\tz\pl) & = 0, &  \quad
\pwr\uc(\tz\pl) & = 1. \label{eq:pn_iterative2}
 \end{align}
}

\begin{proof}\textbf{ of \Eqref{eq:pn_iterative1}.} 
If the denominator of \Eqref{eq:propagation2} is nonzero for all layers, inputs and channels, then it follows from \Propref{prop:powers_one} that $\pwr(\rvy\pl)=\E\uc \big[ \pwr\uc(\rvy\pl) \big]=1$. 

Combined with $\pwrone\uc(\rvy\pl)=0$, $\forall c$ and $\pwr\uc(\rvy\pl)=\pwr_{c'}(\rvy\pl)$, $\forall c,c'$, we deduce that in each channel $c$:
\begin{align*}
\pwrone\uc(\rvy\pl) & = 0, &  \quad
\pwr\uc(\rvy\pl) & = 1. \tag*{\qed}
 \end{align*}
\end{proof}

\begin{proof}\textbf{ of \Eqref{eq:pn_iterative2}.} 
Given \Eqref{eq:pn_iterative1} and given that $\rvy\pl$ is Gaussian in each channel $c$, it holds $\forall c$:
 \begin{align*}
\rvy\pl_{\alpha,c} \underset{\rvx,\alpha}{\sim} \mathcal{N}\big(0,1\big).
 \end{align*}

We deduce that the distribution of $\rvz\pl_{\alpha,c} = \phi\big(\vgamma\pl\uc \rvy\pl_{\alpha,c} + \vbeta\pl\uc \big)$ with respect to $(\rvx,\alpha)$ and the distribution~of $\phi\big(\vgamma\pl\uc Y\pl\uc + \vbeta\pl\uc\big)$ with respect to $Y\pl\uc \sim \mathcal{N}\big(\tilde{\vbeta}\pl\uc, (1+\tilde{\vgamma}\pl\uc)^2\big)=\mathcal{N}\big(0,1\big)$ are equal.

We then get from \Eqref{eq:propagation4} at layer $l$ that
\begin{align*}
\tz\pl_{\alpha,c} & 
= \frac{\rvz\pl_{\alpha,c}  - \E_{Y\pl\uc}\big[\phi\big(\vgamma\pl\uc Y\pl\uc + \vbeta\pl\uc)\big]}{\sqrt{\Var_{Y\pl\uc}\big[\phi\big(\vgamma\pl\uc Y\pl\uc + \vbeta\pl\uc)\big]+\epsilon}} 
= \frac{\rvz\pl_{\alpha,c} - \E_{\rvx,\alpha}\big[ \rvz\pl_{\alpha,c} \big] }{\sqrt{\Var_{\rvx,\alpha}\big[ \rvz\pl_{\alpha,c} \big]}} 
 = \frac{\rvz\pl_{\alpha,c} - \mu\uc(\rvz\pl) }{\sigma\uc(\rvz\pl)},
\end{align*}
where we used $\epsilon=0$.

We deduce that $\mu\uc(\tz\pl)=0$ and $\pwr\uc(\tz\pl)=1$ in each channel $c$, implying that in each channel $c$:
\begin{align*}
\pwrone\uc(\tz\pl) & = \mu\uc(\tz\pl)^2 = 0, &  \quad
\pwr\uc(\tz\pl) & = 1. \tag*{\qed}
 \end{align*}
\end{proof}

\end{document}